\documentclass{article}
\usepackage{makecell}
\usepackage{bbm}
\usepackage{epsfig}
\usepackage{latexsym}
\usepackage{setspace}
\usepackage{flushend}
\usepackage{authblk}
\usepackage{float}
\usepackage{dsfont}
\usepackage{comment}
\usepackage{ upgreek }
\usepackage{caption}
\usepackage{stmaryrd}
\usepackage{xspace}
\usepackage{url}
\usepackage{amsmath,amssymb,amsfonts,amsthm,color,xspace}
\usepackage{subfigure,graphicx,tikz,enumitem}

\usepackage[utf8]{inputenc}
\usepackage[utf8]{inputenc} 
\usepackage[T1]{fontenc}    
\usepackage{url}            
\usepackage{booktabs}       
\usepackage{amsfonts}       
\usepackage{nicefrac}       
\usepackage{microtype}      

\newcommand{\ba}{\begin{align}}
\newcommand{\ea}{\end{align}}
\newcommand{\ban}{\begin{align*}}
\newcommand{\ean}{\end{align*}}
\newcommand{\lbl}[1]{\label{}}

\newcommand{\br}{\begin{remark}}
\newcommand{\er}{\end{remark}}



\makeatletter
\newtheorem*{rep@theorem}{\rep@title}
\newcommand{\newreptheorem}[2]{%
\newenvironment{rep#1}[1]{%
 \def\rep@title{#2 \ref{##1}}%
 \begin{rep@theorem}}%
 {\end{rep@theorem}}}
\makeatother

\newtheorem{theorem}{Theorem}

\newtheorem{corollary}[theorem]{Corollary}
\newtheorem*{corollary*}{Corollary}

\newtheorem*{observation*}{Observation}

\newtheorem{lemma}[theorem]{Lemma}
\newtheorem*{lemma*}{Lemma}

\newreptheorem{theorem}{Theorem}
\newreptheorem{lemma}{Lemma}

\theoremstyle{remark}
\newtheorem{remark}{Remark}
\newtheorem*{remark*}{Remark}
\newtheorem*{remarks*}{Remarks}

\theoremstyle{definition}


\renewcommand{\qedsymbol}{\ensuremath{\blacksquare}}

\def\undertilde#1{\mathord{\vtop{\ialign{##\crcr
$\hfil\displaystyle{#1}\hfil$\crcr\noalign{\kern1.5pt\nointerlineskip}
$\hfil\tilde{}\hfil$\crcr\noalign{\kern1.5pt}}}}}







\allowdisplaybreaks
\setlength{\belowcaptionskip}{-0.6cm}





\newcommand{\E}{\mathds{E}}

\newcommand{\mtrobs}{{X}}
\newcommand{\mtrobsele}{{\mtrobs}_{i,j}}
\newcommand{\difmat}{{\mtrobs-\mtrmn}}

\newcommand{\ri}{r_i}
\newcommand{\cj}{c_j}
\newcommand{\eri}{\mathds{E}r_i}
\newcommand{\ecj}{\mathds{E}c_j}

\newcommand{\lambdaij}{\lambda_{i,j}}


\newcommand{\mtrestest}{M^{\text{est}}}


\newcommand{\gnrmtr}{A}
\newcommand{\mtrmn}{M}
\newcommand{\mtrmnij}{\mtrmn_{i,j}}

\newcommand{\mtrnew}{\widehat{M}}

\newcommand{\navg}{{n}_{\text{avg}}}

\newcommand{\noisemtralg}{N}
\newcommand{\noisemtrspt}{\noisemtralg}

\newcommand{\wcon}{W_{\scalebox{0.7}{\textnormal{cn}}}}

\newcommand{\wgen}{\tilde{w}_{\specpara}}
\newcommand{\wgenX}{\bar{w}_{\specpara}}

\newcommand{\nf}{D^{-\frac{1}{2}}(\nfx)}
\newcommand{\nb}{D^{-\frac{1}{2}}(\nbx)}

\newcommand{\nfem}{\nfm(i)}
\newcommand{\nbem}{\nbm(j)}

\newcommand{\nfex}{\nfx(i)}
\newcommand{\nbex}{\nbx(j)}

\newcommand{\nfm}{\wgen^\front}
\newcommand{\nbm}{\wgen^\back}

\newcommand{\nfx}{\wgenX^\front}
\newcommand{\nbx}{\wgenX^\back}

\newcommand{\swgen}{\tilde{w}}
\newcommand{\swgenX}{\bar{w}}

\newcommand{\snf}{D^{-\frac{1}{2}}(\snfx)}
\newcommand{\snb}{D^{-\frac{1}{2}}(\snbx)}

\newcommand{\snfex}{\snfx(i)}
\newcommand{\snbex}{\snbx(j)}

\newcommand{\snfx}{\swgenX^\front}
\newcommand{\snbx}{\swgenX^\back}

\newcommand{\front}{f}
\newcommand{\back}{b}


\newcommand{\elenoisemtrspt}{\noisemtrspt_{i,j}}
\newcommand{\specpara}{\Lambda}
\newcommand{\specparaN}{\specpara}

\newcommand{\gnrmtrotr}{B}
\newcommand{\gnrmtrotrtrSVD}{\gnrmtrotr^{\text{SVD}}_r}


\newcommand{\mtrrng}[3]{#1^{#2\times#3}}
\newcommand{\Rkm}{\mtrrng\reals km} 
\newcommand{\Rkk}{\mtrrng\reals kk}

\newcommand{\Rkkr}{\mtrrng{\reals_r}kk}

\newcommand{\Iell}{I}
\newcommand{\Iellbad}{\Iell_{\scalebox{0.7}\bad}}
\newcommand{\Ielldel}{\Iell_{\scalebox{0.7}\del}}
\newcommand{\Ielldelold}{\Iell_{\scalebox{0.7}\del}^{\textnormal{old}}} 
\newcommand{\Iellbd}{(\Iellbad \cup \Ielldel)}
\newcommand{\Iellstr}{\Iell_{\scalebox{0.7}{\textnormal{hv}}}}

\newcommand{\comp}{^{\scalebox{0.7}{\textnormal{c}}}}

\DeclareMathOperator{\textsum}{\textstyle{\sum}}

\newcommand{\norm}[2]{||#2||_{#1}}

\newcommand{\rank}{\textnormal{rank}}

\newcommand{\bad}{\textnormal{cn}}
\newcommand{\del}{\textnormal{zr}}

\newcommand{\Poi}{\mathrm{Poi}}
\newcommand{\Ber}{\mathrm{Ber}}
\newcommand{\Bin}{\mathrm{Bin}}


\newcommand{\whp}{w.h.p.\xspace}


\newtheorem{Theorem*}{Theorem}

\newtheorem{Claim*}[Theorem]{Claim}

\newtheorem{CounterExample*}{$\overline{\hbox{\bf Example}}$}

\newtheorem{Example*}[Theorem]{Example}

\newtheorem{Intuition*}[Theorem]{Intuition}
\newtheorem{Joke*}[Theorem]{Joke}

\newtheorem{Lemma*}[Theorem]{Lemma}
\newtheorem{Open problem}[Theorem]{Open problem}

\newtheorem{Question*}[Theorem]{Question}










\newcommand{\ignore}[1]{}

%

%

%

%

%

%

%

%
%

%

\newcommand{\RR}{\mathbb{R}}


\def \ba     {{\bf a}}



\newcommand{\cM}{{\mathcal M}}

\newcommand{\cO}{{\mathcal O}}


\newcommand{\reals}{\RR}
%






\newcommand{\eg}{\textit{e.g.,}\xspace}


\definecolor{light}{gray}{.75}




\def \upto  {{,}\ldots{,}}









\def \paren#1{{({#1})}}
\def \Paren#1{{\left({#1}\right)}}


















\def\ignore#1{}








\newcommand{\bi}{\begin{itemize}}
\newcommand{\ei}{\end{itemize}}



\def\orpro{\mathop{\mathchoice
   {\vee\kern-.49em\raise.7ex\hbox{$\cdot$}\kern.4em}
   {\vee\kern-.45em\raise.63ex\hbox{$\cdot$}\kern.2em}
   {\vee\kern-.4em\raise.3ex\hbox{$\cdot$}\kern.1em}
   {\vee\kern-.35em\raise2.2ex\hbox{$\cdot$}\kern.1em}}\limits}

\def\andpro{\mathop{\mathchoice
 {\wedge\kern-.46em\lower.69ex\hbox{$\cdot$}\kern.3em}
 {\wedge\kern-.46em\lower.58ex\hbox{$\cdot$}\kern.25em}
 {\wedge\kern-.38em\lower.5ex\hbox{$\cdot$}\kern.1em}
 {\wedge\kern-.3em\lower.5ex\hbox{$\cdot$}\kern.1em}}\limits}

\def\simge{\mathrel{%
   \rlap{\raise 0.511ex \hbox{$>$}}{\lower 0.511ex \hbox{$\sim$}}}}

\def\simle{\mathrel{
   \rlap{\raise 0.511ex \hbox{$<$}}{\lower 0.511ex \hbox{$\sim$}}}}


\usepackage{microtype}
\usepackage{graphicx}
\usepackage{subfigure}
\usepackage{booktabs} 

\usepackage{hyperref}




\hypersetup{colorlinks,linkcolor={blue},citecolor={blue},urlcolor={red}} 

\usepackage[preprint,nonatbib]{neurips_2020}
\usepackage{selectp}

\usepackage{algorithm}
\usepackage{algorithmic}

\setlist[enumerate]{leftmargin=5.5mm}

\title{Linear-Sample Learning of Low-Rank Distributions}

\author{%
  Ayush Jain and Alon Orlitsky\\
  Dept. of Electrical and Computer Engineering\\
  University of California, San Diego\\
  \texttt{ayjain@eng.ucsd.edu, alon@ucsd.edu} 
}

\begin{document}
\maketitle
\begin{abstract}

Many latent-variable applications, including community detection, collaborative filtering, genomic analysis, and NLP, model data as generated by low-rank matrices. Yet despite considerable research, except for very special cases, the number of samples required to efficiently recover the underlying matrices has not been known. 

We determine the onset of learning in several common latent-variable settings. For all of them, we show that learning $k\hskip-.09em\times\hskip-.09em k$, rank-$r$, matrices to 
normalized $L_{\hskip-.02em1}$
distance~$\epsilon$ requires  $\Omega(\frac{kr}{\epsilon^2})$ samples, and propose an algorithm that uses ${\cal O}(\frac{kr}{\epsilon^2}\log^2\frac r\epsilon)$ samples, a number linear in the high dimension, and
nearly
linear in the, typically low,~rank.

The algorithm improves on existing
spectral techniques
and
runs in polynomial time.
The proofs establish 
new results on the rapid convergence of the spectral distance between the model and observation matrices, and may be of independent interest.\looseness-1

\end{abstract}

\section{Introduction}
\subsection{Motivation}
A great many
scientific and technological applications concern relations between two objects that range over large domains, yet are linked via a low-dimensional \emph{latent} space. Often this relation is precisely, or nearly, linear, hence can be modeled by a low-rank matrix. 
The problem of recovering low-rank model matrices from observations they generate has therefore been studied extensively. 
Following are five of the most common settings considered, each introduced via a typical application.

\textbf{Distribution matrices\quad}
In \emph{Probabilistic latent semantic analysis} samples are co-occurrences $(w,d)$ of words and documents, assumed independent given one of $r$ latent topic classes $t$~\cite{hofmann1999probabilistic}.
The joint probability matrix $[p_{w,d}]$ is therefore a mixture of at most $r$ product matrices $p(d|t)\cdot p(w|t)$, and has rank $\le r$. This setting also arises in many hidden Markov applications, e.g.,~\cite{mossel2005learning,hsu2012spectral}.

For this setting, the model matrix is a distribution, hence its elements are non-negative and sum to 1. The model is sampled independently $n$ times, and  $X_{i,j}$ is the number of times pair $(i,j)$ was observed.
In other settings, the model matrix consists of arbitrary parameters, it is sampled just once, and each parameter is reflected in an independent observation $X_{i,j}$.

\textbf{Poisson Parameters}\quad
Recommendation systems infer consumer preferences from their consumption patterns. The number of times customer $i$ purchases product $j$ is typically modeled as an independent Poisson random variable $X_{i,j}\!\sim \Poi(\lambdaij)$. Often $\lambda_{i,j}$ is the inner product the consumer's disposition and 
product's expression of $r$ latent features~\cite{sarwar2000application}, 
so the parameter matrix $[\lambda_{i,j}]$ has rank $\le r$.\looseness-1

\textbf{Bernoulli Parameters}\quad
In \emph{inhomogeneous Erd\"os-R\'enyi graphs}, the edge between nodes $i$ and $j$ is an independent random variable $X_{i,j}\sim\Ber(p_{i,j})$.
In \emph{Community-detection}, the Stochastic Block Model (SBM), \eg~\cite{abbe2017community,mossel2018proof,abbe2015exact,bordenave2015non}, assumes that graph nodes fall in few communities $C_1\upto C_r$ such that the $p_{i,j}$'s are one constant if $i$ and $j$ are in the same community, and a different constant if $i$ and $j$ are in two distinct communities. Clearly, the parameter matrix $[p_{i,j}]$ has rank $\le r$.

\textbf{Binomial Parameters\quad} 
The probability that gene pair $(i,j)$ will express as a phenotype is often viewed the result of a few factors, resulting in an expression probability matrix $[p_{i,j}]$ of low rank at most $r$~\cite{kapur2016gene}. 
In a study of $t$ phenotypic patients, the number $X_{i,j}$ of patients with gene pair $(i,j)$ will therefore be distributed binomially $\Bin(t,p_{i,j})$.

\textbf{Collaborative filtering\quad} 
Let $Y_{i,j}\in [0,1]$ be the random rating user $i$ assigns to movie $j$. 
Assuming a small number $r$ of intrinsic film features, the matrix $[F_{i,j}]$,
where $F_{i,j}=\E[Y_{i,j}]$,
would have rank at most $r$. Since only some ratings are reported, the matrix-completion and collaborative-filtering literature, e.g.,~\cite{borgs2017thy}, assumes that $\mathbbm{1}_{i,j}\sim\Ber(p)$ are independent indicator random variables, and upon observing $X_{i,j} = \mathbbm{1}_{i,j}\cdot Y_{i,j}$ for all $(i,j)$
we wish to recover the mean matrix $[F_{i,j}]$.  

Each of these five settings has been studied in many additional contexts, including 
{word embedding~\cite{stratos2015model}},
{Genomic Analysis~\cite{zhou2014piecewise,zou2013contrastive}},
and more~\cite{anandkumar2014tensor}.

\subsection{Unified formulation} 
We first unify the five settings, facilitating the interpretation of their matrix norm as the number of observations, and allowing us to subsequently show that all models have essentially the same answer. 

Both our lower and upper bounds for recovering $k\times m$ matrices depend only on the larger of $k$ and $m$. Hence without loss of generality we assume square model matrices. Let $\Rkk$ be the collection of $k\times k$ real matrices, and let $\Rkkr$ be its subset of matrices with rank at most $r$.

Note that in the first four models, $X_{i,j}$ is the number of times pair $(i,j)$ was observed. For example, for the Poisson model, it is the number of times customer $i$ bought product $j$. Hence
$||X||_1=\sum_{i,j} X_{i,j}$ is the total number of observations. 
In collaborative filtering, 
$X_{i,j} = \mathbbm{1}_{i,j}\cdot Y_{i,j}$ does not carry the same interpretation, still $||X||_1=\sum_{i,j} X_{i,j}$ will concentrate around $p\cdot k^2\cdot F_{i,j}^{\text{avg}}$, hence will be proportional to the expected number of observations, $pk^2$. 

We therefore scale the matrix $M$ so that $M_{i,j}=E(X_{i,j})$ reflects the expected number of observations of  $X_{i,j}$. 
We let $M$ be $n\cdot[p_{i,j}]$
for distribution matrices, $[\lambda_{i,j}]$ for Poisson parameters,
$[p_{i,j}]$ for Bernoulli parameters,  $[t\cdot p_{i,j}]$ for Binomial parameters, and $[p \cdot F_{i,j}]$ for collaborative filtering. 
Consequently, for all models $M=\E[X]$,
hence 
$
||M||_1=||\E[X]||_1
$ is the expected number of
observations.\looseness-1

Let $M$ be an unknown model matrix for one of the five settings, and let $X\sim M$ be a resulting observation matrix. We would like to recover $M$ from $X$. 
A model estimator is a mapping $\mtrestest:\Rkk\to\Rkk$ that associates with an observation $X$ an estimated model matrix $\mtrestest:=\mtrestest(X)$. 

Different communities have used different 
measures for how well $\mtrestest$ approximates $M$.
For example community detection concerns the recovery of labels, a criterion quite specific to this particular application.
Many other works considered recovery in squared-error, or Forbenius, norm, but as 
argued in Section~\ref{sub:relatedwork}, this measure is less meaningful for the applications we consider. 

Perhaps the most apt estimation measure, and the one we adopt, is $L_1$ distance, 
the standard Machine-Learning accuracy measure. $L_1$ arises naturally in numerous applications and is the main criterion used for learning distributions.
Since $M$ has non-unitary norm, we normalize the $L_1$ distance between $M$ and its estimate $\mtrestest$,
\[
L(\mtrestest) := 
L_M(\mtrestest) := 
||\mtrestest-M||_1/||M||_1.
\]
Note that for distribution matrices, this reduces to the standard $L_1$ norm.
Also, similar to total variation distance, $L(\mtrestest)$ upper bounds the absolute difference between the expected and predicted number of occurrences of pairs $(i,j)$ in any subset $S\subseteq [k]\times[k]$, normalized by the total observations.\looseness-1

While the multitude of applications has drawn considerable amount of work on these latent variable model, a lot more work assumes that the number of samples are plenty, way beyond the information theoretic limit, or assumes more stronger assumptions on the model matrix then just low rank, and limiting its applicability. We show recovery is possible with only linear number of samples in all these models and with no other assumptions on low rank matrix $M$.

\subsection{Overview of the background, main results and techniques}
Recovery of distribution-matrices (the first model) in $L_1$ distance was first addressed in~\cite{huang2018recovering}.
They considered matrices $M$ that can be factored as 
$UWU^\intercal$ where 
$W$ is $r\times r$ and positive semi-definite, and 
$U$ is non-negative.
They derived a polynomial-time algorithm that requires
$
\cO(w_{\!{}_M}kr^2/\epsilon^5)$ samples, where $w_{\!{}_M}\ge r^2$, hence at best guaranteeing sample complexity $\cO(kr^4/\epsilon^5)$, and potentially much higher.
Note however that their definition applies only to positive semi-definite, and not general matrices $M$, and even then, $r$ is at least the rank of $M$,
and can be significantly higher. 

We first lower bound the accuracy of any estimator. 
An array of $kr$ elements can be viewed as a special case of a $k\times k$ matrix of rank $r$.
Simply place the array's $kr$ elements in the matrix's first $r$ rows, and set the remaining rows to zero.
A well-known lower bound for learning discrete distributions, or arrays of Poisson parameters, in $L_1$ distance~\cite{kamath2015learning,han2015minimax} therefore implies:

\begin{theorem}\label{th:lowerb}
For any $k$, $r$, $\epsilon<1$, and $M\in \Rkkr$, let $X\sim M$ via the Poisson parameters or distribution matrix model. Then for any, possibly random, estimator $\mtrestest$, 
\[
\sup_{M\in \Rkkr:||M||_1\le kr/\epsilon^2 }\E_{X\sim M}[L(\mtrestest(X))]= \Omega(\epsilon).
\]
\end{theorem}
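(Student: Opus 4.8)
The bound follows, via a rank‑preserving embedding, from the classical minimax lower bound for learning a discrete distribution --- equivalently, an array of Poisson parameters --- in $L_1$ distance; this is also why the two stated sampling models yield the same answer. First I would recall the known bound in the form needed here: for an unknown distribution $p$ on a domain of size $d$, observed either through $n$ i.i.d.\ samples or, by the standard Poissonization, through independent counts $N_a\sim\Poi(n\,p_a)$, every (possibly randomized) estimator $\hat p$ satisfies $\sup_p\E\big[\,\norm{1}{\hat p-p}\,\big]=\Omega\big(\sqrt{d/n}\,\big)$ as long as $n\ge d$ \cite{kamath2015learning,han2015minimax}. Specializing to $d=kr$ and $n=kr/\epsilon^2$ --- which satisfies $n\ge d$ since $\epsilon<1$ --- the right‑hand side is $\Omega(\epsilon)$.

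Next comes the embedding. Index the $kr$ coordinates of $p$ by $[r]\times[k]$, and let $M\in\Rkk$ have its first $r$ rows equal to $n\cdot p$ (with $n=kr/\epsilon^2$) and its remaining $k-r$ rows equal to $0$. Then $M$ has at most $r$ nonzero rows, so $M\in\Rkkr$, and $\norm{1}{M}=n\le kr/\epsilon^2$, so $M$ is feasible in the supremum of the theorem. In the distribution‑matrix model the resulting observation $X$ is exactly a tally of $n$ i.i.d.\ draws from $p$ recorded on the first $r$ rows (zeros elsewhere), while in the Poisson‑parameters model $X_{i,j}\sim\Poi(M_{i,j})$ is exactly the Poissonized version of the same experiment; either way, $X$ is an observation of the $kr$‑coordinate array‑learning instance above.

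Finally I would transport the estimator. Given any estimator $\mtrestest$ for $M$, define $\hat p(X)$ as $1/n$ times the first $r$ rows of $\mtrestest(X)$ (optionally projected onto the probability simplex, which changes the $L_1$ error by at most a factor of $2$). Since deleting the last $k-r$ rows only decreases $L_1$ distance, and the first $r$ rows of $M$ equal $n\,p$,
\[
\E_{X\sim M}\big[L(\mtrestest(X))\big]=\tfrac1n\,\E_{X\sim M}\big[\norm{1}{\mtrestest(X)-M}\big]\ \ge\ \E\big[\norm{1}{\hat p(X)-p}\big].
\]
Taking the supremum over $p$ and invoking the array lower bound gives $\sup_{M\in\Rkkr:\,\norm{1}{M}\le kr/\epsilon^2}\E_{X\sim M}[L(\mtrestest(X))]=\Omega(\epsilon)$, as claimed.

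I do not anticipate a genuine obstacle; the only care needed is to quote the distribution‑learning lower bound in a version that (i) holds against randomized, not‑necessarily‑distribution‑valued estimators, (ii) is in the sample regime $n\gtrsim d$ so that the rate is $\sqrt{d/n}$ rather than a constant, and (iii) applies under a sampling model to which the multinomial/Poisson equivalence applies --- so that a single reduction covers both the distribution‑matrix and the Poisson‑parameters settings.
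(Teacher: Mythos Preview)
Your proposal is correct and follows exactly the approach the paper itself takes: it, too, reduces to the known $\Omega(\sqrt{d/n})$ lower bound for learning discrete distributions (or Poisson arrays) by placing a $kr$-element array in the first $r$ rows of a $k\times k$ matrix and leaving the remaining rows zero. The paper states this reduction in one paragraph without the estimator-transport details you spell out, so your write-up is, if anything, more complete than the original.
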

The bound implies that achieving expected normalized $L_1$ error $<\epsilon$ requires expected number of observations $||M||_1=\Omega(kr/\epsilon^2)$. Equivalently, any estimator incurs an expected normalized $L_1$ error at least $\Omega(\min\{\sqrt{kr/||M||_1},1\})$.

Our main result is a polynomial-time algorithm \emph{curated SVD} that returns an estimate $M^{\text{cur}}:=M^{\text{cur}}(X)$ that essentially achieves the lower bound for all five models and all matrices $M$. 

\begin{theorem} \label{th:main2}
Curated SVD runs in polynomial time, 
and for every $k$, $r$, $\epsilon>0$, and $M\in \Rkkr$ with $||M||_1\ge \frac{kr}{\epsilon^2}\cdot \log^2\frac{r}{\epsilon}$, if $X\sim M$, then with probability $\ge 1-k^{-2}$, 
\[
L(M^{\text{cur}}(X)) = \cO(\epsilon).
\]
\end{theorem}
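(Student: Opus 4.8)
\noindent\emph{Proof plan (proposal).}

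\textbf{Unifying the five models.}
The first step is to reduce all five settings to a single one: $X$ has mutually independent entries, $M=\E[X]\in\Rkkr$ is entrywise nonnegative, and $\Var(X_{i,j})\le M_{i,j}$. The variance bound is immediate for $\Poi$, $\Ber$ and $\Bin$ parameters, and for collaborative filtering it follows from $\E[X_{i,j}^2]=p\,\E[Y_{i,j}^2]\le p\,\E[Y_{i,j}]=M_{i,j}$; the distribution-matrix model is reduced to independent entries by Poissonizing the sample size, which perturbs $\norm{1}{X}$ by a lower-order amount. From here on I work with such an $X$ and set $\D=\norm{1}{M}/k$, the average row/column sum, so the hypothesis reads $\D\ge(r/\epsilon^2)\log^2(r/\epsilon)$.

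\textbf{Curation.}
Next I ``curate'' $X$: (i) truncate every $X_{i,j}$ at $M_{i,j}+\Theta(\sqrt{M_{i,j}\log(r/\epsilon)}+\log(r/\epsilon))$; (ii) set aside the \emph{heavy rows} and \emph{heavy columns}, those whose observed sum exceeds $\Theta(k/\epsilon^2)$; and (iii) set aside the \emph{heavy entries}, those with $M_{i,j}$ above a constant times $1/\epsilon^2$, detected through their large observations. The light tails of $\Poi$ and $\Bin$, together with $\norm{1}{M}\ge kr/\epsilon^2$, make the truncated mass $\cO(\epsilon)\norm{1}{M}$. Each set-aside part carries many observations relative to its own mass — a heavy row of mass $\mu\ge\Omega(k/\epsilon^2)$ is estimated by the empirical row $X_{i,\cdot}$ to normalized $L_1$ error $\cO(\sqrt{k/\mu})=\cO(\epsilon)$, and there are few heavy rows, columns and entries — so estimating the set-aside parts empirically costs only $\cO(\epsilon)\norm{1}{M}$ in total. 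What remains is a cleaned submatrix $M'$ of rank $\le r$ on which the observed $X'$ has bounded row/column sums and bounded entries; $M^{\text{cur}}$ is assembled from these empirical estimates and from the regularized-SVD output below.

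\textbf{Regularized SVD and reduction to a spectral bound.}
On the cleaned submatrix, form $\Rmat=\mathrm{diag}\big((r_i+\D)^{-1/2}\big)$ and $\Cmat=\mathrm{diag}\big((c_j+\D)^{-1/2}\big)$, compute the best rank-$r$ approximation $\regobmatp_r$ of $\regobmat$, and output $\Rinv\,\regobmatp_r\,\Cinv$. Writing $\regmnmat$ for the reweighted cleaned model matrix (rank $\le r$) and $Z=\regobmatp_r-\regmnmat$ (rank $\le 2r$), Cauchy--Schwarz applied to the rank-one weight profile gives
\[
\norm{1}{\Rinv Z\Cinv}=\sum_{i,j}\sqrt{(r_i+\D)(c_j+\D)}\,|Z_{i,j}|\le\Big(\textstyle\sum_i(r_i+\D)\Big)\norm{F}{Z}=\cO(\norm{1}{M})\cdot\norm{F}{Z},
\]
using $\sum_i(r_i+\D)=\norm{1}{X'}+k\D=\cO(\norm{1}{M})$. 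Since $Z$ has rank $\le2r$, $\norm{F}{Z}\le\sqrt{2r}\,\norm{\mathrm{op}}{Z}$; and since $\regmnmat$ has rank $\le r$, Weyl's inequality and optimality of $\regobmatp_r$ give $\norm{\mathrm{op}}{Z}\le2\,\norm{\mathrm{op}}{\regobmat-\regmnmat}$ (no spectral gap needed, because $\sigma_{r+1}(\regmnmat)=0$). Hence the normalized error on the cleaned submatrix is $\cO(\sqrt r)\cdot\norm{\mathrm{op}}{\regobmat-\regmnmat}$, and the whole theorem reduces to the spectral bound $\norm{\mathrm{op}}{\regobmat-\regmnmat}=\cO(\epsilon/\sqrt r)$ with probability $\ge1-k^{-2}$.

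\textbf{The spectral bound --- the crux.}
This is the ``rapid convergence of the spectral distance'' advertised in the abstract, and is where the real work lies. After replacing $\Rmat,\Cmat$ by their deterministic idealizations (justified since observed row/column sums concentrate around $\sum_jM_{i,j}$), $\regobmat-\regmnmat$ has independent mean-zero entries whose variance profile has every row and column sum at most $\max_i\frac{\sum_jM_{i,j}}{\D(r_i+\D)}=\cO(1/\D)$ --- exactly the point of the $(r_i+\D)^{-1/2}$ reweighting. Feeding this into a sharp operator-norm inequality for random matrices with independent bounded entries (of Bandeira--van Handel type, or a careful $\varepsilon$-net/chaining argument) yields leading term $\cO(\sqrt{1/\D})$, while the lower-order terms --- governed by the entrywise bound $\cO(\sqrt{\log(r/\epsilon)}/(\epsilon\D))$ that the curation of Step~2 was designed to guarantee --- stay below it precisely when $\D\ge(r/\epsilon^2)\log^2(r/\epsilon)$. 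Thus $\norm{\mathrm{op}}{\regobmat-\regmnmat}=\cO(\sqrt{1/\D})=\cO(\epsilon/\sqrt r)$; the two logarithmic factors are the combined cost of the curation threshold and the discretization, and carry no $\log k$, which is why the sample complexity picks up only $\log^2(r/\epsilon)$. Combining with the $\cO(\epsilon)\norm{1}{M}$ errors from the set-aside parts, and taking a union bound over rows, columns and the net for the $k^{-2}$ failure probability, completes the argument. The main obstacle is exactly this step: a crude unrestricted net gives only $\norm{\mathrm{op}}{\regobmat-\regmnmat}\lesssim\sqrt{k/\D}$, off by a factor $\sqrt k$, so the curation and the reweighting must be tuned together so that the leading variance term $\sqrt{1/\D}$ genuinely dominates.
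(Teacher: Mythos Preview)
Your reduction from $L_1$ recovery to a regularized spectral bound (Cauchy--Schwarz on the rank-$2r$ difference, then Weyl) is essentially the paper's Lemma~\ref{th:recovery}, and that part is fine. The gap is in the two steps that precede and follow it.

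First, the algorithm you describe is not the paper's Curated~SVD. Your ``curation'' is a one-shot preprocessing: truncate large entries (at a threshold that depends on the unknown $M_{i,j}$), set aside heavy rows, columns, and entries, estimate those empirically, and run a single regularized rank-$r$ SVD on the remainder. The paper's Curated~SVD is an iterative procedure: it repeatedly computes the rank-$2r$ truncated SVD of the regularized $X$, uses a greedy $0$--$1$ knapsack approximation to locate small-weight row subsets whose \emph{impact} $\sigma_j^2\sum_{i\in I}u_j(i)^2$ on some SVD component is large, randomly deletes rows from such subsets, and repeats until no small subset has large impact. This design is forced by the spectral result the paper actually proves (Theorem~\ref{th:sp norm}): the regularized noise has small spectral norm only after zeroing out an \emph{unknown} small set $\Iellbad$, and the algorithm must cope without knowing $\Iellbad$.

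Second, and more fundamentally, the spectral bound you claim---$\norm{\mathrm{op}}{\regobmat-\regmnmat}=\cO(\sqrt{1/\D})$ with no $\log k$, via Bandeira--van Handel or a net---does not hold for general $M$ after your curation. The paper gives an explicit counterexample (Appendix~\ref{app: counter}): a block-diagonal Bernoulli matrix with equal row/column sums (so nothing is ``heavy'' and your preprocessing removes nothing) for which $\norm{}{R(N,\swgenX)}=\Omega(\navg)$ rather than $\cO(\sqrt{\navg})$. The Bandeira--van Handel bound contributes a term $\sigma_*\sqrt{\log k}$ from the maximum entry, which with your entrywise control $\sigma_*=\cO(\sqrt{\log(r/\epsilon)}/(\epsilon\D))$ yields a $\sqrt{\log k}$ factor you cannot absorb when $r$ is small. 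The paper avoids $\log k$ by a genuinely different route: it bounds the $\ell_\infty\!\to\!\ell_2$ norm of all $\ell\times\ell$ submatrices with factor $\sqrt{\log(ek/\ell)}$, applies Grothendieck--Pietsch factorization to trade this for a spectral bound on a large sub-block, and recurses on geometrically shrinking blocks so that the telescoped cost is $\cO(\sqrt{\navg}\log(1/\epsilon))$, at the price of leaving out a small unknown $\Iellbad$. That ``small unknown remainder'' is exactly what necessitates the iterative Curated~SVD you did not describe.
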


A few observations are in order. 
While~\cite{huang2018recovering} provides weaker guarantees and only for a special subclass of matrices, Curated SVD achieves essentially the lower bound for all matrices. 
%
It also holds for all five models. 
It recovers $M$ with $\cO(kr\log^2 r)$ observations. This number is linear in the large matrix dimension $k$, and near linear in the typically small rank $r$. This is the first such result for general matrices. 

In many applications, only a small number of observations is available per row and column. For example on average each viewer may rate only few movies, and a person typically has few friends. Hence the number of observations is near linear in the dimension $k$. Our results are the first to enable general learning these regimes.

With $n$ samples, general discrete distributions
over $k$ elements can be learned to $L_1$ distance $\Theta(\sqrt{k/n})$. 
Theorems~\ref{th:lowerb} and~\ref{th:main2} show that essentially the same result holds for $L_1$ learning of low-rank matrices. The number of parameters is $kr$, the number of observations is $||M||_1$, and the normalized $L_1$ error is between $\sqrt{kr/||M||_1}$ and 
$\sqrt{kr/||M||_1}\log(r||M||_1/k)$.

To obtain these results we generalize a recent work~\cite{le2017concentration} that bounds the spectral distance between $M$ and $X$. This bound requires the strong condition that each entry of the corresponding Bernoulli parameter matrix $M$ should be within a constant factor from the average. The paper asked whether such results can be achieved for more general sparse graphs, possibly with the aid of regularization. We provide a counter example showing that such strong guarantees cannot hold for sparse graphs.

Instead, we derive a new spectral result (Theorem~\ref{th:sp norm}) that helps recover $M$ even when few observations are available, and may be of independent interest.
The result applies to all models, not just Bernoulli, and shows that even when the number of observations is only linear in $k$, zeroing out a small submatrix in $M$ and $X$, and then regularizing the two matrices, results in a small spectral distance between them. We believe this new result could potentially imply learning in the sparse sample regime for other settings that are not explicitly considered here.

\textbf{Theorem~\ref{th:sp norm} (Informal, full version in Section~\ref{sec:specnorm})} 
There is a small unknown set of rows that when zeroed out from regularized versions of $X$ and $M$ results in small spectral distance between them. 

Although this set of rows is unknown, we derive an algorithm that recovers $M$ to a small $L_1$ distance.

\textbf{Curated SVD (Informal, full version in Section~\ref{sec:recoveralg})} 
Successively zeroes out few suspicious rows, ensuring that any small set of rows of $X$ don't have too large an influence on the recovered matrix.

We show that Theorem~\ref{th:sp norm} implies that curated SVD achieves the recovery guaranties in Theorem~\ref{th:main2}.

\subsection{Implications of the results and related work}\label{sub:relatedwork}
As mentioned in the introduction, many communities have considered recovering model matrices from samples. Here we describe a few of the results and their relation to those presented in this paper. 

Community detection infers communal structure from few pairwise interactions between individuals. 
Much work has focused on the Stochastic Block models (SBM) where 
individuals fall in few communities $C_1\upto C_r$ and the interaction probabilities $p_{i,j}$ are one constant if $i$ and $j$ are in the same community, and a different constant if $i$ and $j$ are in two distinct communities.
Precise guarantees were provided for both exact recovery and detection~\cite{abbe2017community,abbe2015community,abbe2015exact,bordenave2015non}. 

Mixed membership SBM associates each individual with an unknown $r$-dimensional vector reflecting weighted membership in each of $r$ communities, and the pairwise interaction probability is determined by inner product of the respective membership vectors.
The resulting interaction probability matrix is a rank-$r$ Bernoulli parameter matrix. 
Recently,~\cite{hopkins2017bayesian} considered a Bayesian setting where the resulting membership weights are both sparse and evenly distributed,
and achieved weak detection with only $\cO(k r^2)$ samples.

Note that all these settings imply 
special model matrices. 
Even for the more general mixed-membership model in~\cite{hopkins2017bayesian} the assumptions imply that all entries in the resulting model matrix are within a constant factor from their average.
By contrast our analysis applies to all low rank matrices $M$, and for $L_1$ recovery using $\cO(kr\log^2 r)$ samples. We note that the goal in community detection setting is somewhat more specific than ours,
and the two guarantees may not be directly comparable.

Another recent work~\cite{mcrae2019low} considered recovering Poisson and distribution matrices under the Frobenius norm.
They showed that matrices $M$ with moderately-sized entries, can be recovered with $O(k\log^{3/2} k)$ samples, but the error depended on  $M$'s maximum row and column sum.
However, they note that the Frobenius norm "might not always be the
most appropriate error metric" and point out that 
the $L_1$ norm is "much stronger" for these settings. A similar sentiment about $L_1$ is echoed by~\cite{huang2018recovering} in relation to the spectral norms. 
By contrast, our results apply to the $L_1$ norm, grow only linearly with $k$, and the error depends on $M$'s average, not the maximum, row and column sum.

Our collaborative filtering setting uses the same general bounded noise model as~\cite{borgs2017thy}.
However they assume that the mean matrix $F$ is generated by a Lipschitz latent variable model. 
They recover $F$ to mean square error $\sum_{i,j}(F_{i,j}-\hat F_{i,j})^2/k^2 = \cO(r^2/(pk)^{2/5})$, implying that recovering $F$ requires $pk^2= \cO(r^5 k)$ samples.
They also provides a nice survey of related matrix completion and show that their result is the first to achieve linear in $k$ recovery for the general bounded noise model.

By contrast, we make no additional assumptions, and recover $F$ to $L_1$ distance with 
$\cO(k r\log^2 r)$ samples. 
Note also that $0\le F_{i,j}\le 1$, hence $|F_{i,j}-\hat F_{i,j}|\ge |F_{i,j}-\hat F_{i,j}|^2$.
Therefore normalized $L_1$ error upper bounds mean squared error. 
In Appendix~\ref{subsec:main2} we show that our estimator achieves a better error bound, $\sum_{i,j}|F_{i,j}-\hat F_{i,j}|/k^2 = \cO(\sqrt{r/pk}\cdot \log(rpk))$, even for the stronger $L_1$ norm.\looseness-1

Learning latent variables models 
has been addressed in several other communities that typically focused on computational efficiency when the data is in abundance, includes work on Topic Modelling~\cite{arora2013practical,ke2017new,bing2018fast}, and Hidden Markov Models~\cite{hsu2012spectral,mossel2005learning,anandkumar2014tensor}, word embedding~\cite{stratos2015model}, and Gaussian mixture models~\cite{dasgupta1999learning,ge2015learning,vempala2004spectral}.
\subsection{Arrangement of the paper}
The reminder of the paper is organised as follows. Section~\ref{sec:prelim} defines some useful notations and recalls some useful properties for these models.
Section~\ref{sec:specnorm} defines the regularization and establish bounds on the the regularized spectral distance between $X$ and $M$. Section~\ref{sec:recoveralg} describes the Curated-SVD algorithm to recover $M$ and gives an overview of its analysis.

\section{Preliminaries}\label{sec:prelim}

\subsection{Notation}
We will use the following formulation of rank, singular values, and decompositions. 
Let $\Rkm$ be the collection of $k$ by $m$ real matrices. Every matrix $\gnrmtr\in\Rkm$ can be expressed in terms of its \emph{singular-value decomposition (SVD)},
$
\gnrmtr
=
\textstyle{\sum}_{i=1}^{\min\{k,m\}} \sigma_i u_iv_i^\intercal ,
$
where the \emph{singular values} $\sigma_i:=\sigma_i(\gnrmtr)$ are non-increasing and non-negative, and the right singular vectors $v_i:= v_i(\gnrmtr )$ are orthogonal, as are the left singular vectors $u_i:= u_i(\gnrmtr)$.
A matrix has rank $r$ iff its first $r$ singular values are positive, and the rest are zero.
A \emph{$t$-truncated} SVD of a matrix is one where only the first $t$ singular values in the SVD are retained and the rest are discarded. 
It is easy to see that its rank is $\min(t,r)$.
For $t\le \min\{k,m\}$, let $A^{(t)}$ denote the $t$-truncated SVD of a matrix $\gnrmtr\in\Rkm$.

The \emph{$L_1$ "entry-wise" norm}, or \emph{$L_1$ norm}, of a matrix $\gnrmtr$ is 
$\norm1\gnrmtr := \textstyle{\sum}_{i,j}|\gnrmtr_{ij}|.$
Let $||A_{i,*}||_1:= \sum_j |A_{i,j}|$ and $||A_{*,j}||_1:= \sum_i |A_{i,j}|$ denote the $L_1$ norm of $i^{th}$ row and $j^{th}$ column of $A$, respectively.\looseness-1

The \emph{$L_2$ norm} of a vector $v=(v(1)\upto v(m))\in\reals^m$ is $\norm{}v:=\sqrt{\sum_{i=1}^m v(i)^2}$. The \emph{spectral norm}, or \emph{norm} for short, of a matrix $\gnrmtr\in\Rkm$ is 
$ 
||\gnrmtr|| := \max_{v\in \reals^m: ||v||=1}||\gnrmtr v|| = \max_{u\in \reals^k: ||u||=1}||\gnrmtr ^\intercal u||.
$


\subsection{A unified framework}
We first describe a unified common framework for all five problems. 

To unify distribution-matrices with Poisson-parameter matrices, we apply the  well-known \emph{Poisson trick}~\cite{szpankowski2001average}, where
instead of a fixed sample size $n$, we take $\Poi(n)$ samples. 
The resulting random variables $X_{i,j}$ will be Poisson and independent. 
Furthermore, since with probability $\ge 1-1/n^3$, the difference between $n$ and $\Poi(n)$ is $O(\sqrt{n\log n})$, this modification contributes only smaller order terms, and the algorithm and guarantees for Poisson parameters carry over to distribution matrices. 


Having unified distribution- and Poisson-parameter- matrices,
we focus on the remaining four settings.
In all of them, the observations 
$X_{i,j}$ are independent non-negative random variables with $\E[X_{i,j}] = M_{i,j}$ and  $\text{Var}( X_{i,j}) \le M_{i,j}$.
The last inequality clearly holds for the Poisson, Bernoulli, and Binomial matrices. For collaborative filtering it follows as
\[
{\text{Var}(X_{i,j})}
\le
E((X_{i,j})^2)
\le 
E(X_{i,j})=M_{i,j}.
\]

Define the \emph{noise matrix} $\noisemtrspt := X- M$ as the difference between $X$ and its expectation $M$. Note that the spectral distance between $X$ and $M$ is the same as the the spectral norm of the noise matrix $\noisemtrspt$.

Let $\navg := ||M||_1/k$ denote the average 
expected number of observations in each row and column. For simplicity, we assume $||M||_1$, the total number of expected observation, is known. 
Otherwise, since $\E[||X||_1] = ||M||_1$, it can be estimated very accurately. 

\section{Spectral norm of the regularized noise matrix}\label{sec:specnorm}

Recall that the noise matrix $N=X-M$, and its spectral norm $||N||$ is the spectral distance between observation matrix $X$ and $M$.
When $||N||=\cO(\sqrt{\navg})$, a simple truncated-SVD of $X$ can be shown to recover $M$.
Unfortunately, when some row- or column-sums of $M$ far exceed the average, $||N||$ can be quite large.
This is because the expected squared norm of row $i$ of $N$ is $\sum_j\text{Var}(X_{i,j})$ which is roughly  $||M_{i,*}||_1$, and when the row sum of matrix $M$ is non-uniform $||M_{i,*}||_1$ can be much larger than the average value $\navg$ across the rows.  Similarly for the columns.

The difficulty caused by the heavy rows and columns of $M$ can be mitigated by regularization that reduces their weight.
Let $w = (w^\front, w^\back)$
where $w^\front=((w^\front(1),..,w^\front(k))$ and  $w^\back=(w^\back(1),..,w^\back(k))$ are the row and columns regularization weights, all at least 1.
And let $D(u)$ be the diagonal matrix with entries $u(i)$.
The \emph{$w$-regularized} $A\in\Rkk$ is
\[
R(A, w) := D^{-\frac{1}{2}}(w^\front)\cdot A \cdot D^{-\frac12}(w^\back).
\]
Upon multiplying the $i^{th}$ row of matrix $A$ by $(w^\front(i))^{-1/2}$, its expected squared norm reduces by a factor $1/w^\front(i)$, and similarly for the columns.
Therefore, selecting regularization weights $\swgen = (\swgen^\front,\swgen^\back)$, where
$\swgen^\front(i)
=
\textstyle\max\{1, ||M_{i,*}||_1/\navg\} \text{ and }\swgen^\back(j) = \max\{1,||M_{*,j}||_1/\navg\}$ would 
reduce the expected squared norm of heavy rows and columns of $R(N,\swgen)$ to $\navg$. 

Unfortunately, the $||M_{i,*}||_1$ and $||M_{*,j}||_1$'s are not known, hence neither is $\swgen$. However,  $\E[\sum_{j}X_{i,j}] = ||M_{i,*}||_1$, hence we approximate $\swgen$ by $\swgenX = (\snfx,\snbx)$, where
\[
\snfex
:=
\textstyle\max\{1, \frac{||X_{i,*}||_1}{\navg}\}
\text{\quad and\quad }
\snbex
:=
\max\{1,\frac{||X_{*,j}||_1}{\navg}\}.
\]

Unless specified otherwise we use weights $\swgenX$, and refer to them as \emph{weights}.
This is one of the several commonly used regularizations in 
spectral methods 
for community detection~\cite{chaudhuri2012spectral,qin2013regularized,joseph2013impact}.

When $\navg = o(\log k)$, some rows and columns may have regularization weights that are below the ideal weights, ${\snfex}\ll  {\swgen^\front(i)}$ and ${\snbex}\ll {\swgen^\back(j)}$, 
and will not be regularized properly.
Yet as shown in Theorem~\ref{th:sp norm}, our technique can handle these problematic rows and columns as well.

The \emph{regularized spectral distance} between two matrices $A$ and $B$ is 
$||R(A-B, w)||$, the spectral norm of their regularized spectral difference. 
Lemma~\ref{th:recovery} in the next section relates $|| R(N,\swgenX)||$ to $L_1$ recovery guarantees, and implies that when 
$|| R(N,\swgenX)||\le \cO(\sqrt{\navg})$ a simple variation of truncated SVD can recover $M$ from $X$ to the minmax lower bound on $L_1$ recovery in Theorem~\ref{th:lowerb}. 

When the number of samples is at least a few $\log k$ factors more than $k$, this spectral concentration could probably be achieved with the help of the regularization. 
The more interesting, challenging, and prevalent setting, is when few observations are available, and this is the main focus of the paper.

For Bernoulli Parameter matrices, \cite{le2017concentration} obtained the tight bound $||R(N,\swgenX) || = \textstyle\cO(\sqrt{\navg})$ that holds even for sparse graphs, or equivalently few observations.
However it requires that every $M_{i,j}=\cO({||M||_1}/{k^2})$, hence 
holds only for a very limited and often impractical subclasses of parameter matrices $M$. 
They posed the question 
whether this bound also holds for general $M$. In Appendix~\ref{app: counter}, 
we provide a counterexample that answers the question in the negative. 
We construct an explicit Bernoulli parameter matrix $M$, 
s.t. \whp 
$||R(N,\swgenX)||=\Omega({\navg})$, 
much larger than $\cO(\sqrt{\navg})$.\looseness-1

Yet upper bounding 
$|| R(N,\swgenX)||$ is just one approach to achieving optimal sample complexity. 
One of this paper's contribution is an alternative approach that decomposes the noise matrix $N$ into two parts. A large part with small spectral norm, and a small part, in fact a submatrix, that may have a large spectral norm. 
While we cannot identify the "noisy" part, as shown in Section~\ref{sec:recoveralg}, we can ensure that no small part has a large influence on the estimate. 

The next theorem establishes the above partition  for all parameter matrices, and for all settings.
To specify the matrix decomposition, both here and later,
for $\gnrmtr \in \Rkm$ and
subset $S\subseteq [k]\times [m]$, let $\gnrmtr_{S}$ be the projection of matrix $\gnrmtr$ over $S$ that agrees with $\gnrmtr$ for indices in $S$ and is zero elsewhere. 
Further let $A_I:=A_{I\times [m]}$ and $A_{I\comp}:=A_{[k]\setminus I\times [m]}$ be the matrices derived from $A$ by zeroing out all rows outside, and inside, set $I$, respectively.


For the weights $\swgenX = (\snfx,\snbx)$ above, let 
$\snfx(I):=\sum_{i\in I}\snfex$ denote the \emph{weight} of row subset $I$.
\begin{theorem}\label{th:sp norm}
For $X\sim M$, any $\epsilon \ge 
\frac1{\navg}\max\big(\frac{\log^4 k} {k},{\exp^{-\frac{\navg}{8}}}\big)
$, with probability $\ge 1-6k^{-3}$,
there is a row subset $\Iellbad\subseteq [k]$ of possibly contaminated rows
with weight
$\snfx(\Iellbad)\le \epsilon k$ and
\[
|| {R(N,\swgenX)} _{\Iellbad\comp}||
\leq
\cO\big(\sqrt{\navg}\cdot\log{\textstyle\frac 2\epsilon}\big).
\]
\end{theorem}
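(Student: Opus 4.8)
The plan is to run the $\epsilon$-net plus light/heavy-couple argument of Feige--Ofek and~\cite{le2017concentration} — which gives $\|R(N,\swgenX)\|=\cO(\sqrt{\navg})$ when $M$ is nearly uniform — and to drop the uniformity hypothesis by a \emph{quarantine}: the precise places where the argument breaks for general $M$ (a row whose $X$-based weight $\snfex$ badly underestimates the ideal $\snfem=\max\{1,\|M_{i,*}\|_1/\navg\}$; a row carrying an entry with a large upward fluctuation; a row lying inside a small over-dense submatrix) all involve only a few rows, and those rows are collected into $\Iellbad$ rather than being bounded. The two deliverables are $\snfx(\Iellbad)\le\epsilon k$ and $\|R(N,\swgenX)_{\Iellbad\comp}\|=\cO(\sqrt{\navg}\log\tfrac2\epsilon)$.

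First I would restrict to a high-probability event on which $\|X\|_1=(1\pm\order(1))\|M\|_1$, on which $\snfex\asymp\snfem$ whenever $\|M_{i,*}\|_1\gtrsim\navg$ (and symmetrically for columns), and on which every row of $X$ outside a negligible set has all entries $\cO(\navg\log\tfrac2\epsilon)$; the rows with an underestimated weight or an over-large entry go straight into $\Iellbad$, their total weight being $\order(\epsilon k)$ by a first-moment bound — this is the step that uses $\epsilon\ge e^{-\navg/8}/\navg$, which makes the relevant per-row deviation probability small enough that so few rows suffice. After this reduction the regularized model $\widetilde M$ (entries $M_{i,j}/\sqrt{\snfex\snbex}$) has all row and column sums $\cO(\navg)$ and, on surviving rows, entries capped at $\cO(\navg)$. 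Then I would discretise, $\|R(N,\swgenX)_{\Iellbad\comp}\|\lesssim\sup_{x,y\in\cN}x^\intercal R(N,\swgenX)_{\Iellbad\comp}\,y$ over a fixed $\tfrac14$-net $\cN$ of the sphere ($|\cN|=e^{\cO(k)}$), and for each $(x,y)$ split the quadratic form $\sum_{i,j}x_iN_{i,j}y_j/\sqrt{\snfex\snbex}$ into \emph{light} couples — with $\widetilde M_{i,j}=\cO(\navg/k)$ and $|x_iy_j|$ below a threshold $\asymp\sqrt{\navg}/k$ — and \emph{heavy} couples. The light part is Bernstein: the entry cap and the threshold bound the per-term range, the restriction $\widetilde M_{i,j}=\cO(\navg/k)$ bounds the variance proxy by $\cO(\navg/k)$, and a union bound over $\cN$ gives a light contribution $\cO(\sqrt{\navg})$ with high probability.

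The heavy part is the crux. Heavy couples are fed by entries large relative to their local mean, or by moderate entries massed in a small submatrix, so I would prove a Feige--Ofek/\cite{le2017concentration}-type discrepancy lemma: with probability $1-\cO(k^{-3})$, for all row sets $I'$ and column sets $J'$, either $e(I',J'):=\sum_{i\in I',\,j\in J'}X_{i,j}$ obeys a benign bound (order $\navg|I'||J'|/k+|J'|\log\tfrac k{|J'|}+\cdots$) or it satisfies the weaker $e(I',J')\log\frac{e(I',J')k}{\navg|I'||J'|}=\cO(\text{benign bound}\cdot\log k)$. Rows witnessing the second alternative with over-density beyond $\Theta(\log\tfrac2\epsilon)$ are swept into $\Iellbad$; a counting argument over subsets — whose $\mathrm{polylog}(k)$ union-bound entropy is exactly what $\epsilon\ge\tfrac{\log^4k}{k\navg}$ absorbs — keeps $\snfx(\Iellbad)\le\epsilon k$. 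On $\Iellbad\comp$ every relevant submatrix is at most $\cO(\log\tfrac2\epsilon)$-over-dense, so the standard heavy-couple summation (grouping heavy couples by the dyadic magnitudes of $x_i$ and $y_j$ separately, applying Cauchy--Schwarz, invoking the benign discrepancy bound) delivers a heavy contribution $\cO(\sqrt{\navg}\log\tfrac2\epsilon)$; a union bound over the $\cO(1)$ failure events (mass of $X$, atypical weights, over-large entries, light-net failure, discrepancy failure) and absorption of constants gives probability $\ge1-6k^{-3}$. The hard part I expect is this interlocking: the quarantine must be cheap enough in weight to respect the $\epsilon k$ budget — which is what pins down the $\tfrac{\log^4k}{k\navg}$ condition — yet thorough enough that every surviving heavy couple lies in the benign discrepancy regime, with the residual slack emerging as precisely the $\log\tfrac2\epsilon$ factor.
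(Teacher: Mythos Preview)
Your proposal takes a genuinely different route from the paper. You plan a Feige--Ofek style argument: $\epsilon$-net on the sphere, light/heavy couple split, an explicit submatrix discrepancy lemma, and an up-front quarantine of three kinds of bad rows (mis-estimated weight, over-large entry, participation in an over-dense block). The paper instead never touches a sphere net or a light/heavy dichotomy. Its engine is an $\ell_\infty\!\to\!\ell_2$ bound: for every $\ell$ and every $\ell\times\ell$ index block $I\times J$, $\|(D^{-1/2}(\snfx)N)_{I\times J}\|_{\infty\to 2}=\cO(\sqrt{\navg\,\ell\log(ek/\ell)})$ with high probability (the dependence on the random weights $\snfx$ is handled by a two-category split $\snfex\gtrless\snfem/2$, with the ``bad'' category controlled deterministically via the same $e^{-\navg/8}$ tail you invoke). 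Grothendieck--Pietsch factorisation then turns each such bound into: there is a column subset $J'\subseteq J$ of weight $\le\ell/2$ with $\|R(N,\swgenX)_{I\times J\setminus J'}\|=\cO(\sqrt{\navg\log(ek/\ell)})$, and symmetrically for rows. The contaminated set $\Iellbad$ is not identified in advance at all; it is produced by \emph{recursive peeling}: start at $\ell=k$, strip off the good block, recurse on the remaining $\ell/2$-weight rows and columns, and stop after $t=\lceil\log(1/\epsilon)\rceil$ rounds, at which point the leftover rows form $\Iellbad$ with weight $\le\epsilon k$. The $\log(2/\epsilon)$ factor arises as $\sqrt{\sum_{j\le t} j}$ when the peeled pieces are reassembled via $\|A\|^2\le\sum_j\|A_{I_j}\|^2$.

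What each buys: your approach is more hands-on and makes the sources of contamination explicit, but it carries more moving parts (the discrepancy lemma for general, non-uniform $M$ with random regularisers is not standard and would need a careful proof; the interaction between the $X$-dependent weights $\snfex$ and a net fixed in advance is exactly the subtlety the paper's category split is designed to absorb). The paper's approach is cleaner in that it proves one uniform $\ell_\infty\!\to\!\ell_2$ estimate, lets Grothendieck--Pietsch do the spectral conversion, and lets the peeling recursion \emph{discover} $\Iellbad$ rather than characterise it --- so there is no separate discrepancy lemma, no light/heavy split, and the $\log(2/\epsilon)$ appears structurally from the dyadic depth rather than from an over-density threshold.
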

Since $\snfex$ is the maximum of $1$ and the number of observations in row $i$, the theorem shows that for some $\Iellbad\subseteq [k]$ with only a few rows, $X_{\Iellbad}$ contains only few observations, and 
zeroing out rows $\Iellbad$ from the regularized noise matrix $R(N,\swgenX)$ would result in small spectral norm. 
We derive an algorithm that uses this result and to provably recovers low rank parameter matrices, even when the number of samples are only $\cO(k)$.


To prove Theorem~\ref{th:sp norm}, we extend a technique used in~\cite{le2017concentration} for specialized Bernoulli matrices with entries all below $\cO(\norm1 \mtrmn/k^2)$, to bound the spectral noise norm of general models and Matrices. 
In Appendix~\ref{app:propn}, we use standard probabilistic methods and concentration inequalities, to establish concentration in $\ell_{\infty}\rightarrow\ell_2$ norm for all sub-matrices of ${R(\noisemtrspt,\swgenX)}$.
We then recursively apply a form of Grothendieck-Pietsch Factorization~\cite{ledoux2013probability} 
and incorporate these bounds to partition ${R(\noisemtrspt,\swgenX)} $ into successively smaller submatrices and upper bound their spectral norms, until the resulting submatrix is very small.
Finally we show that the squared spectral norm of any matrix is at most the sum of the squared spectral norms of its decomposition parts, and thus upper bound the spectral norm of ${R(\noisemtrspt,\swgenX)}$, except for the small submatrix, that is excluded.


\section{Recovery Algorithms}\label{sec:recoveralg}
Recall that $R(A,w)$ is the regularized matrix $A$, and that 
$R(A,w)^{(r)}$ is its rank-$r$ truncated SVD. 
For any $r>0$,  regularization weights $w$, and matrix $A$, let the $(r,w)$-SVD of $A$ be the de-regularized, rank-$r$-truncated SVD of regularized matrix $A$,
\[
A^{(r,w)} := D^{\frac12}(w^\front)\cdot
R(A,w)^{(r)}
\cdot D^{\frac12}(w^\back).
\]

Let $A$ be rank-$r$ and $B$ be any matrix.
The next lemma bounds the $L_1$ distance between $A$ and  $B^{(r,w)}$ in terms of the regularized spectral distance between $A$ and  $B$. Appendix~\ref{app:LA} provides a simple proof. 

\begin{lemma}\label{th:recovery}
For any rank-$r$ matrix $\gnrmtr\in\Rkkr$, matrix $B\in\Rkk$, and weights $w$,
\begin{align*}
    \norm1{A - B^{(r,w)}}
    \leq
    \textstyle\sqrt{r\cdot\paren{{\textstyle\sum}_{i}  w^\front(i)}(\sum_{j}  w^\back(j))}\cdot \norm{}{R(A-B,w)}.
\end{align*}
\end{lemma}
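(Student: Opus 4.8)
The plan is to peel the regularization weights off the $L_1$ norm with Cauchy--Schwarz, reducing the claim to the weight-free statement that the Frobenius norm of the regularized error $R(A-B^{(r,w)},w)$ is at most $\sqrt r$ times the spectral norm $\norm{}{R(A-B,w)}$ --- essentially the familiar fact that a rank-$r$ truncated SVD is a good rank-$r$ approximant of a rank-$\le r$ target.

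Write $\tilde A:=R(A,w)$, $\tilde B:=R(B,w)$ and $\tilde B^{(r)}:=R(B,w)^{(r)}$; since the diagonal regularizers $D^{-1/2}(w^\front)$, $D^{-1/2}(w^\back)$ are invertible, $\tilde A$ has the same rank as $A$, namely $\le r$. By the definitions of $R$ and of $B^{(r,w)}$, $A-B^{(r,w)} = D^{\frac12}(w^\front)\,\tilde C\,D^{\frac12}(w^\back)$ with $\tilde C := \tilde A-\tilde B^{(r)} = R\!\left(A-B^{(r,w)},w\right)$, so
\[
\norm1{A-B^{(r,w)}}=\sum_{i,j}\sqrt{w^\front(i)}\,\sqrt{w^\back(j)}\,\bigl|\tilde C_{i,j}\bigr|\le\sqrt{\Bigl(\textstyle\sum_i w^\front(i)\Bigr)\Bigl(\sum_j w^\back(j)\Bigr)}\;\norm{\mathrm F}{\tilde C},
\]
by Cauchy--Schwarz over the $k^2$ entries. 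Thus it suffices to prove $\norm{\mathrm F}{\tilde C}\le\sqrt r\,\norm{}{\tilde A-\tilde B}$.

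Let $P$ be the orthogonal projector onto the span of the top $r$ right singular vectors of $\tilde B$, so that $\tilde B^{(r)}=\tilde B P$, and split $\tilde C=\tilde A(I-P)+(\tilde A-\tilde B)P$. The rows of the first summand lie in $\mathrm{range}(I-P)$ and those of the second in $\mathrm{range}(P)$, and these are orthogonal ($P(I-P)=0$), so the split is Frobenius-orthogonal: $\norm{\mathrm F}{\tilde C}^2=\norm{\mathrm F}{\tilde A(I-P)}^2+\norm{\mathrm F}{(\tilde A-\tilde B)P}^2$. Each summand has rank $\le r$ --- $\rank(\tilde A)\le r$ for the first, $\rank(P)\le r$ for the second --- so its Frobenius norm is at most $\sqrt r$ times its spectral norm. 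For the spectral norms: $\norm{}{(\tilde A-\tilde B)P}\le\norm{}{\tilde A-\tilde B}$ since $\norm{}{P}\le 1$; and writing $\tilde A(I-P)=(\tilde A-\tilde B)(I-P)+(\tilde B-\tilde B^{(r)})$ and using $\norm{}{\tilde B-\tilde B^{(r)}}=\sigma_{r+1}(\tilde B)\le\norm{}{\tilde A-\tilde B}$ --- Eckart--Young, valid because $\tilde A$ has rank $\le r$ --- bounds $\norm{}{\tilde A(I-P)}$ by $\norm{}{\tilde A-\tilde B}$ as well. Combining these with the previous display yields a bound of the form $\norm1{A-B^{(r,w)}}\le C\sqrt{r\,(\sum_i w^\front(i))(\sum_j w^\back(j))}\;\norm{}{R(A-B,w)}$, and a careful accounting of the constant $C$ gives the lemma.

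The Cauchy--Schwarz reduction and the rank/Eckart--Young bookkeeping are routine. The genuinely delicate point is the control of $\tilde A(I-P)$ --- the part of the low-rank matrix $\tilde A$ discarded by truncating $\tilde B$ --- which can only be charged to $\norm{}{\tilde A-\tilde B}$ via the inequality $\sigma_{r+1}(\tilde B)\le\norm{}{\tilde A-\tilde B}$, i.e.\ via the fact that no rank-$\le r$ matrix (in particular $\tilde A$) approximates $\tilde B$ better than $\tilde B^{(r)}$ does. Likewise, keeping the rank factor at $\sqrt r$ rather than the $\sqrt{2r}$ a crude triangle inequality on the rank-$\le 2r$ matrix $\tilde C$ would give is precisely what the Pythagorean splitting buys; that splitting, and the tracking of the absolute constant, are where the write-up needs the most care.
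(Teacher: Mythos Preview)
Your argument is correct and is essentially the paper's own proof: reduce the $L_1$ norm to the Frobenius norm of the regularized error via Cauchy--Schwarz (the paper packages this step as Lemma~\ref{lem:spll}), bound the Frobenius norm by $\sqrt{\text{rank}}$ times the spectral norm, and control the spectral norm of $\tilde A-\tilde B^{(r)}$ by $\|\tilde A-\tilde B\|$ using Weyl/Eckart--Young. The paper does this directly on the rank-$\le 2r$ matrix $\tilde C$, obtaining $\|\tilde C\|\le 2\|\tilde A-\tilde B\|$ and hence a constant $2\sqrt{2}$ in front of $\sqrt r$; your Pythagorean split into two rank-$\le r$ pieces is a mild refinement of the same idea. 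One small slip: from $\tilde A(I-P)=(\tilde A-\tilde B)(I-P)+(\tilde B-\tilde B^{(r)})$ you only get $\|\tilde A(I-P)\|\le 2\|\tilde A-\tilde B\|$, not $\|\tilde A-\tilde B\|$, so your constant is $\sqrt{5}$ rather than $1$. Neither your proof nor the paper's actually attains the bare $\sqrt r$ stated in the lemma, but this is immaterial since the lemma is only ever used inside $\cO(\cdot)$ bounds.
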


Recall that $N=X-M$. The lemma implies that when $||R(\noisemtrspt,\swgenX)||$ is small, $X^{(r,\swgenX)}$, the $(r,\swgenX)$-SVD of $X$, would recover $M$.
However, $||R(\noisemtrspt,\swgenX)||$ could be large. Instead, Theorem~\ref{th:sp norm} in the last section implies that \whp, the following \emph{essential property} holds.


\textbf{Essential property\quad}
There is an unknown \emph{contaminated} set, $\Iellbad\subset [k]$, such that after zeroing all $\Iellbad$ rows in $X$ and $M$, their regularized spectral distance is at most,\looseness-1
\begin{align}\label{con:main}
||R(\noisemtralg,\swgenX)_{{\Iellbad\comp}}||\le \tau:= \cO(\sqrt{\navg} \log (r\navg)),
\end{align}
and the weight of the set $\Iellbad$ is at most, $ \snfx(\Iellbad) \le \wcon := \cO( k/(r\navg)^2)$.
The above property holds with probability $\ge 1-6k^{-3}$, by choosing $\epsilon =1/(r\navg)^2$ in Theorem~\ref{th:sp norm}. 

This property implies a small regularized spectral distance between $X_{{\Iellbad\comp} }$ and $M_{{\Iellbad\comp} }$, and since $ \snfx(\Iellbad) \le \wcon$, the ``noisier'' part of observation matrix $X_{{\Iellbad}}$ is limited to just a few rows and observations. \looseness-1



Recall that the $(r,w)$-SVD of any matrix $A$ is the de-regularized rank-$r$-truncated SVD of regularized matrix $R(A,w)$.
If the set $\Iellbad$ of contaminated rows was known, a simple $(r,\swgenX)$-SVD of $X_{{\Iellbad\comp} }$ would recover $M_{{\Iellbad\comp} }$, and since the rows $\Iellbad$ of $X$ have only a few observations, recover $M$ as well. Lemma~\ref{th:recovery} provides the normalized $L_1$ error of this idealized estimator.

Building on this simple $(r,\swgenX)$-SVD, we derive our main algorithm, Curated SVD, that 
achieves the same performance guarantee up to a constant factor, 
even when the set $\Iellbad$ is unknown.
As Property~\eqref{con:main} holds
with high probability, the reminder of this section assumes that it holds.\looseness-1

In Curated SVD, for every row subset $I\subset [k]$ with weight $\snfx(I)\le \wcon$,  we limit the maximum impact the submatrix $R(X,\swgenX)_{I }$ can have on the truncated SVD of regularized observation matrix $R(X,\swgenX)$. In particular, this limits the impact of the unknown ``noisier'' submatrix $R(X,\swgenX)_{\Iellbad }$.

To describe the algorithm, we first  formally define the \emph{impact} of a row subset on SVD components.
For a general matrix $A\in \Rkk$, let $A = \sum_{j=1}^{k} \sigma_j u_j {v_j}^\intercal $ be its SVD. Then for any $i,\, j\in [k]$, let the \emph{impact} of row $i$ of $A$, on the $j^{th}$ component in SVD of $A$, be $\mathcal{H}(A,i,j) := \sigma^2_j u_j(i)^2$. Similarly, for a row subset $I$, the \emph{impact} of $A_{I }$, or simply impact of $I$, on $j^{th}$ component in SVD of $A$ be $\mathcal{H}(A,I,j) := \sum_{i\in I} \mathcal{H}(A,i,j)$, the sum of the impact of each row. From the standard properties of SVD, it is easy to see that $\mathcal{H}(A,I,j)= \sigma^2_j{\textstyle{\sum}_{i\in \Iell}u_j(i)^2 } = || A_{I } \cdot v_j ||^2$. 

Next, we present an overview of the algorithm and its analysis.
Essentially, Curated SVD finds a set $\Ielldel \subseteq [k]$ of row indices, such that the following \emph{objectives} are fulfilled:
\begin{enumerate}[label=\textbf{(\roman*)}]
\item\label{obj1} 
Let $(R(X,\swgenX)_{{\Ielldel\comp}})^{(2r)}=\sum_{j=1}^{2r} \sigma_j u_j {v_j}^\intercal$, be the rank $2r$-truncated SVD of regularized observation matrix upon zeroing out rows $\Ielldel$. For every row subset $I$ of weight $\snfx(I)\le\wcon$ and $\forall\, j\in [2r]$, the impact of $I$ on the $j^{th}$ component is $\mathcal{H}(R(X,\swgenX)_{{\Ielldel\comp} },I,j)= \sigma^2_j{\textstyle{\sum}_{i\in \Iell}u_j(i)^2 }\le 16\tau^2$.
\item{\label{obj2}} The total weight of the set $\Ielldel$ is small,  $\snfx(\Ielldel)=\sum_{i\in\Ielldel} \snfex \le \cO(k/\navg)$.
\end{enumerate}
After finding such a row subset $\Ielldel$, Curated SVD zeroes out rows $\Ielldel$ from $X$ and simply returns the  $(2r,\swgenX)$-SVD of $X_{{\Ielldel\comp} }$ as the estimate of $M$.\looseness-1

Objective~\ref{obj1}, ensures that for every collection of rows $I$ of weight $\snfx(I)\le \wcon$, the impact of 
$R(X,\swgenX)_{I}$ on each of the first $2r$ components of SVD is small. 
As weight of $\Iellbad$ is at most $\wcon$, in particular it limits the impact of the noisier submatrix $R(X,\swgenX)_{\Iellbad }$.

Objective~\ref{obj2} ensures that the weight of $\Ielldel$, and hence the number of observations in $X_{\Ielldel}$, that get ignored in final truncated regularized SVD is small. This limits the loss due to the ignored rows $\Ielldel$.

{
Lemma~\ref{lem:algper} in the appendix uses these two observations to show that, the $(2r,\swgenX)$-SVD of $X_{{\Ielldel\comp}}$ recovers {$M$}.}

Next, we describe the algorithm Curated-SVD and show that it finds a set $\Ielldel$ that achieves Objective~\ref{obj1} and~\ref{obj2}.
The pseudo-code of the algorithm is in Appendix~\ref{ap:code}.
\subsection{Description of the Curated SVD}
Curated-SVD starts with $ \Ielldel  = \phi$.
In each iteration it calculates 
$\big(R(X,\swgenX)_{\Ielldel\comp }\big)^{(2r)} = \sum_{j=1}^{2r} \sigma_j u_j {v_j}^\intercal  $, the rank-$2r$ truncated SVD of $R(X,\swgenX)_{\Ielldel\comp }$. 
Then it callssubroutine~\emph{Row-Deletion} for each $j\in [2r]$.\looseness-1

Row deletion checks for the row subsets $I$ with small weight and high impact that violate Objective~\ref{obj1}. It then adds rows from such subsets $I$ to $\Ielldel$ to reach Objective~\ref{obj1} in a way that $\Ielldel$ does not end up too heavy.
To do this, Row-Deletion tries to find a row subset $\Iell\subseteq \Ielldel\comp$ with weight $\snfx(I)\le\wcon$ and maximum impact $\mathcal{H}(R(X,\swgenX)_{{\Ielldel\comp} },I,j)= \sigma^2_j{\textstyle{\sum}_{i\in \Iell}u_j(i)^2 }$. This however is essentially the well-known NP-hard $0$-$1$-knapsack problem.
Instead, we use a greedy 0.5-approximation algorithm~\cite{sarkar1992simple} for 0-1 knapsack, to obtain a row subset $I$ with weight $\le \wcon$, such that its impact is at least half of the maximum possible, namely
\[\textstyle \mathcal{H}(R(X,\swgenX)_{{\Ielldel\comp} },i,j)\ge 0.5 \max\{\mathcal{H}(R(X,\swgenX)_{{\Ielldel\comp} },I',j): I'\subseteq \Ielldel\comp, \sum_{i\in I'}\snfex \le \wcon\}.\]
If the impact of row collection $I$, found using the greedy algorithm, is $\mathcal{H}(R(X,\swgenX)_{{\Ielldel\comp} },I,j)< 8\tau^2$, sub-procedure Row-Deletion terminates. 
Else it adds a row $i\in I$ to $\Ielldel$, with probability of row $i\in I$ proportional to its impact to the weight ratio, $\mathcal{H}(R(X,\swgenX)_{{\Ielldel\comp} },i,j)/\snfex$.
Row-Deletion repeats this procedure on the remaining rows in $\Ielldel\comp$, until it terminates.

After calling Row-Deletion for each $j\in [2r]$, Curated-SVD checks if the new rows were added to $\Ielldel$ in this iteration, in which case it repeats the same procedure in the next iteration with updated $\Ielldel$, else it returns $(2r,\swgenX)$-SVD of $X_{\Ielldel\comp}$ as the estimate of $M$.

\textbf{Curated SVD achieves Objective~\ref{obj1}.\quad} Iterations in Curated-SVD stop when for the current choice of $\Ielldel$, for all $j\in[2r]$, Row-Deletion fails to add any row to $\Ielldel$.
This happens when for each $j\in[2r]$, the greedy-approximation algorithm in Row-Deletion finds the row subset $I$ that has impact 
$< 8\tau^2$, which implies that the impact of every row subsets $\Iell'\subseteq \Ielldel\comp$ of weight $\snfx(I')\le\wcon$, is at most $16\tau^2$. Therefore, iterations in the algorithm stops only when $\Ielldel$ meets Objective~\ref{obj1}.


\textbf{Curated SVD achieves Objective~\ref{obj2} w.p. $1-\cO(k^{-2})$.\quad} This is more challenging of the two objectives. The proof is based on the following key observation proved in the Appendix~\ref{proofobj2}. 

There exist a row subset $\Iellstr$ s.t. for every subset $I\subseteq \Iellstr\comp$ of weight $\snfx(I)\le \wcon $, the following holds $|| R(X_{I},\swgenX) || \le 2\tau.$ And moreover, the weight of $\Iellstr$ is $\snfx(\Iellstr)\le\cO(k/\navg)$. 

It is easy to see that $|| R(X_{I},\swgenX) ||^2$ upper bounds the impact submatrix $R(X_{I},\swgenX)$ can have on the components of SVD. Therefore, from the previous observation, any row subset $I$ of weight $\snfx(I)\le \wcon $ with impact $\ge 8\tau^2$, must have more than half of its impact due to the rows $I\cap\Iellstr$. 
Recall that Row-Deletion adds a row from $I$ to $\Ielldel$ only when $I$ has impact $\ge 8\tau^2$. We show that in each step in expectation it adds more weight from $I\cap\Iellstr$ to $\Ielldel$, then from $I\cap\Iellstr^c$.  

Using this we show that, \whp the total weight of rows added to $\Ielldel$ is only a constant times the weight of $\Iellstr$, that is $\cO(k/\navg)$, and Objective~\ref{obj2} is achieved
with constant probability. Repeating the procedure 
$\cO(\log k)$ times on the same data, Objective~\ref{obj2} holds with probability $1-\cO(k^{-2})$.


{ Finally, by combining Lemma~\ref{lem:algper} and the fact that the Curated SVD achieves both objectives, we prove Theorem~\ref{th:main2} in Appendix~\ref{subsec:main2}.}

\section*{Broader impact}
This work does not present any foreseeable societal consequence.


\section*{Acknowledgements}

We thank Vaishakh Ravindrakumar and Yi Hao for their helpful comments in the prepration of this manuscript.

We are grateful to the National Science
Foundation (NSF) for supporting this work through grants
CIF-1564355 and CIF-1619448.

\bibliographystyle{alpha}
\bibliography{ref}

\newpage

\onecolumn
\appendix
\section{Preliminaries} 
This section contains some standard linear-algebra results used in the proofs. It shares terminology with Section~\ref{sec:prelim}.
The \emph{Frobenius}, or \emph{entry-wise $L_2$, norm} of a matrix $\gnrmtr$ is 
\[
\norm F\gnrmtr := \sqrt{\textstyle{\sum}_{i,j}|\gnrmtr_{ij}|^2}.
\]

\begin{theorem} \emph{(Singular value decomposition~\cite{bhatia2013matrix})} For all $\gnrmtr \in \Rkm$,
\begin{enumerate}
    \item $\gnrmtr v_i(\gnrmtr) = \sigma_i(\gnrmtr )u_i(\gnrmtr) $ and $\gnrmtr ^\intercal  u_i(\gnrmtr) = \sigma_i(\gnrmtr )v_i(\gnrmtr),\ \forall\, i$. 
    \item $||\gnrmtr || = \sigma_1(\gnrmtr )$.
    \item \textit{(Courant-Fischer Theorem for
Singular Values)}: 
\[ 
\sigma_i(\gnrmtr ) = \min_{S:dim(S)=m-i+1}\ \max_{v\in S:||v||=1} ||\gnrmtr v|| = \max_{S:dim(S)=i} \  \min_{v\in S:||v||=1} ||\gnrmtr v||.
\]
\end{enumerate}
\end{theorem}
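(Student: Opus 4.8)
The plan is to derive all three parts from the spectral theorem for real symmetric matrices applied to the positive semidefinite matrix $\gnrmtr^\intercal\gnrmtr \in \reals^{m\times m}$. First I would invoke that theorem to get an orthonormal basis $v_1,\dots,v_m$ of $\reals^m$ consisting of eigenvectors of $\gnrmtr^\intercal\gnrmtr$, with eigenvalues $\lambda_1\ge\cdots\ge\lambda_m\ge 0$, and set $\sigma_i:=\sqrt{\lambda_i}$ and $v_i(\gnrmtr):=v_i$; these are the claimed non-increasing non-negative singular values and orthogonal right singular vectors. For each $i$ with $\sigma_i>0$ define $u_i:=\gnrmtr v_i/\sigma_i$; the computation $u_i^\intercal u_j = v_i^\intercal\gnrmtr^\intercal\gnrmtr v_j/(\sigma_i\sigma_j) = \lambda_j\delta_{ij}/(\sigma_i\sigma_j)$ shows these are orthonormal, and I extend them to an orthonormal basis $u_1,\dots,u_k$ of $\reals^k$.

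Then I would verify part~1. For $i$ with $\sigma_i>0$, $\gnrmtr v_i = \sigma_i u_i$ holds by definition, and $\gnrmtr^\intercal u_i = \gnrmtr^\intercal\gnrmtr v_i/\sigma_i = \lambda_i v_i/\sigma_i = \sigma_i v_i$. For $i$ with $\sigma_i=0$ one has $\|\gnrmtr v_i\|^2 = v_i^\intercal\gnrmtr^\intercal\gnrmtr v_i = \lambda_i = 0$, so $\gnrmtr v_i = 0 = \sigma_i u_i$, and for the "extra" $u_i$'s (those not of the form $\gnrmtr v_i/\sigma_i$) one gets $\gnrmtr^\intercal u_i = 0 = \sigma_i v_i$ because such $u_i$ lie in the orthogonal complement of the range of $\gnrmtr$. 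The SVD identity $\gnrmtr = \textstyle\sum_i \sigma_i u_i v_i^\intercal$ then follows since both sides send each $v_j$ to $\sigma_j u_j$ and the $v_j$'s span $\reals^m$.

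For part~2, expand a unit vector $v = \sum_i c_i v_i$ with $\sum_i c_i^2 = 1$; then $\|\gnrmtr v\|^2 = \sum_i c_i^2\sigma_i^2 \le \sigma_1^2$, with equality at $v = v_1$, so $\|\gnrmtr\| = \sigma_1(\gnrmtr)$. For part~3, I would reduce to the classical Courant–Fischer min-max theorem for the eigenvalues of $\gnrmtr^\intercal\gnrmtr$ via the identity $\|\gnrmtr v\|^2 = v^\intercal\gnrmtr^\intercal\gnrmtr v$: over any subspace $S$, maximizing (resp.\ minimizing) $\|\gnrmtr v\|^2$ over unit $v\in S$ is the same as doing so for $v^\intercal\gnrmtr^\intercal\gnrmtr v$, hence $\sigma_i(\gnrmtr)^2 = \lambda_i(\gnrmtr^\intercal\gnrmtr)$ equals both the min-over-$(m{-}i{+}1)$-dimensional subspaces of the max and the max-over-$i$-dimensional subspaces of the min; taking the (monotone) square root on $[0,\infty)$ yields the stated formulas.

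I do not expect a real obstacle here, since the only substantive input — the spectral theorem for real symmetric matrices (equivalently, Courant–Fischer for eigenvalues) — I would cite rather than reprove. The only point requiring care is bookkeeping around the zero singular values: keeping the left and right singular subspaces consistently aligned when extending $\{u_i\}$ to a full orthonormal basis, and checking part~1 on the "extra" basis vectors using orthogonality to $\mathrm{range}(\gnrmtr)$ and $\mathrm{range}(\gnrmtr^\intercal)$.
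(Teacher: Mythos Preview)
Your proposal is correct and is the standard derivation of the SVD and the singular-value Courant--Fischer theorem from the spectral theorem for $\gnrmtr^\intercal\gnrmtr$. However, the paper does not actually prove this statement: it is listed in the preliminaries as a quoted result with a citation to Bhatia's \emph{Matrix Analysis}, so there is no ``paper's own proof'' to compare against. Your argument would serve as a self-contained proof if one were desired.
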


The singular values of a submatrix are smaller than those of the original matrix. In particular, the same holds for their spectral norms. 

\begin{theorem} \emph{(Interlacing property of Singular Values \cite{queiro1987interlacing})}\label{th:interlace}
For $\gnrmtr \in \Rkm$, $I\subseteq [k]$, $J\subseteq [m]$ let $\gnrmtr' = \gnrmtr _{I\times J}$ be a submatrix of $A$, then $\forall\, i$,
\[
\sigma_i(\gnrmtr') \leq \sigma_i(\gnrmtr ).
\]
\end{theorem}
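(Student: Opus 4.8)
The plan is to reduce the statement to the two elementary operations of deleting a set of rows and deleting a set of columns, each handled through the max--min (Courant--Fischer) characterization of singular values recorded in part~3 of the preceding SVD theorem, together with the standard identity $\sigma_i(\gnrmtr^\intercal)=\sigma_i(\gnrmtr)$, which is immediate from that theorem. Concretely, a general submatrix factors as $\gnrmtr_{I\times J}=\bigl(\gnrmtr_{[k]\times J}\bigr)_{I\times J}$: first keep only the columns in $J$, then keep only the rows in $I$; so it suffices to show that neither of these two steps increases any singular value.

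For the row step I would first establish the pointwise monotonicity lemma: if $B'$ is obtained from a matrix $B$ by retaining only the rows indexed by a subset $I$, then $||B'u||\le||Bu||$ for every vector $u$, since the entries of $B'u$ form a sub-collection of the entries of $Bu$, whence $||B'u||^2=\sum_{j\in I}(Bu)_j^2\le\sum_j(Bu)_j^2=||Bu||^2$. Since $||B'v||\le||Bv||$ for every $v$, for each subspace $S$ of the common domain we get $\min_{v\in S:||v||=1}||B'v||\le\min_{v\in S:||v||=1}||Bv||$; taking the maximum over all $S$ with $\dim S=i$ and invoking $\sigma_i(B)=\max_{S:\dim S=i}\min_{v\in S:||v||=1}||Bv||$ yields $\sigma_i(B')\le\sigma_i(B)$ for every $i$ (the bound being vacuous once $i$ exceeds the number of rows of $B'$, since then $\sigma_i(B')=0$).

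For the column step I would apply this to the transpose: retaining the columns in $J$ of $\gnrmtr$ is the same as retaining the rows in $J$ of $\gnrmtr^\intercal$, so $\sigma_i(\gnrmtr_{[k]\times J})=\sigma_i\bigl((\gnrmtr^\intercal)_{J\times[k]}\bigr)\le\sigma_i(\gnrmtr^\intercal)=\sigma_i(\gnrmtr)$. Chaining the two inequalities along the factorization above gives $\sigma_i(\gnrmtr_{I\times J})\le\sigma_i(\gnrmtr_{[k]\times J})\le\sigma_i(\gnrmtr)$, which is the claim. I do not expect any genuine obstacle here: the result is classical (hence the citation), and the argument is only a few lines; the single point that deserves mild care is the identification of $\reals^{|J|}$ with the coordinate subspace of $\reals^m$ supported on $J$ when comparing $\gnrmtr_{[k]\times J}$ with $\gnrmtr$ in the min--max formula, and the observation that for indices $i$ beyond $\min(|I|,|J|)$ the asserted inequality is trivial.
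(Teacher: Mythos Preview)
Your argument is correct and is the standard Courant--Fischer proof of the interlacing inequality. The paper, however, does not prove this theorem at all: it simply states it as a classical result and cites \cite{queiro1987interlacing}, so there is nothing to compare against. Your two-step reduction (delete columns via the transpose trick, then delete rows) together with the pointwise bound $||B'u||\le||Bu||$ is exactly the elementary route one would take, and your remarks about the trivial range $i>\min(|I|,|J|)$ and the coordinate-subspace identification cover the only places where care is needed.
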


Singular values are subadditive.
\begin{theorem}\label{th:weyl} \textit{(Weyl's Inequality for Singular Values~\cite{bhatia2013matrix})} 
For all $\gnrmtr, \gnrmtr' \in \Rkm $ and  $i,j\ge 1$ s.t. $i+j-1\le\min(m,n)$,
 \[
 \sigma_{i+j-1}(\gnrmtr+\gnrmtr')
 \le
 \sigma_i(\gnrmtr )+\sigma_j(\gnrmtr').
 \]
\end{theorem}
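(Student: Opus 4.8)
The plan is to derive this classical fact directly from the Courant--Fischer characterization of singular values recalled in the previous theorem, in its $\min$--$\max$ form $\sigma_\ell(A)=\min_{S:\dim(S)=m-\ell+1}\ \max_{v\in S:\|v\|=1}\|Av\|$ for $A\in\Rkm$. The key idea is to produce a single subspace of the appropriate dimension on which $A+A'$ is uniformly small, by intersecting the two subspaces that separately witness the $\min$--$\max$ values for $\sigma_i(A)$ and $\sigma_j(A')$.

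First I would fix orthonormal bases $v_1(A),\dots,v_m(A)$ and $v_1(A'),\dots,v_m(A')$ of $\reals^m$ made of right singular vectors (padding with zero singular values when $m>k$), and set $S_A:=\mathrm{span}\{v_i(A),\dots,v_m(A)\}$ and $S_{A'}:=\mathrm{span}\{v_j(A'),\dots,v_m(A')\}$. These have dimensions $m-i+1$ and $m-j+1$, and since $\|Av\|\le\sigma_i(A)$ for every unit $v\in S_A$ and $\|A'v\|\le\sigma_j(A')$ for every unit $v\in S_{A'}$, both subspaces attain the corresponding $\min$--$\max$ bounds. Next I would take $S:=S_A\cap S_{A'}$; the standard dimension inequality gives $\dim(S)\ge(m-i+1)+(m-j+1)-m=m-(i+j-1)+1$, which is a legitimate nonnegative subspace dimension exactly because $i+j-1\le m$. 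Restricting to any subspace of $S$ of dimension precisely $m-(i+j-1)+1$, every unit vector $v$ in it lies in both $S_A$ and $S_{A'}$, so by the triangle inequality for the Euclidean norm, $\|(A+A')v\|\le\|Av\|+\|A'v\|\le\sigma_i(A)+\sigma_j(A')$.

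Feeding this subspace into the $\min$--$\max$ formula for $\sigma_{i+j-1}(A+A')$ — permissible since $i+j-1\le\min(m,k)$ keeps the index in range — then yields $\sigma_{i+j-1}(A+A')\le\max_{v\in S:\|v\|=1}\|(A+A')v\|\le\sigma_i(A)+\sigma_j(A')$, which is the claim.

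The only delicate part is the dimension bookkeeping: one must use the correct index in Courant--Fischer (the witnessing subspace for $\sigma_\ell$ has codimension $\ell-1$), check that the intersection $S_A\cap S_{A'}$ stays large enough, and dispose of the trivial edge case where $i+j-1$ exceeds $\min(m,k)$ but not $m$, in which case $\sigma_{i+j-1}(A+A')=0$ and the bound holds vacuously. Everything else is just subadditivity of the vector norm. (An alternative route is to apply the Hermitian Weyl inequality to the dilations $\bigl(\begin{smallmatrix}0&A\\A^{\intercal}&0\end{smallmatrix}\bigr)$, but the variational argument above is self-contained given what has already been stated.)
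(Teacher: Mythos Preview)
The paper does not prove this theorem; it is stated with a citation to Bhatia and used as a black box. Your variational argument via the Courant--Fischer $\min$--$\max$ formula is the standard proof and is correct.

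One minor remark: the ``trivial edge case where $i+j-1$ exceeds $\min(m,k)$ but not $m$'' that you mention does not arise, since the hypothesis already stipulates $i+j-1\le\min(m,k)$ (the paper's $\min(m,n)$ is evidently a typo, as the matrices lie in $\Rkm$). So that sentence can simply be dropped.
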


\section{Pseudo Code for the Algorithm}\label{ap:code}

\begin{algorithm}[H]
  \caption{CURATED-SVD}\label{SVD}
  \textbf{Input} : Matrix $X$, $\wcon$ and $\tau$ \\
  \textbf{Output} : Matrix $\widehat{M}$
  \begin{algorithmic}[1]
    \STATE{$I^{\del} \gets \phi$}
        \WHILE{\texttt{true} } 
            \STATE {${\mathcal U}$ $\gets  R(X,\swgenX)_{{\Ielldel\comp} }$ \COMMENT{Recall $ R(X,\swgenX) = \snf\cdot X\cdot \snb$}} 
            \STATE \text{Perform rank $2r$ truncated SVD on {$\mathcal U$}  to get $\mathcal U^{(2r)}=\sum_{j=1}^{2r} \sigma_j u_j {v_j}^\intercal $}
            \STATE $\Ielldelold \gets \Ielldel $ 
            \FOR{$j\in [2r]$ }
                \STATE $\textsc{Row Deletion}(\sigma_j \cdot u_j,\, \snfx,\, \Ielldel, \,8\tau^2,\, \wcon)$ \COMMENT{to update ${\Ielldel}$} 
            \ENDFOR 
            \IF{$\Ielldelold == \Ielldel $}  
                \STATE Break;
            \ENDIF
        \ENDWHILE
    \STATE $M^{\text{cur}}\gets D^{\frac{1}{2}}(\snfx)\cdot \mathcal U^{(2r)} \cdot D^{\frac{1}{2}}(\snfx)$
    \COMMENT{Same as $(2r,\swgenX)$-SVD of {$X_{{\Ielldel\comp} }$}}
    \STATE Return( $M^{\text{cur}}$ )    
  \end{algorithmic}
\end{algorithm}

\begin{algorithm}[H]
  \caption{ROW DELETION}\label{deletion}
  \textbf{Input} : $u = \sigma_j \cdot u_j,\, w = \snfx,\, \Ielldel,\, V=8\tau^2,\, W=\wcon$\\
  \textbf{Output}: {Updated $\Ielldel$}
  \begin{algorithmic}[1]
    \WHILE{ True }
        \STATE Use 0.5-approx Algorithm~\cite{sarkar1992simple} for 0-1 knapsack problem to find subset $I\subseteq \Ielldel\comp$ with total weight $\sum_{i\in I}w(i)\le W $ and impact $\mathcal{H}(I,j) \ge 0.5\max_{I'\subseteq \Ielldel\comp : \sum_{i\in I'}w(i)\le W} \mathcal{H}(I',j)$\\
        \COMMENT{\textit{Comment: Recall that $\mathcal{H}(I,j) = \sum_{i\in I} \sigma^2_j u^2_j(i) = \sum_{i\in I} u(i)^2$} } 
        \IF{ $\mathcal{H}( I,j) \le V$ }
            \STATE Return; \COMMENT{\textit{Comment: This happens only when $\max_{I'\subseteq \Ielldel\comp : \sum_{i\in I'}w(i)\le \wcon} \mathcal{H}(I',j) \le 2V$}}
        \ENDIF
        \STATE $\Ielldel\gets\Ielldel\,\bigcup$ An element $i\in I$ where the probability of picking $i$ is proportional to  $\frac{u(i)^2}{w(i)}$
    \ENDWHILE
  \end{algorithmic}
\end{algorithm}


\section{Analysis of Curated SVD algorithm
}
\label{ap:algo}
{Subsection~\ref{proofobj2} shows that \whp Curated SVD achieves Objective~\ref{obj2}. 
Subsection~\ref{d2} shows that if $\Ielldel$ achieves both} objectives then regularized Curated SVD recovers $M_{{\Iellbad\comp} }$. 


\subsection{Curated SVD achieves Objective~\ref{obj2}}\label{proofobj2}
We first show that both the objectives can be achieved simultaneously.
\begin{lemma}\label{lem:ma}
Assume that the essential property~\ref{con:main} holds. There is a row-collection $\Iellstr$ of weight $\sum_{i\in\Iellstr}\swgenX \le \cO(k/\navg)$, such that for any subset $I\subseteq \Iellstr\comp$ of weight $\sum_{i\in I}\snfex\le \wcon $,
\[
|| R(X_{I},\swgenX) ||  \le 2\tau.
\]
\end{lemma}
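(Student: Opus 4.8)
The plan is to take $\Iellstr=\Iellbad\cup J$, where $\Iellbad$ is the contaminated set supplied by the essential property~\eqref{con:main} and $J$ is a small set of ``dense'' rows of the regularized signal matrix, constructed below. First observe that, since left/right multiplication by diagonal matrices preserves zeroed-out rows, $R(M_I,\swgenX)=R(M,\swgenX)_I$ and $R(N_I,\swgenX)=R(N,\swgenX)_I$ for every row set $I$; so by the interlacing property (Theorem~\ref{th:interlace}) and the essential property, for $I\subseteq\Iellstr\comp\subseteq\Iellbad\comp$ we get $\|R(N_I,\swgenX)\|=\|R(N,\swgenX)_I\|\le\|R(N,\swgenX)_{\Iellbad\comp}\|\le\tau$. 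Hence by the triangle inequality $\|R(X_I,\swgenX)\|\le\|R(M_I,\swgenX)\|+\tau$, and it suffices to build $J$ with $\snfx(J)=\cO(k/\navg)$ such that $\|R(M_I,\swgenX)\|\le\tau$ for every $I\subseteq J\comp$ with $\snfx(I)\le\wcon$. Since $\snfx(\Iellbad)\le\wcon\le\cO(k/\navg)$, adjoining $\Iellbad$ keeps the total weight of $\Iellstr$ at $\cO(k/\navg)$.

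To build $J$ I would peel: start with $J=\emptyset$, and while there exists a row set $I_0\subseteq[k]\setminus J$ with $\snfx(I_0)\le\wcon$ and $\|R(M,\swgenX)_{I_0}\|>\tau$, add one such $I_0$ to $J$. Recalling $R(M_I,\swgenX)=R(M,\swgenX)_I$, when the procedure halts $J\comp$ has exactly the required property, so the whole argument reduces to bounding $\snfx(J)$.

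For the weight bound, let $I_0^{(1)},\dots,I_0^{(L)}$ be the peeled sets; they are pairwise disjoint subsets of $[k]$, and each satisfies $\|R(M,\swgenX)_{I_0^{(\ell)}}\|^2>\tau^2$, so
\[
L\tau^2<\sum_{\ell=1}^{L}\big\|R(M,\swgenX)_{I_0^{(\ell)}}\big\|^2\le\sum_{\ell=1}^{L}\big\|R(M,\swgenX)_{I_0^{(\ell)}}\big\|_F^2=\sum_{\ell=1}^{L}\sum_{i\in I_0^{(\ell)}}\big\|R(M,\swgenX)_{i,\ast}\big\|_2^2\le\|R(M,\swgenX)\|_F^2.
\]
Here I would exploit that $R(M,\swgenX)$ has rank $\le r$, so $\|R(M,\swgenX)\|_F^2=\sum_{t=1}^{r}\sigma_t(R(M,\swgenX))^2\le r\,\|R(M,\swgenX)\|^2$, together with the regularization bound $\|R(M,\swgenX)\|=\cO(\sqrt r\,\navg)$, to get $\|R(M,\swgenX)\|_F^2=\cO(r^2\navg^2)$. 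With $\tau=\Theta(\sqrt\navg\,\log(r\navg))$ this yields $L=\cO(r^2\navg/\log^2(r\navg))$, and since each $\snfx(I_0^{(\ell)})\le\wcon=\cO(k/(r\navg)^2)$ we obtain $\snfx(J)\le L\wcon=\cO\!\big(k/(\navg\log^2(r\navg))\big)=\cO(k/\navg)$, as required.

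The hard part is the bound $\|R(M,\swgenX)\|=\cO(\sqrt r\,\navg)$. A crude Frobenius estimate only gives $\|R(M,\swgenX)\|_F^2\le\cO(k\navg^2)$, hence $\|R(M,\swgenX)\|=\cO(\sqrt k\,\navg)$, which is far too weak; improving the $\sqrt k$ to $\sqrt r$ genuinely requires using both that $M$ is rank $\le r$ and nonnegative. I expect this to go through the nonnegative-matrix estimate $\|A\|\le\sqrt{(\max_i\sum_j|A_{i,j}|)(\max_j\sum_i|A_{i,j}|)}$ applied to a suitably doubly-normalized version of $M$, tracking the per-entry reweighting $M_{i,j}/\max\{\navg,\|M_{\ast,j}\|_1\}$ exactly (rather than bounding it by $1$), combined with the rank constraint — essentially the deterministic core of the Grothendieck–Pietsch factorization argument used in the proof of Theorem~\ref{th:sp norm}. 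A secondary technical point: when $\navg=\order(\log k)$ the empirical weights $\snfex$ can fall well below the ideal weights $\max\{1,\|M_{i,\ast}\|_1/\navg\}$ for a few rows; those rows have small total weight and are absorbed into $\Iellbad$ (this is already built into the essential property), so for $i\notin\Iellstr$ one may assume $\snfex\gtrsim\max\{1,\|M_{i,\ast}\|_1/\navg\}$ throughout the estimates above.
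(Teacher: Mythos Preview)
Your peeling strategy is sound, but the piece you flag as ``the hard part''—the bound $\|R(M,\swgenX)\|=\cO(\sqrt r\,\navg)$—is a genuine gap, and the workaround in your last paragraph does not close it. The essential property~\eqref{con:main} says only that $\|R(N,\swgenX)_{\Iellbad\comp}\|\le\tau$; it does \emph{not} guarantee that $\snfex\gtrsim\max\{1,\|M_{i,\ast}\|_1/\navg\}$ for $i\notin\Iellbad$ (inspect the proof of Theorem~\ref{th:sp norm}: the underweighted rows are handled inside the $\ell_\infty\!\to\!\ell_2$ bound, not by putting them into $\Iellbad$). So you cannot assume the empirical weights tame every row of $M$ outside $\Iellbad$, and a global bound on $\|R(M,\swgenX)\|$ is not available.

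The fix is much simpler than Grothendieck--Pietsch. Peel only inside $\Iellbad\comp$ (this is all you need, since $\Iellstr=\Iellbad\cup J$). Then your Frobenius chain becomes
\[
L\tau^2<\|R(M,\swgenX)_{\Iellbad\comp}\|_F^2\le r\,\|R(M,\swgenX)_{\Iellbad\comp}\|^2,
\]
and the spectral bound drops out of the triangle inequality together with the essential property and the \emph{deterministic} estimate $\|R(X,\swgenX)\|\le\navg$ from Lemma~\ref{dampcon} (which holds because the weights $\snfex$ are defined from $X$ itself):
\[
\|R(M,\swgenX)_{\Iellbad\comp}\|\le\|R(X,\swgenX)_{\Iellbad\comp}\|+\|R(N,\swgenX)_{\Iellbad\comp}\|\le\navg+\tau=\cO(\navg).
\]
This gives $L=\cO(r\navg^2/\tau^2)$ and $\snfx(J)\le L\wcon=\cO(k/\navg)$, exactly as required.

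For comparison, the paper's proof peels on $R(X,\swgenX)_{\Iellbad\comp}$ directly rather than on $R(M,\swgenX)_{\Iellbad\comp}$. Since that matrix is only \emph{approximately} low rank ($\sigma_{r+1}\le\tau$ by Weyl), your Frobenius trick does not apply, and the paper instead invokes a singular-value counting argument (Lemma~\ref{lem:number}) to bound the number of disjoint bad sets by $(r\navg/\tau)^2$. Both routes give the same $\cO(k/\navg)$ weight bound; yours (once restricted to $\Iellbad\comp$) is actually a factor~$r$ tighter on the count, at the cost of detouring through $M$.
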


\paragraph{Implication of the above Lemma} It is easy to see that for any subset $I\subseteq [k]$, $|| R(X_{I},\swgenX) ||^2$ upper bounds the impact $R(X_{I},\swgenX)$ can have on the SVD. Hence, if we let $\Ielldel=\Iellstr$ then both the objectives will be satisfied. Although, the above lemma shows the existence of $\Iellstr$ with small weight, it doesn't give a computationally efficient way to find $\Iellstr$. We later show that Curated-SVD finds $\Ielldel$ efficiently, with expected weight only twice that of $\Iellstr$.

In proving the above lemma, we use the following two auxiliary lemmas, which are stated for general matrices, and the proofs of both the auxiliary lemmas appear in the Appendix~\ref{app:LA}, with the proof of many other Linear algebra results used in the paper.

\begin{lemma}\label{dampcon}
For any matrix $\gnrmtr$ and weight vectors $w^\front$ and $w^\back$ with positive entries
\[
||D^{-\frac12}(w^\front)\cdot\gnrmtr\cdot D^{-\frac12}(w^\back)|| \leq  \sqrt{\max_{i}\frac{||\gnrmtr_{i,*}||_1}{w^\front(i)} \times \max_{j}\frac{||\gnrmtr_{*,j}||_1}{w^\back(j)}}.
\]
\end{lemma}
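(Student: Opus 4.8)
Write $B := D^{-\frac12}(w^\front)\cdot\gnrmtr\cdot D^{-\frac12}(w^\back)$, so that $B_{ij} = \gnrmtr_{ij}/\sqrt{w^\front(i)\,w^\back(j)}$, and set
\[
R := \max_i \frac{||\gnrmtr_{i,*}||_1}{w^\front(i)}, \qquad C := \max_j \frac{||\gnrmtr_{*,j}||_1}{w^\back(j)}.
\]
The plan is to prove $||B|| \le \sqrt{RC}$. If $R=0$ or $C=0$ then $\gnrmtr$ is the zero matrix and there is nothing to do, so I would assume $R,C>0$. First I would pass to the variational form of the spectral norm, $||B|| = \sup_{||u||=||v||=1} |u^\intercal B v|$, and fix an arbitrary pair of unit vectors $u,v$; it suffices to bound the single bilinear form $|u^\intercal B v|$ by $\sqrt{RC}$.

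For such $u,v$, expand $|u^\intercal B v| \le \sum_{i,j} \frac{|\gnrmtr_{ij}|}{\sqrt{w^\front(i)\,w^\back(j)}}\,|u_i|\,|v_j|$ and rewrite each summand as $\sqrt{a_{ij}\,b_{ij}}$ with $a_{ij} := |\gnrmtr_{ij}|\,u_i^2/w^\front(i)$ and $b_{ij} := |\gnrmtr_{ij}|\,v_j^2/w^\back(j)$. The key step is the weighted AM--GM bound $\sqrt{a_{ij}b_{ij}} \le \tfrac{t}{2} a_{ij} + \tfrac{1}{2t} b_{ij}$, valid for every $t>0$; carrying the free parameter $t$ through is precisely what will yield the geometric mean $\sqrt{RC}$ rather than the weaker arithmetic mean $\tfrac12(R+C)$ one gets from $t=1$. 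Summing over $i,j$ gives $|u^\intercal B v| \le \tfrac{t}{2}\sum_{i,j} a_{ij} + \tfrac{1}{2t}\sum_{i,j} b_{ij}$.

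It then remains to bound the two sums by $R$ and $C$. For the first,
\[
\sum_{i,j} a_{ij} = \sum_i \frac{u_i^2}{w^\front(i)}\sum_j |\gnrmtr_{ij}| = \sum_i \frac{u_i^2\,||\gnrmtr_{i,*}||_1}{w^\front(i)} \le R\sum_i u_i^2 = R,
\]
using $||u||=1$, and symmetrically $\sum_{i,j} b_{ij} \le C$. Hence $|u^\intercal B v| \le \tfrac{tR}{2} + \tfrac{C}{2t}$, and choosing $t=\sqrt{C/R}$ makes the right-hand side equal to $\sqrt{RC}$; taking the supremum over unit $u,v$ finishes the argument. I do not expect a genuine obstacle here: the computation is a routine Cauchy--Schwarz/AM--GM estimate, and the only points that need care are keeping the scaling parameter $t$ in the AM--GM step and disposing of the degenerate case $R=0$ or $C=0$ at the outset.
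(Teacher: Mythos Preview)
Your proof is correct and is essentially the same argument as the paper's: both split each entry $\frac{|A_{ij}|}{\sqrt{w^\front(i)w^\back(j)}}$ into two factors and apply Cauchy--Schwarz (your AM--GM with the optimized parameter $t=\sqrt{C/R}$ is exactly Cauchy--Schwarz for two nonnegative numbers). The only cosmetic difference is that the paper works with the one-vector form $\|B\|=\max_{\|v\|=1}\|Bv\|$ and applies Cauchy--Schwarz to the inner sum $\sum_j$ inside $\|Bv\|^2$, whereas you use the bilinear form $\sup_{\|u\|=\|v\|=1}|u^\intercal Bv|$; both routes collapse to the same row/column sum bounds $R$ and $C$.
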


We use the above lemma to bound the spectral norm of the regularized observation matrix $R(X,\swgenX)$. Applying the lemma for matrix $A=X$ and weights $w^\front=\snfx$ and $w^\back=\snbx$ we get:
\begin{align}\label{eq:hw}
||R(X,\swgenX)|| \le \sqrt{\max_{i}\frac{||X_{i,*}||_1}{\snfex} \times \max_{j}\frac{||X_{*,j}||_1}{\snbex}} \le \navg 
.
\end{align}

Next, we state the second auxiliary lemma required in proving Lemma~\ref{lem:ma}
\begin{lemma}\label{lem:number}
Let $\gnrmtr $ be an $k\times m$ matrix such that $\sigma_{1}(\gnrmtr )\leq \alpha $ and $\sigma_{r+1}(\gnrmtr )\leq \beta$. Then the number of disjoint row subsets ${I}\subset [k] $ such that $||\gnrmtr _{{I} }||> 2\beta$ is at most $\Big(\frac{r\alpha}{\beta}\Big)^2 $.
\end{lemma}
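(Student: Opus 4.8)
The plan is to reduce the statement to a Frobenius-norm ``budget'' argument on the rank-$r$ part of $\gnrmtr$. Write $\gnrmtr = \sum_i \sigma_i u_i v_i^\intercal$ for the SVD, let $P := \sum_{i=1}^r v_i v_i^\intercal$ be the orthogonal projection onto the span of the top-$r$ right singular vectors, and note $\gnrmtr P = \gnrmtr^{(r)}$, the rank-$r$ truncated SVD. Then $\gnrmtr - \gnrmtr P = \sum_{i>r}\sigma_i u_i v_i^\intercal$ has spectral norm $\sigma_{r+1}(\gnrmtr)\le\beta$, while $\|\gnrmtr P\|_F^2 = \sum_{i=1}^r\sigma_i(\gnrmtr)^2 \le r\,\sigma_1(\gnrmtr)^2 \le r\alpha^2$. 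The point I want to exploit is that every ``offending'' row subset must consume more than $\beta^2$ of this Frobenius budget of $\gnrmtr P$, and, since the subsets are disjoint, they consume pairwise disjoint portions of it.

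Concretely, I would fix pairwise disjoint row subsets $I_1,\dots,I_t\subseteq[k]$ with $\|\gnrmtr_{I_\ell}\|>2\beta$ for every $\ell$, and for each $\ell$ decompose $\gnrmtr_{I_\ell} = (\gnrmtr P)_{I_\ell} + (\gnrmtr - \gnrmtr P)_{I_\ell}$, where the subscript $I_\ell$ denotes zeroing out all rows outside $I_\ell$, as in Section~\ref{sec:recoveralg}. Since zeroing out rows cannot increase any singular value (Theorem~\ref{th:interlace}), $\|(\gnrmtr - \gnrmtr P)_{I_\ell}\| \le \|\gnrmtr - \gnrmtr P\| \le \beta$, so by the triangle inequality $\|(\gnrmtr P)_{I_\ell}\| \ge \|\gnrmtr_{I_\ell}\| - \|(\gnrmtr - \gnrmtr P)_{I_\ell}\| > 2\beta - \beta = \beta$. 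Since the Frobenius norm dominates the spectral norm, this gives $\|(\gnrmtr P)_{I_\ell}\|_F^2 > \beta^2$ for every $\ell$.

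Finally I would sum over $\ell$, using disjointness of the $I_\ell$ so that the rows counted across the $(\gnrmtr P)_{I_\ell}$ are distinct rows of $\gnrmtr P$: $t\,\beta^2 < \sum_{\ell=1}^t \|(\gnrmtr P)_{I_\ell}\|_F^2 = \sum_{i\in\bigcup_\ell I_\ell}\|(\gnrmtr P)_{i,*}\|^2 \le \|\gnrmtr P\|_F^2 \le r\alpha^2$. Hence $t < r\alpha^2/\beta^2 \le (r\alpha/\beta)^2$ since $r\ge 1$, which is the claimed bound (in fact slightly stronger).

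I do not expect a genuine obstacle here; the only thing requiring care is the bookkeeping of which norm inequality points which way --- the spectral norm shrinks under row deletion, whereas the Frobenius norm of a submatrix is at most that of the full matrix and also upper-bounds that submatrix's own spectral norm --- together with the identification $\gnrmtr P=\gnrmtr^{(r)}$ so that $\|\gnrmtr P\|_F^2=\sum_{i\le r}\sigma_i(\gnrmtr)^2$ and $\|\gnrmtr-\gnrmtr P\|=\sigma_{r+1}(\gnrmtr)$. The argument also handles the degenerate case $r+1>\min\{k,m\}$ for free: then $\sigma_{r+1}(\gnrmtr)=0$, no subset can satisfy $\|\gnrmtr_{I}\|>2\beta\ge 0$, so there is nothing to bound.
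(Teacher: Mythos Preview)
Your proof is correct and, in fact, cleaner than the paper's. Both arguments begin identically: split $\gnrmtr=\gnrmtr^{(r)}+(\gnrmtr-\gnrmtr^{(r)})$, use interlacing plus the triangle inequality to conclude $\|\gnrmtr^{(r)}_{I_\ell}\|>\beta$ for each offending block. From there the paper works with the $\ell_1$ budget $\sum_{i\le r}\sigma_i(\gnrmtr)\le r\alpha$: it picks, for each $I_\ell$, a unit $v$ in $\mathrm{span}\{v_1,\dots,v_r\}$ with $\|\gnrmtr^{(r)}_{I_\ell}v\|\ge\beta$, expands to get $\sum_{i\le r}\|\gnrmtr^{(r)}_{I_\ell}v_i\|\ge\beta$, then uses that the vectors $\gnrmtr^{(r)}_{I_j}v_i$ have disjoint row-supports (hence are orthogonal) together with an AM--QM step to obtain $\sqrt{t}\,\beta\le\sum_{i\le r}\|\gnrmtr^{(r)}v_i\|=\sum_{i\le r}\sigma_i\le r\alpha$. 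You instead use the $\ell_2$ (Frobenius) budget $\|\gnrmtr^{(r)}\|_F^2=\sum_{i\le r}\sigma_i^2\le r\alpha^2$ and the trivial row-additivity of $\|\cdot\|_F^2$ over disjoint $I_\ell$, yielding $t\beta^2<r\alpha^2$. Your route avoids the orthogonality/AM--QM maneuver entirely and gives the sharper bound $t<r\alpha^2/\beta^2$, which is smaller than the paper's $(r\alpha/\beta)^2$ by a factor of~$r$. One small quibble: your closing sentence about the degenerate case $r+1>\min\{k,m\}$ is not quite right --- $\sigma_{r+1}(\gnrmtr)=0$ does not by itself rule out $\|\gnrmtr_I\|>2\beta$ for a general $\beta>0$ --- but this is harmless, since your main argument already covers that case (then $\gnrmtr P=\gnrmtr$ and the budget inequality goes through unchanged).
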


Using the above two auxiliary lemmas, we prove Lemma~\ref{lem:ma}.

\textit{Proof of Lemma~\ref{lem:ma}.}
Using Weyl's inequality~\ref{th:weyl}
\begin{align}
\sigma_{r+1}(R(X,\swgenX)_{{\Iellbad\comp} })&\le \sigma_{r+1}(R(M,\swgenX)_{{\Iellbad\comp} }) +\sigma_{1}(R(X-M,\swgenX)_{{\Iellbad\comp} })\nonumber\\
&\overset{\text{(a)}}\le \sigma_{1}(R(N,\swgenX)_{{\Iellbad\comp} })= ||R(N,\swgenX)_{{\Iellbad\comp} }||\le \tau,\label{sad2}
\end{align}
here (a) uses $X-M=N$, and the fact that $M_{{\Iellbad\comp} }$ has rank $r$. 

From equation~\eqref{eq:hw} and Theorem~\ref{th:interlace}, 
\begin{equation}\label{sad22}
     \sigma_{1}(R(X,\swgenX)_{{\Iellbad\comp} }) \leq \navg 
     .
\end{equation}
Applying Lemma \ref{lem:number} for $\gnrmtr = R(X,\swgenX)_{{\Iellbad\comp} }$, and using \eqref{sad2} and \eqref{sad22} shows that the number of disjoint subsets $I\subset\Iellbad\comp$ such that $|| R(X_{I},\swgenX) || > 2\tau$ is at most $\Paren{\frac{r\navg}{\tau}}^2$.

Therefore, the number of disjoint subsets $I\subset \Iellbad\comp$, such that $|| R(X_{I},\swgenX) || > 2\tau$ and weight $\snfx(I)\le \wcon$, is also at most $\Paren{\frac{r\navg}{\tau}}^2$. Since every such subset $I$ has weight $\snfx(I)\le \wcon$, the combined total weight of all such disjoint subsets is $\le \wcon\cdot \Paren{\frac{r\navg}{\tau}}^2$.

Let $\Iellstr\subseteq[k] $ be the subset formed by combining all these disjoint subsets, and $\Iellbad$. It is clear from the construction of $\Iellstr$ that, for every subset $I\subset \Iellstr\comp$ with weight $\snfx(I)\le \wcon $,
\[
 || R(X_{I},\swgenX) || \le 2\tau.  
\]
Finally, noting that the total weight of subset $\Iellstr$ is at most $ \wcon\cdot\Paren{\frac{r\navg}{\tau}}^2 + \sum_{i\in \Iellbad}\snfex \le \wcon \cdot\Paren{\frac{r\navg}{\tau}}^2+\wcon$, and using $\wcon= \cO(k/(r\navg)^2)$, $\tau = \cO(\sqrt{\navg}\log (r\navg))$ completes the proof.
\hfill$\blacksquare$



The next lemma bounds the {expected weight} of set $\Ielldel$ zeroed out by Curated SVD, by showing that it is at most twice the weight of $\Iellstr$. 
\begin{lemma} Assume that essential property~\ref{con:main} holds. When Curated SVD terminates, the total weight of the final set of rows $\Ielldel$ satisfy,
\[
\E\Big[ \snfx(\Ielldel) \Big] \le \cO(k/\navg).
\]
\end{lemma}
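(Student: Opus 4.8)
The plan is to track, addition by addition, how much of the weight that Curated SVD moves into $\Ielldel$ comes from the ``heavy'' set $\Iellstr$ supplied by Lemma~\ref{lem:ma}, and to show that in expectation at least as much weight is taken from $\Iellstr$ as from its complement. Since $\snfx(\Iellstr)\le\cO(k/\navg)$, this bounds $\E[\snfx(\Ielldel)]$ by roughly twice that.

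First I would record the consequence of Lemma~\ref{lem:ma} that drives the argument. Fix $\Iellstr$ with $\snfx(\Iellstr)\le\cO(k/\navg)$ and $\|R(X_I,\swgenX)\|\le 2\tau$ for every $I\subseteq\Iellstr\comp$ with $\snfx(I)\le\wcon$. At any point of the run, with current removed set $\Ielldel$ and a component $j\in[2r]$ of $\big(R(X,\swgenX)_{\Ielldel\comp}\big)^{(2r)}=\sum_{j}\sigma_j u_j v_j^\intercal$, observe that for $I\subseteq\Ielldel\comp$ the matrix $R(X,\swgenX)_{\Ielldel\comp}$ restricted to the rows of $I$ is exactly $R(X_I,\swgenX)$ (zeroing rows commutes with the diagonal scalings), so
\[
\mathcal{H}\big(R(X,\swgenX)_{\Ielldel\comp},I,j\big)=\|R(X_I,\swgenX)\,v_j\|^2\le\|R(X_I,\swgenX)\|^2 .
\]
Hence whenever additionally $I\subseteq\Iellstr\comp$ and $\snfx(I)\le\wcon$ the impact is $\le 4\tau^2$. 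The knapsack subset $I$ produced by Row-Deletion always obeys $\snfx(I)\le\wcon$, and a row is only added from it when $\mathcal{H}(\cdot,I,j)>8\tau^2$; splitting $I=(I\cap\Iellstr)\sqcup(I\cap\Iellstr\comp)$, using additivity of impact over disjoint row sets and the bound $\mathcal{H}(\cdot,I\cap\Iellstr\comp,j)\le4\tau^2$ (as $I\cap\Iellstr\comp\subseteq\Iellstr\comp$ has weight $\le\wcon$), I get at every such step
\[
\mathcal{H}\big(R(X,\swgenX)_{\Ielldel\comp},\,I\cap\Iellstr,\,j\big)>4\tau^2\ge\mathcal{H}\big(R(X,\swgenX)_{\Ielldel\comp},\,I\cap\Iellstr\comp,\,j\big).
\]

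Next I would convert this into an inequality on expected added weight. Since Row-Deletion picks $i\in I$ with probability proportional to $\mathcal{H}(\cdot,i,j)/\snfex$, the expected weight a step contributes to $\Ielldel\cap\Iellstr$ equals $\mathcal{H}(\cdot,I\cap\Iellstr,j)/Z$ and the expected weight it contributes to $\Ielldel\cap\Iellstr\comp$ equals $\mathcal{H}(\cdot,I\cap\Iellstr\comp,j)/Z$ with the same normalizer $Z=\sum_{i\in I}\mathcal{H}(\cdot,i,j)/\snfex$; by the previous display the former dominates. Letting $a_t=\snfx(\Ielldel\cap\Iellstr)$ and $b_t=\snfx(\Ielldel\cap\Iellstr\comp)$ after the $t$-th row addition of the whole run (across all outer iterations and all $j$), and freezing the process once the algorithm halts, this says $\E[(b_{t+1}-a_{t+1})-(b_t-a_t)\mid\mathcal F_t]\le 0$, i.e. $b_t-a_t$ is a supermartingale; as each addition strictly enlarges $\Ielldel$, the number $T$ of additions satisfies $T\le k$, and telescoping (or optional stopping with bounded $T$ and bounded increments) gives $\E[b_T-a_T]\le 0$. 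Finally $a_T=\snfx(\Ielldel\cap\Iellstr)\le\snfx(\Iellstr)\le\cO(k/\navg)$ deterministically, hence $\E[b_T]\le\E[a_T]\le\cO(k/\navg)$ and $\E[\snfx(\Ielldel)]=\E[a_T+b_T]\le 2\,\snfx(\Iellstr)\le\cO(k/\navg)$.

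The main obstacle I anticipate is the per-step inequality and making it hold \emph{uniformly} throughout the run: $\Ielldel$, and therefore the rank-$2r$ truncated SVD and its singular vectors $v_j$, change from call to call, so one must check that the crucial bound $\mathcal{H}(\cdot,I\cap\Iellstr\comp,j)\le4\tau^2$ rests only on Lemma~\ref{lem:ma} (which concerns the fixed matrix $X$ and the fixed set $\Iellstr$) and on the restriction identity above, not on the current $\Ielldel$. Once that is in place, the remaining supermartingale bookkeeping over a random but almost surely bounded number of additions is routine.
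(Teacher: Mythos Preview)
Your proposal is correct and follows essentially the same approach as the paper's proof: both split the impact of the knapsack set $I$ across $\Iellstr$ and $\Iellstr\comp$ via Lemma~\ref{lem:ma} to get $\mathcal{H}(\cdot,I\cap\Iellstr,j)\ge\mathcal{H}(\cdot,I\cap\Iellstr\comp,j)$, translate this through the sampling rule into a per-step expected-weight inequality, and finish with a supermartingale / optional stopping argument to bound $\E[\snfx(\Ielldel)]$ by $2\,\snfx(\Iellstr)$. Your treatment is in fact slightly more careful than the paper's in that you explicitly verify the impact bound is independent of the evolving $\Ielldel$ and that the number of additions is deterministically bounded.
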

\begin{proof}
Recall that the rows get added to $\Ielldel$ one by one. Let $Z_t$ denotes the weight of the $t^{th}$ row that gets added to $\Ielldel$. Let indicator random variable $\mathbbm{1}_t(\Iellstr) = 1$, if the $t^{th}$ that gets added in $\Ielldel$ belongs to $\Iellstr$ and similarly indicator random variable $\mathbbm{1}_t(\Iellstr\comp) = 1$, if the $t^{th}$ that gets added in $\Ielldel$ doesn't belongs to $\Iellstr$. Clearly $Z_t = Z_t\cdot\mathbbm{1}_t 
(\Iellstr)+Z_t \mathbbm{1}_t(\Iellstr\comp) $. 
We first show that for any $t$,
\[
\E[Z_t\cdot\mathbbm{1}_t(\Iellstr\comp)]\le \E[Z_t\cdot\mathbbm{1}_t(\Iellstr)]. 
\]
Suppose that the $t^{th}$ row added to $\Ielldel $ was chosen probabilistically from the row subset $I\subseteq \Ielldel\comp$ in Row deletion in step (6). Recall that Row-Deletion adds rows only from the row subsets $I$ that have weight $\le\wcon$ and impact at least $8\tau^2$, on one of the components of SVD.
Then $\snfx(I)\le \wcon $ and some $j\in [2r] $, and its impact
\[
\mathcal{H}(I,j) > 8\tau^2.
\]
We decompose $\mathcal{H}(I,j)$ in two parts as
\begin{align*}
    \mathcal{H}(I,j) = \mathcal{H}(I \cap \Iellstr,j) +\mathcal{H}(I \setminus \Iellstr ).
\end{align*}
Since, the weight of $I$ is $\le \wcon$, the weight of $I' = I \setminus \Iellstr$ is also $\le \wcon$. Applying Lemma~\ref{lem:ma} on $I'$ implies that
\begin{align*}
    \mathcal{H}(I \setminus \Iellstr, j ) \le 4\tau^2.
\end{align*}
Combining the last three equation gives, 
\begin{align*}
    \mathcal{H}(I \cap \Iellstr,j)\ge 4\tau^2.
\end{align*}
Next, note that
\[
\E[Z_t\cdot\mathbbm{1}_t(\Iellstr)] = \sum_{i\in I \cap \Iellstr} \Pr[i^{th} \text{ row is picked }]\cdot \snfex \ {\propto} \sum_{i\in I \cap \Iellstr} \frac{\mathcal{H}(i,j)}{\snfex}\cdot \snfex = \mathcal{H}(I \cap \Iellstr,j),
\]
where we used the fact that the probability of adding row $i$ to $\Ielldel$ in Row-Deletion procedure is proportional to $\frac{\sigma_j^2u_j^2(i)}{\snfex}=\frac{\mathcal{H}(i,j)}{\snfex}$

A similar calculation implies that
\[
\E[Z_t\cdot\mathbbm{1}_t(\Iellstr\comp)] \propto  \mathcal{H}(I \setminus \Iellstr,j).
\]
Combining the last four equations show,
\[
\E[Z_t\cdot\mathbbm{1}_t(\Iellstr\comp)]\le \E[Z_t\cdot\mathbbm{1}_t(\Iellstr)].
\]

Let random variable $\ell$ denote the total number of rows that are added to $\Ielldel$ before algorithm stops. 
Then using the optional stopping theorem for supermartingale $\big(Z_t\cdot\mathbbm{1}_t(\Iellstr\comp)- Z_t\cdot\mathbbm{1}_t(\Iellstr)\big)$ implies that when algorithm stops after putting $\ell$ rows in $\Ielldel$,
\[
\E[\sum_{t=1}^{\ell} \big(Z_t\cdot\mathbbm{1}_t(\Iellstr\comp)- Z_t\cdot\mathbbm{1}_t(\Iellstr)\big)]\le 0.
\]
And since, the total weight of the rows in $\Iellstr$ that gets added to $\Ielldel$ is at-most the total weight of rows in $\Iellstr$, using Lemma~\ref{lem:ma}, we get 
\[
\sum_{t=1}^{\ell} Z_t\cdot\mathbbm{1}_t(\Iellstr)\le \cO(k/\navg).
\]
Combining the above two equations bounds the expected total weight that gets added to $\Ielldel$
\[
\E[\sum_{t=1}^{\ell} Z_t] = \E[\sum_{t=1}^{\ell} Z_t\cdot\mathbbm{1}_t(\Iellstr\comp)]+ \E[\sum_{t=1}^{\ell} Z_t\cdot\mathbbm{1}_t(\Iellstr)]\le 2\E[\sum_{t=1}^{\ell} Z_t\cdot\mathbbm{1}_t(\Iellstr)]\le 2\cO(k/\navg).\qedhere
\]

\begin{lemma}\label{reprun} Assume that essential property~\ref{con:main} holds. If Curated SVD is run $\cO(\log k)$ times, then w.p. $>1-k^{-O(1)}$, at-least one of the runs finds $\Ielldel$ s.t.,
\[
\snfx(\Ielldel) \le \cO(k/\navg).
\]
\end{lemma}
\begin{proof}
From Markov's inequality and the previous lemma, we get 
\[
\Pr[\snfx(\Ielldel) \ge  5\times \cO(k/\navg)\Big)]\le \frac{\cO(k/\navg)}{5\times\cO(k/\navg)}\le 1/5.
\]
Therefore, w.p. $\ge 4/5$, Curated SVD finds $\Ielldel$ such that,
\[
\snfx(\Ielldel) \le 5\times\cO(k/\navg).
\] 
Hence, if we run Curated SVD for $\cO(\log k)$ times, with probability $1-k^{-O(1)}$, at-least one of the runs will find $\Ielldel$ that satisfy the above equation and Objective~\ref{obj2}.
\end{proof}


\end{proof}

\subsection{Objectives~\ref{obj1} and~\ref{obj2} implies recovery}\label{d2}

\begin{lemma}\label{lem:algper}
Assume that essential property~\eqref{con:main} holds and let $\Ielldel$ be any row subset satisfying objectives~\ref{obj1} and~\ref{obj2} and let $\widehat M$ be $(2r,\swgenX)$-SVD of $X_{\Ielldel\comp}$, then 
\[
||M_{\Iellbad^c}-\widehat M||_1\le \cO(k\sqrt{r\navg} \log(r\navg)).
\]
\end{lemma}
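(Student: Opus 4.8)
The plan is to peel off the rows that Curated SVD deletes and recover the rest with the de-regularized truncated-SVD machinery already in the paper. Write $\Iellbd$ for $\Iellbad\cup\Ielldel$. Since $M_{\Iellbad\comp}-M_{\Iellbd\comp}=M_{\Ielldel\setminus\Iellbad}$, the triangle inequality reduces the claim to bounding $\norm1{M_{\Ielldel\setminus\Iellbad}}$ and $\norm1{M_{\Iellbd\comp}-\widehat M}$. For the first term I would use that a row $i$ is placed in $\Iellbad$ by Theorem~\ref{th:sp norm} precisely when its true mass $\norm1{M_{i,*}}$ greatly exceeds its observed mass, so for $i\notin\Iellbad$ one has $\norm1{M_{i,*}}=\cO(\navg\cdot\snfex)$; with Objective~\ref{obj2} this gives $\norm1{M_{\Ielldel\setminus\Iellbad}}\le\cO(\navg)\cdot\snfx(\Ielldel)=\cO(k)$, dominated by the stated bound. (Throughout I use $\sum_i\snfex,\sum_j\snbex=\cO(k)$, which holds w.h.p.\ since $\mathbb{E}\norm1X=\norm1M=k\navg$.)

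For the main term I would use the explicit form of the estimator. Writing the SVD $\big(R(X,\swgenX)_{\Ielldel\comp}\big)^{(2r)}=\sum_{j=1}^{2r}\sigma_j u_j v_j^\intercal$, let $\Pi$ be the orthogonal projector onto $V:=\mathrm{span}(v_1,\dots,v_{2r})$ and $\Pi^\perp$ its complement, so that $\widehat M=D^{\frac12}(\snfx)\,R(X,\swgenX)_{\Ielldel\comp}\,\Pi\,D^{\frac12}(\snbx)$. Splitting $R(X,\swgenX)_{\Ielldel\comp}=R(X,\swgenX)_{\Iellbd\comp}+R(X,\swgenX)_{\Iellbad\setminus\Ielldel}$ gives $\widehat M=T_1+T_2$, with $T_1,T_2$ the de-regularized, $\Pi$-projected images of the two pieces. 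For $T_2$: the surviving contaminated rows carry weight $\snfx(\Iellbad\setminus\Ielldel)\le\wcon$, so each has $\snfex\le\wcon$, and Objective~\ref{obj1} gives $\norm{}{R(X,\swgenX)_{\Iellbad\setminus\Ielldel}v_j}\le4\tau$ for all $j\le 2r$, hence $\norm{}{R(X,\swgenX)_{\Iellbad\setminus\Ielldel}\Pi}\le4\sqrt{2r}\,\tau$ and $\norm{}{D^{\frac12}(\snfx)R(X,\swgenX)_{\Iellbad\setminus\Ielldel}\Pi}\le4\sqrt{2r\wcon}\,\tau$. Each column of $T_2$ then has $L_2$ norm $\cO(\sqrt{r\wcon}\,\tau)$, so $\norm F{T_2}^2\le\cO(r\wcon\tau^2)\sum_j\snbex=\cO(kr\wcon\tau^2)$, and since $\rank(T_2)\le 2r$ and $\wcon=\cO(k/(r\navg)^2)$,
\[
\norm1{T_2}\le\sqrt{2r}\;\norm F{T_2}=\cO\!\big(r\sqrt{k\wcon}\,\tau\big)=\cO\!\big(k\tau/\navg\big),
\]
comfortably within $\cO(k\sqrt{r\navg}\log(r\navg))$.

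For $T_1$ I would compare with $M_{\Iellbd\comp}=D^{\frac12}(\snfx)R(M,\swgenX)_{\Iellbd\comp}D^{\frac12}(\snbx)$. Using $X=M+N$,
\[
M_{\Iellbd\comp}-T_1=D^{\frac12}(\snfx)\Big(R(M,\swgenX)_{\Iellbd\comp}\,\Pi^\perp-R(N,\swgenX)_{\Iellbd\comp}\,\Pi\Big)D^{\frac12}(\snbx).
\]
The second term has rank $\le 2r$ and $\norm{}{R(N,\swgenX)_{\Iellbd\comp}\Pi}\le\norm{}{R(N,\swgenX)_{\Iellbad\comp}}\le\tau$ by the essential property~\eqref{con:main}, so the bound $\norm1{D^{\frac12}(w^\front)CD^{\frac12}(w^\back)}\le\sqrt{\rank(C)(\sum_i w^\front(i))(\sum_j w^\back(j))}\,\norm{}C$ that underlies Lemma~\ref{th:recovery} caps its $L_1$ norm at $\cO(k\sqrt r\,\tau)$. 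The first term has rank $\le r$, so its $L_1$ norm is $\cO(k\sqrt r)\cdot\norm{}{R(M,\swgenX)_{\Iellbd\comp}\Pi^\perp}$, and the estimate rests on the subspace-alignment bound
\[
\norm{}{R(M,\swgenX)_{\Iellbd\comp}\,\Pi^\perp}=\cO(\tau),
\]
that is, the recovered top-$2r$ subspace $V$ essentially contains the regularized row space of the rank-$r$ signal.

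I expect this last bound to be the hard part. Writing $R(M,\swgenX)_{\Iellbd\comp}\Pi^\perp=R(X,\swgenX)_{\Iellbd\comp}\Pi^\perp-R(N,\swgenX)_{\Iellbd\comp}\Pi^\perp$, the noise part is again $\le\tau$, and $\norm{}{R(X,\swgenX)_{\Iellbd\comp}\Pi^\perp}\le\sigma_{2r+1}\big(R(X,\swgenX)_{\Ielldel\comp}\big)$, which by Weyl's inequality (Theorem~\ref{th:weyl}) and $\rank\big(R(M,\swgenX)_{\Ielldel\comp}\big)\le r$ is at most $\sigma_{r+1}\big(R(N,\swgenX)_{\Ielldel\comp}\big)$. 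The obstacle is that $\Ielldel\comp$ still contains the contaminated rows $\Iellbad\setminus\Ielldel$, whose individual regularized norms can be as large as $\Theta(\navg)\gg\tau$, so this quantity is not controlled by naive triangle or Weyl steps; one must rule out a spurious large $(2r{+}1)$-st singular value by combining (i)~Objective~\ref{obj1}, which forces the surviving contaminated rows to have tiny impact on each of the first $2r$ components, so such a direction would have to be ``pure noise'' concentrated on a low-weight row set, with (ii)~Lemma~\ref{lem:ma}, by which every low-weight row set outside the $\cO((r\navg/\tau)^2)$ exceptional blocks collected into $\Iellstr$ has regularized spectral norm $\le 2\tau$ --- a contradiction. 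Assembling the three $L_1$ estimates via the Frobenius bound underlying Lemma~\ref{th:recovery}, together with $\tau=\cO(\sqrt\navg\log(r\navg))$ and the first-term bound, then yields $\norm1{M_{\Iellbad\comp}-\widehat M}=\cO(k\sqrt{r\navg}\log(r\navg))$.
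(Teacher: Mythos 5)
Your decomposition
\[
M_{\Iellbad\comp}-\widehat M \;=\; M_{\Ielldel\setminus\Iellbad}\;+\;\big(M_{\Iellbd\comp}-T_1\big)\;-\;T_2
\]
is essentially the same three-part triangle inequality the paper uses (it splits $\mtrmn_{\Iellbad\comp}-\mtrnew$ into $(\mtrmn-\mtrnew)_{\Iellbd\comp}$, $\mtrmn_{\Ielldel\setminus\Iellbad}$ and $\mtrnew_{\Iellbad\setminus\Ielldel}$), and you correctly identify the crux: controlling the $(2r{+}1)$-st singular value of $R(X,\swgenX)_{\Ielldel\comp}$, or equivalently $\norm{}{R(M,\swgenX)_{\Iellbd\comp}\Pi^\perp}$. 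But at exactly that point the proposal stops being a proof. The paragraph that ``rules out a spurious large $(2r{+}1)$-st singular value by combining Objective~\ref{obj1} with Lemma~\ref{lem:ma} --- a contradiction'' is not an argument: nothing in Lemma~\ref{lem:ma} or the impact bound directly forbids a large residual singular value of $R(X,\swgenX)_{\Ielldel\comp}$, because $\Ielldel\comp$ still contains the rows $\Iellbad\setminus\Ielldel$ whose regularized noise can individually have norm $\Theta(\navg)\gg\tau$, and the essential property~\eqref{con:main} only controls $\norm{}{R(N,\swgenX)_{\Iellbad\comp}}$. The paper fills this hole with Lemma~\ref{lem:bad part}, a genuine structural lemma (if $A=B+C$, $\sigma_{r+1}(B)\le\beta$, and $\|Cv_i\|\le2\beta$ for the first $2r$ right singular vectors $v_i$ of $A$, then $\sigma_{2r}(A)\le4\beta$), whose proof itself needs the projection identity $A^{(2r)}=\hat B+\hat C$, Courant--Fischer, and Lidskii's inequality. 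Applied with $B=R(X,\swgenX)_{\Iellbd\comp}$ and $C=R(X,\swgenX)_{\Iellbad\setminus\Ielldel}$, it gives $\sigma_{2r}(R(X,\swgenX)_{\Ielldel\comp})\le8\tau$, which is precisely the bound your sketch needs and does not establish. Without this lemma or an equivalent, the main term is not controlled.

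Two smaller but real errors. First, the inequality $\norm1{T_2}\le\sqrt{2r}\,\norm F{T_2}$ is false in general; there is no $L_1$--Frobenius bound depending only on the rank. The paper avoids this entirely by applying Lemma~\ref{lem:spll} to a rank-$2r$ middle matrix, which gives $\norm1{\mtrnew_{\Iellbad\setminus\Ielldel}}\le\sqrt{2r\cdot\wcon\cdot 2k}\cdot\navg=\cO(k/\sqrt r)$. (Your final numeric bound for $T_2$ happens to be achievable via Lemma~\ref{lem:spll}, but the route you give to it is not valid.) Second, your bound on $\norm1{M_{\Ielldel\setminus\Iellbad}}$ rests on the claim ``for $i\notin\Iellbad$, $\norm1{M_{i,*}}=\cO(\navg\cdot\snfex)$,'' i.e.\ that Theorem~\ref{th:sp norm} classifies rows into $\Iellbad$ by comparing true and observed row mass. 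The theorem asserts only the existence of a low-weight set $\Iellbad$ with small $\norm{}{R(N,\swgenX)_{\Iellbad\comp}}$; its proof constructs $\Iellbad$ by a recursive Grothendieck--Pietsch pruning and gives no such per-row characterization. The paper instead bounds this term through $\norm{}{R(M,\swgenX)_{\Iellbad\comp}}\le\tau+\navg$ and Lemma~\ref{lem:spll}, using $\snfx(\Ielldel)\le\cO(k/\navg)$ from Objective~\ref{obj2}, obtaining $\cO(k\sqrt{r}(\log(r\navg)+\sqrt\navg))$ --- weaker than your claimed $\cO(k)$, but actually justified.
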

The rest of this subsection proves the lemma.

Let $Z := R(X_{\Ielldel\comp},\swgenX)^{(2r)}$, be rank $2r$ truncated SVD of regularized matrix $R(X_{\Ielldel\comp},\swgenX)$. 
Then, from the definition of $(2r,\swgenX)$-SVD,  
\[
\mtrnew = D^{\frac{1}{2}}(\snfx)\cdot  Z\cdot  D^{\frac{1}{2}}(\snbx).
\]
Clearly, both $Z$ and $\mtrnew$ have rank $\le 2r$ and their rows $\Ielldel$ are all zero. 

To bound the total loss in Lemma~\ref{lem:algper}, using the triangle inequality, we first decompose the loss incurred in estimating matrix $\mtrmn$ by $\mtrnew$ into three parts as follows.
\begin{align}\label{3part}
&\norm1{\mtrmn_{\Iellbad^c}-\mtrnew}= \norm1{\mtrmn_{\Iellbad^c}-\mtrnew_{{\Ielldel\comp} }}\nonumber\\
&\leq 
\norm1{(\mtrmn-\mtrnew)_{\Iellbd\comp }}+
\norm1{\mtrmn_{\Ielldel\setminus \Iellbad }}+\norm1{\mtrnew_{\Iellbad\setminus\Ielldel }}.
\end{align}
Lemma \ref{lem112} bounds the contribution of the first term above on the right. 
Intuitively, the contribution of the other two terms is small, since the weight of row subsets $\Ielldel$ and $\Iellbad$ are small. And using this observation we later bound the contribution of the last two terms.

\begin{lemma}\label{lem112}
\[\norm1{(\mtrmn-\mtrnew)_{\Iellbd \comp }}\le  \cO(k\sqrt{r\navg} \log(r\navg)). \]
\end{lemma}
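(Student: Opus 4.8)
The plan is to reduce the $L_1$ bound to a bound on a regularized spectral norm via Lemma~\ref{th:recovery}, then control that spectral norm using the essential property~\eqref{con:main} and the two objectives.

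\textbf{Reduction to a spectral bound.} Recall that $\widehat M = D^{\frac12}(\snfx)\,R(X_{\Ielldel\comp},\swgenX)^{(2r)}\,D^{\frac12}(\snbx)$ is the $(2r,\swgenX)$-SVD of $X_{\Ielldel\comp}$, and that zeroing rows cannot increase rank, so $M_{\Ielldel\comp}$ has rank at most $r\le 2r$. I would first apply Lemma~\ref{th:recovery} with the rank parameter $2r$ (legitimate since $\rank M_{\Ielldel\comp}\le 2r$) to $A = M_{\Ielldel\comp}$, $B=X_{\Ielldel\comp}$, $w=\swgenX$; here $B^{(2r,w)}=\widehat M$ and $R(A-B,w)=-R(N,\swgenX)_{\Ielldel\comp}$, giving
\[
||M_{\Ielldel\comp}-\widehat M||_1\ \le\ \sqrt{2r\cdot{\textstyle\big(\sum_i\snfex\big)\big(\sum_j\snbex\big)}}\ \cdot\ ||R(N,\swgenX)_{\Ielldel\comp}||.
\]
Since both $M_{\Ielldel\comp}$ and $\widehat M$ vanish on the rows $\Ielldel$, the matrix $(M-\widehat M)_{\Iellbd\comp}$ is just $M_{\Ielldel\comp}-\widehat M$ with the further rows $\Iellbad$ zeroed, so its $L_1$ norm is no larger and the displayed bound also controls $||(M-\widehat M)_{\Iellbd\comp}||_1$. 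For the prefactor I would use $\snfex=\max\{1,||X_{i,*}||_1/\navg\}\le 1+||X_{i,*}||_1/\navg$, whence $\sum_i\snfex\le k+||X||_1/\navg$, which is $\cO(k)$ with high probability because $||X||_1=\sum_{i,j}X_{i,j}$ concentrates around $||M||_1=k\navg$ by a Chernoff/Bernstein bound; symmetrically $\sum_j\snbex=\cO(k)$. Hence the prefactor is $\cO(k\sqrt r)$, and it remains to prove $||R(N,\swgenX)_{\Ielldel\comp}||=\cO(\tau)$, which then yields $||(M-\widehat M)_{\Iellbd\comp}||_1=\cO(k\sqrt r)\cdot\cO(\tau)=\cO(k\sqrt{r\navg}\log(r\navg))$.

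\textbf{Splitting the row set.} Next I would write $\Ielldel\comp = \Iellbd\comp\sqcup(\Iellbad\setminus\Ielldel)$, so that $R(N,\swgenX)_{\Ielldel\comp}=R(N,\swgenX)_{\Iellbd\comp}+R(N,\swgenX)_{\Iellbad\setminus\Ielldel}$ and, by the triangle inequality, $||R(N,\swgenX)_{\Ielldel\comp}||\le ||R(N,\swgenX)_{\Iellbd\comp}||+||R(N,\swgenX)_{\Iellbad\setminus\Ielldel}||$. The first summand is at most $\tau$: since $\Iellbd\comp\subseteq\Iellbad\comp$, zeroing extra rows and the interlacing property (Theorem~\ref{th:interlace}) give $||R(N,\swgenX)_{\Iellbd\comp}||\le ||R(N,\swgenX)_{\Iellbad\comp}||\le\tau$ by~\eqref{con:main}. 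So everything reduces to the regularized spectral norm of the noise matrix on the ``contaminated but not deleted'' rows, and the goal becomes $||R(N,\swgenX)_{\Iellbad\setminus\Ielldel}||=\cO(\tau)$.

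\textbf{The residual rows --- the main obstacle.} This last step is where the real work lies. The rows $\Iellbad\setminus\Ielldel$ have total weight at most $\snfx(\Iellbad)\le\wcon=\cO(k/(r\navg)^2)$, hence (since $\snfex\ge||X_{i,*}||_1/\navg$) carry at most $\navg\wcon=\cO(k/(r^2\navg))$ observations, so the $X$-part $R(X,\swgenX)_{\Iellbad\setminus\Ielldel}$ is sparse and can be bounded crudely using this sparsity together with the submatrix concentration of Appendix~\ref{app:propn}; for the $M$-part I would invoke Lemma~\ref{dampcon} and the row/column-sum concentration estimates of Appendix~\ref{app:propn}, which on the high-probability event preclude $||M_{i,*}||_1\gg\navg\snfex$ and $||M_{*,j}||_1\gg\navg\snbex$, so the heavy-row/heavy-column pathology is absent on these few rows; and where a sharper estimate is needed I would fall back on Lemma~\ref{lem:ma}/Lemma~\ref{lem:number} to bound the regularized $X$-norm of small-weight subsets lying outside the auxiliary set $\Iellstr$. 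Assembling these with the value of $\wcon$ gives $||R(N,\swgenX)_{\Iellbad\setminus\Ielldel}||=\cO(\tau)$ and hence the lemma. The difficulty is intrinsic: the essential property bounds the regularized noise norm only on the complement of the \emph{true} contaminated set $\Iellbad$, whereas the algorithm deletes the \emph{different} set $\Ielldel$, so the symmetric difference $\Iellbad\triangle\Ielldel$ must be absorbed --- and it is precisely this that forces the use of the quantitative weight bound on $\Iellbad$ and the precise concentration inequalities, rather than the softer arguments used earlier in the paper.
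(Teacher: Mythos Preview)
Your reduction via Lemma~\ref{th:recovery} is valid, and the split $\Ielldel\comp=\Iellbd\comp\sqcup(\Iellbad\setminus\Ielldel)$ is fine, but the conclusion $||R(N,\swgenX)_{\Iellbad\setminus\Ielldel}||=\cO(\tau)$ is not available, and this is a genuine gap. None of the tools you cite deliver it. Lemma~\ref{lem:ma} bounds $||R(X_I,\swgenX)||$ only for $I\subseteq\Iellstr\comp$, but in the paper's construction $\Iellbad\subseteq\Iellstr$, so $\Iellbad\setminus\Ielldel$ lies \emph{inside} $\Iellstr$, where Lemma~\ref{lem:ma} says nothing. Lemma~\ref{dampcon} yields only $||R(X,\swgenX)||\le\navg$, far too weak. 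The concentration results in Appendix~\ref{app:propn} control the noise on the \emph{complement} of $\Iellbad$, not on $\Iellbad$ itself; indeed the counterexample in Appendix~\ref{app: counter} exhibits exactly a small set of rows on which the regularized noise has norm $\Omega(\navg)$, not $\cO(\sqrt{\navg})$. So $||R(N,\swgenX)_{\Ielldel\comp}||$ need not be $\cO(\tau)$, and the black-box use of Lemma~\ref{th:recovery} fails.

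The paper does \emph{not} bound $||R(N,\swgenX)_{\Ielldel\comp}||$. Instead it bounds $||(Z-R(M,\swgenX))_{\Iellbd\comp}||$ directly, where $Z=R(X_{\Ielldel\comp},\swgenX)^{(2r)}$, and then applies Lemma~\ref{lem:spll}. The key step (Lemma~\ref{eq:2212}) is where Objective~\ref{obj1} is actually used --- and you never invoke it. Objective~\ref{obj1} does not bound $||R(X,\swgenX)_{\Iellbad\setminus\Ielldel}||$; it only bounds $||R(X,\swgenX)_{\Iellbad\setminus\Ielldel}\cdot v_j||\le 4\tau$ for the \emph{specific} top-$2r$ right singular vectors $v_j$ of $R(X,\swgenX)_{\Ielldel\comp}$. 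That is strictly weaker than a spectral-norm bound, but via the auxiliary Lemma~\ref{lem:bad part} (applied with $B=R(X,\swgenX)_{\Iellbd\comp}$, $C=R(X,\swgenX)_{\Iellbad\setminus\Ielldel}$) it suffices to conclude $\sigma_{2r}(R(X,\swgenX)_{\Ielldel\comp})\le 8\tau$, hence $||R(X,\swgenX)_{\Ielldel\comp}-Z||\le 8\tau$. A triangle inequality \emph{restricted to} $\Iellbd\comp$ --- where the essential property legitimately applies --- then gives $||(Z-R(M,\swgenX))_{\Iellbd\comp}||\le 9\tau$. That is the missing idea: work with $Z-R(M,\swgenX)$ on $\Iellbd\comp$ and feed Objective~\ref{obj1} through Lemma~\ref{lem:bad part}, rather than trying to control the raw regularized noise on the contaminated rows.
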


To prove Lemma \ref{lem112}, we use the following lemma which bounds the spectral distance between $ Z_{\Iellbd\comp }$ and $R(M,\swgenX)_{\Iellbd\comp}$ is small.
\begin{lemma}\label{eq:2212} Let $\Ielldel\subseteq[k]$ be a row subset that satisfy Objective-1, then
\begin{equation*}
    || Z_{\Iellbd\comp }-R(M,\swgenX) _{\Iellbd\comp } ||\le 9 \tau.
\end{equation*}
\end{lemma}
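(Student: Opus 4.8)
\emph{Proof plan.} Throughout, keep the essential property~\eqref{con:main} in force (this is what defines $\Iellbad,\tau,\wcon$), write $Y:=R(X,\swgenX)_{\Ielldel\comp}$ so that $Z=Y^{(2r)}$, and set $P:=R(M,\swgenX)_{\Iellbd\comp}$; note $P$ has rank at most $r$, since $M$ has rank at most $r$ and neither diagonal re-scaling nor zeroing of rows can increase rank. As $Y$ already has the rows $\Ielldel$ zeroed, zeroing its rows in $\Iellbad$ amounts to zeroing the rows in $\Iellbad\setminus\Ielldel$; in particular $R(X,\swgenX)_{\Iellbd\comp}=Y_{\Iellbd\comp}$, $Z_{\Iellbd\comp}=(Y^{(2r)})_{\Iellbd\comp}$, and by linearity of $R(\cdot,\swgenX)$ together with $N=X-M$ we get $Y_{\Iellbd\comp}-P=R(N,\swgenX)_{\Iellbd\comp}$. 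The first step is the triangle inequality
\[
\|Z_{\Iellbd\comp}-P\|\le\|(Y^{(2r)}-Y)_{\Iellbd\comp}\|+\|R(N,\swgenX)_{\Iellbd\comp}\|.
\]
Since $\Iellbd\comp\subseteq\Iellbad\comp$, the matrix $R(N,\swgenX)_{\Iellbd\comp}$ is a row-restriction of $R(N,\swgenX)_{\Iellbad\comp}$, so by the interlacing property (Theorem~\ref{th:interlace}) and~\eqref{con:main} its norm is at most $\tau$; likewise $(Y^{(2r)}-Y)_{\Iellbd\comp}$ is a row-restriction of $Y^{(2r)}-Y$, whose norm is $\sigma_{2r+1}(Y)$. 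Hence the whole statement reduces to proving $\sigma_{2r+1}(Y)\le 8\tau$.

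To bound $\sigma_{2r+1}(Y)$, split $Y=A+B$ with $A:=R(X,\swgenX)_{\Iellbd\comp}$ and $B:=R(X,\swgenX)_{\Iellbad\setminus\Ielldel}$, which have disjoint row supports. Writing $A=R(M,\swgenX)_{\Iellbd\comp}+R(N,\swgenX)_{\Iellbd\comp}$ as a rank-$\le r$ matrix plus one of norm $\le\tau$, Weyl's inequality (Theorem~\ref{th:weyl}) gives $\sigma_{r+1}(A)\le\tau$; applying Weyl again to $Y=A+B$ gives $\sigma_{2r+1}(Y)\le\sigma_{r+1}(A)+\sigma_{r+1}(B)\le\tau+\sigma_{r+1}(B)$. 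So everything comes down to showing that the contaminated-rows submatrix $B$ behaves, up to $\cO(\tau)$ in spectral norm, like a rank-$\le r$ matrix, i.e.\ $\sigma_{r+1}(B)=\cO(\tau)$. This is precisely where Objective~\ref{obj1} enters. Let $v_1,\dots,v_{2r}$ be the top-$2r$ right singular vectors of $Y$ and $\Pi:=\Pi_{V_{2r}}$ the projection onto their span. Since $\snfx(\Iellbad\setminus\Ielldel)\le\snfx(\Iellbad)\le\wcon$, Objective~\ref{obj1} bounds the impact of $\Iellbad\setminus\Ielldel$ on each top component, $\|Bv_j\|^2=\mathcal{H}(Y,\Iellbad\setminus\Ielldel,j)\le 16\tau^2$ for all $j\le 2r$. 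Consequently $B\Pi=\sum_{j\le 2r}(Bv_j)v_j^{\intercal}$ has Frobenius norm $\|B\Pi\|_F=\bigl(\sum_{j\le 2r}\|Bv_j\|^2\bigr)^{1/2}\le 4\tau\sqrt{2r}$, and hence its $(r{+}1)$-st singular value obeys the averaging bound $\sigma_{r+1}(B\Pi)\le\|B\Pi\|_F/\sqrt{r+1}\le 4\sqrt2\,\tau$ --- \emph{with no $\sqrt r$ loss}, which is the key reason the $2r$-truncation, rather than an $r$-truncation, is taken.

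The remaining work is to control $B\Pi_{V_{2r}^\perp}$ and assemble the pieces, and this is the step I expect to be the main obstacle. From Weyl, $\sigma_{r+1}(B)\le\sigma_{r+1}(B\Pi)+\sigma_{1}(B\Pi_{V_{2r}^\perp})$, and since $B$ is a row-restriction of $Y$ one has $B\Pi_{V_{2r}^\perp}=(Y\Pi_{V_{2r}^\perp})_{\Iellbad\setminus\Ielldel}$, whose norm is at most $\|Y\Pi_{V_{2r}^\perp}\|=\sigma_{2r+1}(Y)$ --- exactly the quantity we are bounding, so this estimate alone is circular. The plan to break the recursion is to exploit the signal structure on the contaminated rows: $R(M,\swgenX)_{\Iellbad\setminus\Ielldel}$ has rank $\le r$, so $B=R(M,\swgenX)_{\Iellbad\setminus\Ielldel}+R(N,\swgenX)_{\Iellbad\setminus\Ielldel}$, and one bounds $\sigma_{r+1}$ of the $V_{2r}^\perp$-projected noise by re-running the Frobenius/impact estimate on the singular directions that survive the projection; this decouples $\sigma_{r+1}(B)$ from $\sigma_{2r+1}(Y)$ and should yield $\sigma_{r+1}(B)=\cO(\tau)$, hence $\sigma_{2r+1}(Y)\le 8\tau$. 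Feeding this back into the first reduction gives $\|Z_{\Iellbd\comp}-P\|\le\sigma_{2r+1}(Y)+\tau\le 9\tau$. The routine parts are the two triangle inequalities and the Weyl/interlacing bookkeeping; the genuine content is turning Objective~\ref{obj1}'s per-component impact bounds into an honest $\cO(\tau)$ --- not $\cO(\sqrt r\,\tau)$ --- bound on $\sigma_{2r+1}(Y)$.
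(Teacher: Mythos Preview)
Your reduction to proving $\sigma_{2r+1}(Y)\le 8\tau$ (equivalently $\sigma_{2r}(Y)\le 8\tau$) via the triangle inequality, interlacing, and the essential property is correct and is exactly what the paper does. The gap is in how you try to bound $\sigma_{2r}(Y)$.

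Your route through Weyl, $\sigma_{2r+1}(Y)\le\sigma_{r+1}(A)+\sigma_{r+1}(B)$, forces you to control $\sigma_{r+1}(B)$, and your proposed fix for the circularity does not work. Writing $B=R(M,\swgenX)_{\Iellbad\setminus\Ielldel}+R(N,\swgenX)_{\Iellbad\setminus\Ielldel}$ gives you nothing: the first summand has rank $\le r$ but unknown norm, and the second is noise \emph{on the contaminated rows}, for which there is no bound whatsoever --- the essential property~\eqref{con:main} controls $R(N,\swgenX)$ only on $\Iellbad\comp$. Likewise, ``re-running the impact estimate on the surviving directions'' cannot help, because Objective~\ref{obj1} bounds $\|Bv_j\|$ only for $v_j\in V_{2r}$ and says nothing about $B$ acting on $V_{2r}^\perp$. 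So $\sigma_1(B\Pi_{V_{2r}^\perp})$ is genuinely uncontrolled by the hypotheses.

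The paper sidesteps this entirely via Lemma~\ref{lem:bad part}: rather than bounding $\sigma_{r+1}(B)$, it observes that $\sigma_{2r}(Y)=\sigma_{2r}(Y\Pi)$, where $\Pi=\Pi_{V_{2r}}$ is the projection onto the top-$2r$ right singular space of $Y$. Applying Weyl to $Y\Pi=A\Pi+B\Pi$ gives $\sigma_{2r}(Y)\le\sigma_{r+1}(A\Pi)+\sigma_r(B\Pi)$. The first term is at most $\sigma_{r+1}(A)\le\tau$ because right-multiplication by a projection cannot increase singular values; the second is exactly the quantity you already bounded by $4\sqrt2\,\tau$ via Objective~\ref{obj1} and the Frobenius averaging argument. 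No $V_{2r}^\perp$ term ever appears. Once you make this single change --- apply Weyl to $Y\Pi$ instead of to $Y$ --- your proof goes through and matches the paper's.
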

In proving the above Lemma, we need the following auxiliary lemmas, which is stated for general matrices, and its proof appears in the Appendix~\ref{app:LA}
\begin{lemma}\label{lem:bad part}
Let $\gnrmtr = B+C$ and $\gnrmtr  = \sum_{i} \sigma_i(\gnrmtr )u_i {v_i}^\intercal $ be the SVD decomposition of $\gnrmtr $. And $\sigma_{r+1}(B) \leq  \beta $ and $ ||C v_i||\leq 2\beta $ for $i\in [2r]$. Then
$\sigma_{2r}(\gnrmtr ) \leq 4\beta $.
\end{lemma}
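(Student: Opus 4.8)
The plan is to argue by contradiction: assume $\sigma_{2r}(A) > 4\beta$ and derive a contradiction with the hypothesis, which amounts to $\sum_{i=1}^{2r}\|Cv_i\|^2 \le 2r(2\beta)^2$. The whole argument lives on the $2r$-dimensional subspace $V := \mathrm{span}\{v_1,\dots,v_{2r}\}$ spanned by the top right singular vectors of $A$, on which $A$ acts ``diagonally'': for a unit vector $v=\sum_{i=1}^{2r} a_i v_i$ with $\sum_i a_i^2 = 1$ one has $Av = \sum_i a_i\sigma_i(A)u_i$ with the $u_i$ orthonormal, hence $\|Av\|^2 = \sum_i a_i^2\sigma_i(A)^2 \ge \sigma_{2r}(A)^2 > 16\beta^2$, i.e.\ $\|Av\| > 4\beta$ for \emph{every} unit $v\in V$.

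First I would isolate a large subspace of $V$ on which $B$ is small. Restricting the linear map $B$ to the subspace $V$ can only decrease its singular values — immediate from the Courant--Fischer max--min characterisation, since maximising over $j$-dimensional subspaces of $V$ is at most maximising over all $j$-dimensional subspaces of $\reals^m$ (one can also invoke Theorem~\ref{th:interlace} after embedding) — so $\sigma_{r+1}(B|_V) \le \sigma_{r+1}(B) \le \beta$. Consequently the span $U$ of the bottom $r$ right singular vectors of $B|_V$ is a subspace of $V$ with $\dim U \ge r$ and $\|Bu\| \le \beta\|u\|$ for all $u\in U$.

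Next I would force $C$ to be large on $U$. For a unit $u\in U\subseteq V$ we have $\|Au\| > 4\beta$ and $\|Bu\|\le\beta$, so $\|Cu\| = \|Au - Bu\| \ge \|Au\| - \|Bu\| > 3\beta$. Since this holds on the entire unit sphere of the ($\ge r$)-dimensional subspace $U$, the max--min characterisation applied to any $r$-dimensional $S\subseteq U$ gives $\sigma_r(C|_V) > 3\beta$; hence the top $r$ singular values of $C|_V$ are each $> 3\beta$ and $\|C|_V\|_F^2 \ge \sum_{j=1}^{r}\sigma_j(C|_V)^2 > 9r\beta^2$. On the other hand, computing the Frobenius norm in the orthonormal basis $v_1,\dots,v_{2r}$ of $V$ gives $\|C|_V\|_F^2 = \sum_{i=1}^{2r}\|Cv_i\|^2 \le 2r(2\beta)^2 = 8r\beta^2$. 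The clash $9r\beta^2 < \|C|_V\|_F^2 \le 8r\beta^2$ is the desired contradiction, so $\sigma_{2r}(A)\le 4\beta$.

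The main obstacle is getting the constants to close: the ``signal'' lower bound on $\|C|_V\|_F^2$ is $r(\sigma_{2r}(A)-\beta)^2$ while the ``observation'' upper bound is $8r\beta^2$, so a contradiction needs $\sigma_{2r}(A) > (1+2\sqrt{2})\beta \approx 3.83\beta$ — the stated threshold $4\beta$ is exactly what makes it go through, and a smaller universal constant would fail. Two routine points need a line each: justifying that restricting a linear map to a subspace does not increase singular values (Courant--Fischer), and the degenerate case $\beta=0$ (then $\sigma_{r+1}(B)=0$ makes $B$ rank $\le r$ and $Cv_i=0$ for $i\le 2r$ puts each nonzero $u_i$, $i\le 2r$, in $\mathrm{Range}(B)$, forcing $\sigma_{r+1}(A)=0$ and hence $\sigma_{2r}(A)=0$).
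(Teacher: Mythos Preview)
Your proof is correct and follows essentially the same route as the paper: restrict to the subspace $V=\mathrm{span}\{v_1,\dots,v_{2r}\}$, use Courant--Fischer/interlacing to get $\sigma_{r+1}(B|_V)\le\beta$, and play this off against the Frobenius bound $\sum_{i=1}^{2r}\|Cv_i\|^2\le 8r\beta^2$, landing at the same threshold $(1+2\sqrt{2})\beta<4\beta$. The paper organises this directly via Weyl's inequality, $\sigma_{2r}(A)\le\sigma_{r+1}(\hat B)+\sigma_r(\hat C)$ with $\hat B=B\Pi_V$, $\hat C=C\Pi_V$, and then bounds $\sigma_r(\hat C)$ through Lidskii's theorem; your contradiction argument reaches the same $\sigma_r(C|_V)\le\sqrt{8}\beta$ by the simpler observation $r\,\sigma_r(C|_V)^2\le\|C|_V\|_F^2$, which is in fact all the paper's Lidskii step is really doing.
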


\textit{Proof of Lemma~\ref{eq:2212}.}
Using equation~\eqref{sad2} and Theorem \ref{th:interlace}, we get,
\begin{align}
\sigma_{r+1}(R(X,\swgenX)_{\Iellbd\comp })\le \sigma_{r+1}(R(X,\swgenX)_{{\Iellbad\comp} })\le  \tau.
\end{align}
Observe that,
\[
R(X,\swgenX) _{\Ielldel\comp } = R(X,\swgenX) _{\Iellbd\comp }+
R(X,\swgenX) _{\Iellbad\setminus \Ielldel }.
\]

Let $(R(X,\swgenX)_{{\Ielldel\comp}})^{(2r)}=\sum_{j=1}^{2r} \sigma_j u_j {v_j}^\intercal$.
Since $\Ielldel$ satisfy Objective~\ref{obj1} and the weight of $\Iellbad$ is at-most $\snfx(\Iellbad) \le \epsilon k$, therefore, the condition in Objective~\ref{obj1} implies that
\begin{align*}
||R(X,\swgenX) _{\Iellbad\setminus \Ielldel } \cdot v_i||\le 4\tau.
\end{align*}
Then applying Lemma \ref{lem:bad part}, for $A= R(X,\swgenX) _{\Ielldel\comp }$ and using the above three equations gives
\begin{equation*}
    \sigma_{2r}(R(X,\swgenX)_{\Ielldel\comp })\le 8\tau.
\end{equation*}
Since, for $Z$ is rank-$2r$ truncated SVD of $R(X,\swgenX)_{\Ielldel\comp }$, we have
\begin{equation}\label{qws}
\norm{} {R(X,\swgenX)_{\Ielldel\comp}- Z} =\norm{} {(R(X,\swgenX)-Z)_{\Ielldel\comp}} \le 8\tau.
\end{equation}
Then,
\begin{align*}
&|| (Z-R(M,\swgenX)) _{\Iellbd\comp }|| \\
&\overset{\text{(a)}}\le
|| (Z-R(X,\swgenX)) _{\Iellbd\comp }||+|| (R(X,\swgenX)-R(M,\swgenX)) _{\Iellbd\comp }||\\
&\overset{\text{(b)}}\le
|| (Z-R(X,\swgenX)) _{\Ielldel\comp }||+|| (R(N,\swgenX)) _{\Iellbad\comp }||\\
&\overset{\text{(c)}}\le
8\tau+\tau=9\tau,
\end{align*}
here (a) uses triangle inequality, (b) uses Theorem \ref{th:interlace} and $X-M=N$, and finally (c) uses equation~\eqref{qws} and essential condition~\ref{con:main}.\hfill$\blacksquare$

The next auxiliary lemma relates the $L_1$ norm and spectral norm. Its proof appears in the Appendix~\ref{app:LA}.
\begin{lemma}\label{lem:spll}
For any rank-$r$ matrix $\gnrmtr\in\Rkm$ and weight vectors $w^\front$ and $w^\back$ with non-negative entries
\begin{align*}
    ||D^{\frac12}(w^\front)\cdot\gnrmtr\cdot D^{\frac12}(w^\back)|| _1
    \leq
    \sqrt{r  (\sum_i w^\front(i)) (\sum_j w^\back(j)) }\cdot \norm{}{\gnrmtr}.
\end{align*}
\end{lemma}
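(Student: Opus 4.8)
\textbf{Proof plan for Lemma~\ref{lem:spll}.} The plan is to reduce the statement to two elementary facts: a Cauchy--Schwarz estimate that converts the weighted entry-wise $L_1$ norm into an unweighted Frobenius norm, and the standard bound $\norm F{\gnrmtr}\le\sqrt r\,\norm{}{\gnrmtr}$ for a rank-$r$ matrix. Write $B := D^{\frac12}(w^\front)\cdot\gnrmtr\cdot D^{\frac12}(w^\back)$; since $D^{\frac12}(w^\front)$ and $D^{\frac12}(w^\back)$ are diagonal, $B_{ij}=\sqrt{w^\front(i)}\,\gnrmtr_{ij}\,\sqrt{w^\back(j)}$, and therefore $\norm1{B}=\sum_{i,j}\sqrt{w^\front(i)\,w^\back(j)}\,|\gnrmtr_{ij}|$. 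The non-negativity of the weights is exactly what makes these square roots real; a zero weight only zeros out a row or column of $B$ and so can only help.

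First I would apply Cauchy--Schwarz to this double sum over index pairs $(i,j)$, pairing the factor $\sqrt{w^\front(i)\,w^\back(j)}$ against $|\gnrmtr_{ij}|$:
\[
\norm1{B}\le\Big(\sum_{i,j} w^\front(i)\,w^\back(j)\Big)^{1/2}\Big(\sum_{i,j}\gnrmtr_{ij}^{\,2}\Big)^{1/2}=\sqrt{\Big(\textstyle\sum_i w^\front(i)\Big)\Big(\textstyle\sum_j w^\back(j)\Big)}\cdot\norm F{\gnrmtr},
\]
using that the weight double sum factors as a product of the two one-dimensional sums, and that $\sum_{i,j}\gnrmtr_{ij}^{\,2}=\norm F{\gnrmtr}^2$ by definition of the Frobenius norm.

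Finally I would invoke the rank hypothesis: since $\gnrmtr$ has rank at most $r$, its singular values $\sigma_{r+1}(\gnrmtr),\sigma_{r+2}(\gnrmtr),\dots$ all vanish, so $\norm F{\gnrmtr}^2=\sum_{i=1}^{r}\sigma_i(\gnrmtr)^2\le r\,\sigma_1(\gnrmtr)^2=r\,\norm{}{\gnrmtr}^2$, where the last equality is $\norm{}{\gnrmtr}=\sigma_1(\gnrmtr)$. Substituting $\norm F{\gnrmtr}\le\sqrt r\,\norm{}{\gnrmtr}$ into the previous display yields exactly the claimed bound. There is no real obstacle here — the argument is a two-step estimate — and the only points worth spelling out are that we never need $B$ itself to be low rank (only $\gnrmtr$ is), and that the diagonal scalings are precisely what the Cauchy--Schwarz split absorbs into the $\sqrt{(\sum w^\front)(\sum w^\back)}$ factor.
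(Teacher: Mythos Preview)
Your proof is correct and follows essentially the same approach as the paper: Cauchy--Schwarz to pass from the weighted $L_1$ norm to $\sqrt{(\sum_i w^\front(i))(\sum_j w^\back(j))}\cdot\norm F{\gnrmtr}$, then $\norm F{\gnrmtr}\le\sqrt r\,\norm{}{\gnrmtr}$ using the rank bound. The only cosmetic difference is that the paper applies Cauchy--Schwarz twice (first over $j$, then over $i$) whereas you apply it once over the index pair $(i,j)$; the resulting bound is identical.
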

We will also need the following result.
\begin{lemma}\label{lem18}
The total weight of all rows is at most $\sum_{i\in [k]}\snfex \le 2k$ and similarly the total weight of all columns is at most $\sum_{i\in [k]}\snbex \le 2k.$  
\end{lemma}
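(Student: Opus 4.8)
The plan is to reduce the claim to a concentration statement about the total observation mass $\norm1 X$. By the definition of the weights, $\snfex = \max\{1,\norm1{X_{i,*}}/\navg\}\le 1+\norm1{X_{i,*}}/\navg$ for every row $i$, so summing over $i\in[k]$ and using $\sum_i\norm1{X_{i,*}}=\norm1 X$ together with $\navg=\norm1 M/k$ gives the purely deterministic bound
\[
\sum_{i\in[k]}\snfex \;\le\; k+\frac{\norm1 X}{\navg}\;=\;k+k\cdot\frac{\norm1 X}{\norm1 M}.
\]
(The same bound, slightly sharpened, also follows by splitting the sum into rows with $\norm1{X_{i,*}}\le\navg$, each contributing $1$, and rows with $\norm1{X_{i,*}}>\navg$, whose contributions add up to at most $\norm1 X/\navg$.) Hence everything comes down to showing $\norm1 X\le(1+\order(1))\norm1 M$ \whp; the column statement $\sum_{j\in[k]}\snbex\le 2k$ follows by running the identical argument on $X^\intercal$ and $M^\intercal$.

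For the concentration step, recall that in every one of the five models $\norm1 X=\sum_{i,j}X_{i,j}$ is a sum of independent random variables with $\E X_{i,j}=M_{i,j}$, so $\E\norm1 X=\norm1 M=k\navg$. Moreover each summand is either Poisson (distribution and Poisson models, after the Poisson trick) or lies in $[0,1]$ — for Bernoulli and collaborative filtering this is immediate since $X_{i,j}=\mathbbm 1_{i,j}Y_{i,j}\in[0,1]$, and for the binomial model one may write $X_{i,j}=\sum_{s=1}^{t}\Ber(p_{i,j})$, exhibiting $\norm1 X$ as a sum of independent $[0,1]$-valued variables of total mean $\norm1 M$. In all cases the multiplicative Chernoff bound gives $\Pr\!\big[\norm1 X>(1+\delta)\norm1 M\big]\le\exp(-\delta^2\norm1 M/3)$, and choosing $\delta=\Theta(\sqrt{\log k/\norm1 M})$ yields $\norm1 X\le \norm1 M+\cO(\sqrt{\norm1 M\log k})$ except with probability $k^{-\Omega(1)}$. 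In the regime of interest we have $\norm1 M=\Omega(k)\gg\log k$ (indeed $\norm1 M\ge\frac{kr}{\epsilon^2}\log^2\frac r\epsilon$ under the hypothesis of Theorem~\ref{th:main2}), so the fluctuation is $\order(\norm1 M)$ and $\norm1 X\le(1+\order(1))\norm1 M$ \whp. Substituting into the displayed inequality gives $\sum_i\snfex\le(2+\order(1))k$, and absorbing the lower-order term (or taking $k$ large enough) gives $\sum_i\snfex\le 2k$, and symmetrically for the columns.

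The step that requires the most care is not any single estimate but the bookkeeping around the meaning of ``$\le 2k$'': because $\norm1 X$ is unbounded for the Poisson model there is no deterministic bound, so the statement must be read as holding with high probability — it is part of the same good event on which Theorem~\ref{th:sp norm} and the essential property~\eqref{con:main} are asserted — and the constant $2$ should be understood up to a $(1+\order(1))$ factor. A fully clean, concentration-free alternative is the in-expectation bound $\E\big[\sum_{i\in[k]}\snfex\big]\le 2k$, which follows at once from $\E\norm1 X=\norm1 M$ via the displayed inequality; this suffices wherever only an $\cO(k)$ bound on the total weight is ultimately needed (e.g.\ in Lemma~\ref{lem:spll}). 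Beyond this, the only thing to verify is that the single multiplicative-Chernoff argument genuinely covers all five models uniformly, which it does once the binomial entries are decomposed into Bernoullis and the collaborative-filtering entries are observed to lie in $[0,1]$.
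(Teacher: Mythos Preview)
Your approach is essentially the same as the paper's: bound $\snfex \le 1 + \norm1{X_{i,*}}/\navg$ and sum over $i$ to get $k + \norm1 X/\navg$. The paper's one-line proof then simply asserts this is $\le 2k$ ``since $\navg$ is the average number of samples in each row,'' glossing over the distinction between $\norm1 X$ and $\norm1 M = k\navg$; your concentration argument for $\norm1 X \le (1+o(1))\norm1 M$ and your remark that the bound should be read as holding on the high-probability event are exactly the missing justification, so your write-up is in fact more careful than the paper's.
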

\begin{proof}
\[
\snfx([k])= \sum_{i\in [k]}\snfex \le \sum_{i\in [k]} 1+ \frac{||X_{i,*}||_1}{\navg}\le k+ \frac{||X||_1}{\navg}\le 2k,
\]
since $\navg$ is the average number of samples in each row. Similarly it can be shown for columns.
\end{proof}

Next, combining the above result we prove Lemma~\ref{lem112}

\textit{Proof of Lemma~\ref{lem112}.}
First note that, 
\[
\norm1{(\mtrmn-\mtrnew)} = 
\norm1{M-D^{\frac{1}{2}}(\snfx)\cdot  Z\cdot  D^{\frac{1}{2}}(\snbx)}=
\norm1{D^{\frac{1}{2}}(\snfx) \Paren{Z-R(M,\swgenX)}D^{\frac{1}{2}}(\snbx)},
\]
here we used $R(M,\swgenX)= \snf\cdot M\cdot \snb$. As noted earlier $Z$ has the rank $\le 2r$ and $M$ has the rank $\le r$, therefore the rank of $ \Paren{Z-R(M,\swgenX)}$ is at most $3r$.
Then, using Lemma \ref{lem:spll} and Lemma \ref{eq:2212}, 
\begin{align*}
&\norm1{D^{\frac{1}{2}}(\snfx) \Paren{Z-R(M,\swgenX)}_{\Iellbd\comp }D^{\frac{1}{2}}(\snbx)}\\
&\le \sqrt{3r} 
\sqrt{(\textstyle\sum_{j\in [k] } \snbex)\paren{{\textstyle\sum}_{i\in  \Iellbd\comp }\snbex} }\cdot{9\tau}\\ 
&\le \sqrt{3r} 
\sqrt{(\textstyle\sum_{j\in [k] } \snbex)\paren{{\textstyle\sum}_{i\in  [k]} \snbex}}\cdot{9\tau}
\le \sqrt{3r} 
\cdot 2k\cdot{9\tau},
\end{align*}
here the last step uses Lemma~\ref{lem18}.
Combining the last two equations and using $\tau= \cO(\sqrt{\navg} \log (r\navg))$ complete the proof.\hfill$\blacksquare$

To bound the second term in \eqref{3part}, note that
\begin{align}
{\mtrmn_{\Ielldel\setminus \Iellbad }} = {D^{\frac{1}{2}}(\snfx)\cdot R(\mtrmn,\swgenX)_{\Ielldel\setminus \Iellbad } \cdot D^{\frac{1}{2}}(\snbx)} 
\end{align}
Applying Lemma \ref{lem:spll},
\begin{align}
\norm1{\mtrmn_{\Ielldel\setminus \Iellbad }}=&\norm1{D^{\frac{1}{2}}(\snfx)\cdot R(\mtrmn,\swgenX)_{\Ielldel\setminus \Iellbad }\cdot D^{\frac{1}{2}}(\snbx)}\nonumber\\
&\le 
\sqrt{r} 
\sqrt{(\textstyle\sum_{j\in [k] } \snbex)\paren{{\textstyle\sum}_{i\in  {\Ielldel\setminus \Iellbad }}  \snfex}}\cdot
||R(\mtrmn,\swgenX)_{\Ielldel\setminus \Iellbad }||\nonumber\\
&\le 
\sqrt{r} 
\sqrt{(\textstyle\sum_{j\in [k] } \snbex)\paren{{\textstyle\sum}_{i\in  {\Ielldel }}  \snfex}}\cdot
||R(\mtrmn,\swgenX)_{{\Iellbad\comp} }||\nonumber\\
&\le 
\sqrt{2rk} 
\sqrt{{  \snfx(\Ielldel)}}\cdot
||R(\mtrmn,\swgenX)_{{\Iellbad\comp} }||.
\end{align}
Next,
\begin{align*}
||R(\mtrmn,\swgenX)_{{\Iellbad\comp} }||&\le ||R(\noisemtrspt,\swgenX)_{{\Iellbad\comp} }||+||R(X,\swgenX)_{{\Iellbad\comp} }||     
\\
&\le \tau+||R(X,\swgenX)||\le \tau +\navg,   
\end{align*}
here we used essential property~\eqref{con:main} and equation~\eqref{sad22}.
Combining the last two equations we get
\begin{align}\label{swadesh}
 \norm1{\mtrmn_{\Ielldel\setminus \Iellbad }}\le \cO( \sqrt{rk\cdot {  \snfx(\Ielldel)}}\cdot (\tau +\navg))\le \cO( \sqrt{r}k\cdot (\log(r\navg) + \sqrt{\navg}))  
\end{align}
Finally, we bound the last term in \eqref{3part}. Recall that $\mtrnew = D^{\frac{1}{2}}(\snfx)\cdot  Z\cdot  D^{\frac{1}{2}}(\snbx)$. Then
\begin{align}
\norm1{\mtrnew_{\Iellbad\setminus \Ielldel }}
& = \norm1{(D^{\frac{1}{2}}(\snfx) \cdot Z_{\Iellbad\setminus \Ielldel }\cdot  D^{\frac{1}{2}}(\snbx) }. \nonumber
\end{align}
Using the fact that $Z$ is truncated SVD of $R(X_{\Ielldel\comp},\swgenX)$ and equation \eqref{sad22} we get 
\begin{equation*}
 \norm{} {Z_{\Ielldel\comp }}\le \norm{} {Z}\le \norm{} {R(X_{\Ielldel\comp},\swgenX)}  \le \navg.
 \end{equation*}
Therefore, 
\begin{equation*}
 \norm{} {Z_{\Iellbad\setminus \Ielldel }} \le \navg.
 \end{equation*}
Then applying Lemma \ref{lem:spll},  
\begin{align}
\norm1{D^{\frac{1}{2}}(\snfx) \cdot Z_{\Iellbad\setminus \Ielldel }\cdot  D^{\frac{1}{2}}(\snbx) }
&\le 
\sqrt{2r} 
\sqrt{(\textstyle\sum_{j\in [k] } \snbex)\paren{{\textstyle\sum}_{i\in  {\Iellbad\setminus \Ielldel }}  \snfex}}\cdot
\navg\nonumber\\
&\le 
\sqrt{2r} 
\sqrt{{2k}\paren{{\textstyle\sum}_{i\in  {\Iellbad}}  \snfex}}\cdot
\navg\nonumber\\
&\le 
2 \sqrt{ r k\wcon} \cdot\navg = \cO(k/\sqrt{r}),\nonumber
\end{align}
here in the last step we used $\wcon= \cO( k/(r\navg)^2)$.

By combining equation~\eqref{3part}, Lemma~\ref{lem112}, equation~\eqref{swadesh} and the above equations we get
\begin{align*}
\norm1{\mtrmn_{\Iellbad}-\mtrnew}&\le \cO(k\sqrt{r\navg} \log(r\navg))+ \cO( k\sqrt{r}\cdot (\log(r\navg) + \sqrt{\navg})) +\cO(k/\sqrt{r})\\
&\le \cO(k\sqrt{r\navg} \log(r\navg)).
\end{align*}
This completes the proof of the lemma.

\section{Proof of Theorem~\ref{th:main2}}\label{subsec:main2}
 Theorem~\ref{th:main2} follows immediately from the following theorem.
\begin{theorem} \label{th:main2s}
Curated SVD runs in polynomial time, 
and for every $k$, $r$, $\epsilon>0$, $M\in \Rkkr$, and $X\sim M$, returns an estimate $M^{\text{cur}}(X)$ s.t. with probability $\ge 1-k^{-2}$, 
\[
L(M^{\text{cur}}(X))  =\frac{||M-M^{\text{cur}}(X)||_1}{||M||_1}
\le  \cO(\sqrt{\frac {kr}{||M||_1}} \log(\frac{r||M||_1}{k})).
\]
\end{theorem}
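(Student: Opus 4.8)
\textit{Proof plan.}
The plan is to bolt together the components built in Section~\ref{sec:recoveralg} and Appendix~\ref{ap:algo} and then rescale. First I would pin down how $M^{\text{cur}}$ is produced: run Curated SVD $\cO(\log k)$ times on the same $X$, with the inputs $\wcon=\cO(k/(r\navg)^2)$ and $\tau=\cO(\sqrt{\navg}\log(r\navg))$ obtained by choosing $\epsilon=1/(r\navg)^2$ in Theorem~\ref{th:sp norm} (using $\navg\approx\norm1{X}/k$, which is legitimate since $\E[\norm1{X}]=\norm1{M}$), and return the output of whichever run has the smallest final weight $\snfx(\Ielldel)$, a quantity the algorithm can read off directly. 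Every run halts after at most $k$ outer rounds, since a non-terminating round appends at least one index to $\Ielldel\subseteq[k]$; and whenever a run halts its $\Ielldel$ satisfies Objective~\ref{obj1}, by the stopping rule of Row-Deletion argued just after its description. By Lemma~\ref{reprun}, with probability $1-k^{-c}$ (for a constant $c$ we may take at least $3$) at least one run attains $\snfx(\Ielldel)=\cO(k/\navg)$, so the selected run does too and therefore also satisfies Objective~\ref{obj2}. Conditioning in addition on the essential property~\eqref{con:main} (which fails with probability at most $6k^{-3}$ by Theorem~\ref{th:sp norm}), Lemma~\ref{lem:algper} applies and gives $\norm1{M_{\Iellbad\comp}-M^{\text{cur}}}=\cO(k\sqrt{r\navg}\log(r\navg))$.

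It then remains to pay, via the triangle inequality, for the rows of $\Iellbad$ themselves, that is, to add $\norm1{M_{\Iellbad}}$. The weight bound $\snfx(\Iellbad)\le\wcon$ already forces $\norm1{X_{\Iellbad}}\le\navg\,\snfx(\Iellbad)\le\navg\wcon=\cO(k/(r^2\navg))$ deterministically, since $\snfex\ge\norm1{X_{i,*}}/\navg$ for every row $i$. I would then compare $\norm1{M_{\Iellbad}}$ with $\norm1{X_{\Iellbad}}$ by a union-bound concentration argument: since $\Var(\norm1{X_{i,*}})\le\norm1{M_{i,*}}$, with high probability every row whose mean $\norm1{M_{i,*}}$ is not too small satisfies $\norm1{X_{i,*}}\ge\tfrac12\norm1{M_{i,*}}$, whereas the rows of $\Iellbad$ with small mean are at most $|\Iellbad|\le\wcon$ in number and contribute only lower-order terms. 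This makes $\norm1{M_{\Iellbad}}$ of order $\cO(k/(r^2\navg))$ up to logarithmic factors, negligible against $\cO(k\sqrt{r\navg}\log(r\navg))$ in every range where the target bound is not already vacuous. Hence $\norm1{M-M^{\text{cur}}}\le\norm1{M_{\Iellbad\comp}-M^{\text{cur}}}+\norm1{M_{\Iellbad}}=\cO(k\sqrt{r\navg}\log(r\navg))$.

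Dividing by $\norm1{M}=k\navg$ and substituting $\navg=\norm1{M}/k$ converts this into $L(M^{\text{cur}})=\cO(\sqrt{r/\navg}\,\log(r\navg))=\cO\bigl(\sqrt{kr/\norm1{M}}\,\log(r\norm1{M}/k)\bigr)$, the claimed estimate; the three failure events (the essential property, the repeated-run guarantee, and the row concentration) sum to at most $k^{-2}$ after fixing constants. Polynomial running time is then immediate: each run performs at most $k$ outer rounds, each consisting of one rank-$2r$ truncated SVD of a $k\times k$ matrix and $\cO(r)$ invocations of Row-Deletion; Row-Deletion calls the greedy $0.5$-approximation for $0$-$1$ knapsack of~\cite{sarkar1992simple} once per index it appends, and across the whole execution only $\cO(kr)$ indices are ever appended, so everything is polynomial, and we repeat only $\cO(\log k)$ times. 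Finally, distribution matrices are reduced to this Poisson-type model through the Poisson trick of Section~\ref{sec:prelim}, which perturbs every quantity by lower-order terms.

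The genuinely delicate step, with most of the real work already encapsulated in Lemma~\ref{lem:algper}, in the supermartingale/optional-stopping analysis behind Objective~\ref{obj2}, and in Theorem~\ref{th:sp norm}, is the bound on $\norm1{M_{\Iellbad}}$: because $\Iellbad$ is itself a function of the data $X$, one cannot simply equate $\E[\norm1{X_{\Iellbad}}]$ with $\norm1{M_{\Iellbad}}$, and the clean workaround is to prove a single high-probability per-row concentration statement that holds for all rows simultaneously, then instantiate it at whichever set $\Iellbad$ the proof of Theorem~\ref{th:sp norm} produces.
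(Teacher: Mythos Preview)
Your plan matches the paper's proof in structure: establish the essential property via Theorem~\ref{th:sp norm}, verify Objective~\ref{obj1} from the stopping rule, obtain Objective~\ref{obj2} from Lemma~\ref{reprun}, apply Lemma~\ref{lem:algper} to bound $\norm1{M_{\Iellbad\comp}-M^{\text{cur}}}$, add $\norm1{M_{\Iellbad}}$ via the triangle inequality, and rescale. The one step where you diverge is the control of $\norm1{M_{\Iellbad}}$. The paper does not argue per-row concentration; instead it invokes Lemma~\ref{lem:absdif}, the global bound $\sum_i\bigl|\sum_j N_{i,j}\bigr|=\cO(k\sqrt{\navg})$, which is a corollary of the $\ell_\infty\!\to\!\ell_2$ estimate (Lemma~\ref{inf2}) already established for Theorem~\ref{th:sp norm}. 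That gives directly $\norm1{M_{\Iellbad}}\le\norm1{X_{\Iellbad}}+\sum_{i\in\Iellbad}\bigl|\sum_j N_{i,j}\bigr|\le\navg\,\wcon+\cO(k\sqrt{\navg})=\cO(k\sqrt{\navg})$. Your route---uniform per-row lower-tail concentration so that $\norm1{X_{i,*}}\ge\tfrac12\norm1{M_{i,*}}$ for every row with $\norm1{M_{i,*}}\gtrsim\log k$---is more elementary and in fact yields a sharper bound on $\norm1{M_{\Iellbad}}$, but note that the variance inequality $\Var(\norm1{X_{i,*}})\le\norm1{M_{i,*}}$ you cite gives only Chebyshev, which does not survive a union bound over $k$ rows; you need a Bernstein/Chernoff tail, which all five models do support, so the fix is immediate. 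The paper's route has the advantage of reusing machinery already in place and adding no new concentration event.
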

\begin{proof}
Section~\ref{sec:recoveralg} showed that Curated-SVD always achieves Objective~\ref{obj1}. Using the spectral concentration bound in Theorem~\ref{th:sp norm} it also showed that the essential property required for Curated SVD holds with probability $\ge 1-6/k^3$. 

Lemma~\ref{reprun} showed that if essential property hold and Curated SVD is repeated $\cO(\log k)$ times, on the same samples, then w.p. $>1-k^{-O(1)}$, at-least one of the runs find $\Ielldel$ that achieves Objective~\ref{obj2}.

Finally, when essential property holds and Curated SVD achieves both the objectives then Lemma \ref{lem:algper} showed
\begin{align}\label{eqboun}
     {||M_{\Iellbad^c}-\widehat M||_1}\le \cO(k\sqrt{r\navg} \log(r\navg)).
\end{align}

Note that 
\[
||M-\widehat M||_1\le ||M_{\Iellbad^c}-\widehat M||_1+||M_{\Iellbad}||_1.
\]
Therefore, to prove the theorem the only thing remains is to bound the last term.

To bound the last term we use the following lemma. The proof of the Lemma appears in Section~\ref{lem:seccon}.

The lemma upper bounds the sum of the absolute difference between the expected and observed samples in each row of $X$.
\begin{lemma}\label{lem:absdif}
With probability $\ge 1-3k^{-3}$,
\[
\sum_{i}{|\sum_j N_{i,j}|}=\cO(k\sqrt{\navg}).
\]
\end{lemma}

Then
\begin{align}
||M_{\Iellbad}||_1&= \sum_{i\in\Iellbad}\sum_{j\in[k]} M_{i,j}\nonumber\\
&\le \sum_{i\in\Iellbad}\sum_{j\in[k]} X_{i,j} + \Bigg|\sum_{i\in\Iellbad}\sum_{j\in[k]} (M_{i,j}- X_{i,j})\Bigg|\nonumber  \\
&\le \sum_{i\in\Iellbad}||X_{i,*}|| + \sum_{i\in\Iellbad}\Bigg|\sum_{j\in[k]} N_{i,j}\Bigg|  \nonumber  \\
&\le \sum_{i\in\Iellbad}\navg\cdot \snfex +\cO(k\sqrt{\navg})  \nonumber  \\
&\le \cO(k\sqrt{\navg})  \nonumber,
\end{align}
here the second last step uses $||X_{i,*}||\le \navg\cdot \snfex$ and the previous Lemma and the last step uses $\sum_{i\in\Iellbad} \snfex\le \wcon$. Combining this with~\eqref{eqboun} and letting $M^{\text{cur}}(X) = \widehat M$ gives
\[
||M-M^{\text{cur}}(X)||_1
\le  \cO(k\sqrt{r\navg} \log(r\navg)).
\]
Finally, dividing the both sides by $||M||_1$ and using $\navg=||M||_1/k$ in the above equation completes the proof.
\end{proof}

Using $||M||_1 = \frac{kr}{\epsilon^2}\cdot \log^2\frac{r}{\epsilon}$ in the  above theorem gives Theorem~\ref{th:main2}.

The next subsection gives the implication of the above result for collaborative filtering.
\subsection{Collaborative filtering}\label{app:collf}

For the same general bounded noise model~\cite{borgs2017thy} derived the mean square error $\sum_{i,j}(F_{i,j}-\hat F_{i,j})^2/k^2 = \cO(r^2/(pk)^{2/5})$. They assume that the mean matrix $F$ is generated by a Lipschitz latent variable model.
Here we show that Curated SVD achieves a better accuracy, in a stronger norm, and without the additional Lipschitz assumption on $F$. 

Note that since $0\le F_{i,j}\le 1$, $|F_{i,j}-\hat F_{i,j}|\ge |F_{i,j}-\hat F_{i,j}|^2$.
Therefore, $L_1$ error $\sum_{i,j}|F_{i,j}-\hat F_{i,j}|/k^2$ upper bounds mean squared error.

Recall that in collaborative filtering model $M_{i,j} = p F_{i,j}$. Since the sampling probability $p$ can be estimated to very good accuracy hence without loose of generality assume that $p$ is known. 
Note that $||M||_1 = p ||F||_1 = p k^2 F_{i,j}^{\text{avg}} $, where $F_{i,j}^{\text{avg}} = ||F||_1/k^2\le 1 $ as $\forall i,j$, $F_{i,j}\le 1$.
We let $F^{\text{cur}}(X) = M^{\text{cur}}(X)/p$. 

Then using Theorem~\ref{th:main2s} for this model implies:
\[
\frac{||M-M^{\text{cur}}(X)||_1}{||M||_1}= \frac{p||F-F^{\text{cur}}(X)||_1}{p k^2 F_{i,j}^{\text{avg}}}
\le  \cO\Bigg(\sqrt{\frac {kr}{p k^2 F_{i,j}^{\text{avg}}}} \log(\frac{r }{k} p k^2 F_{i,j}^{\text{avg}})\Bigg).
\]
Therefore,
\[
\frac{||F-F^{\text{cur}}(X)||_1}{k^2}
\le  \cO\Bigg(\sqrt{\frac {r F_{i,j}^{\text{avg}}}{p k }} \log({r }\cdot p k F_{i,j}^{\text{avg}})\Bigg)\le \cO\Bigg(\sqrt{\frac {r }{p k }} \log({r } p k )\Bigg).
\]

We get the following Corollary.
\begin{corollary}
Curated SVD runs in polynomial time, 
and for every $k$, $r$, $\epsilon>0$, sampling probability $p$, $F\in \Rkkr$, $F_{i,j}\in[0,1]$ and $X\sim p F$, returns an estimate $F^{\text{cur}}(X)$ s.t. with probability $\ge 1-k^{-2}$, 
\[
\frac{||F-F^{\text{cur}}(X)||_1}{k^2}
\le \cO\Bigg(\sqrt{\frac {r }{p k }} \log({r } p k )\Bigg).
\]
\end{corollary}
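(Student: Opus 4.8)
The plan is to obtain this as a direct specialization of Theorem~\ref{th:main2s} to the collaborative-filtering model, where $M = pF$ and the observations $X_{i,j} = \mathbbm{1}_{i,j}Y_{i,j}$ already satisfy the unified-framework hypotheses $\E[X_{i,j}] = M_{i,j}$ and $\mathrm{Var}(X_{i,j}) \le M_{i,j}$ (the latter because $X_{i,j}\in[0,1]$, so $\mathrm{Var}(X_{i,j})\le\E[X_{i,j}^2]\le\E[X_{i,j}]$). The extra ingredients are the identification $F^{\text{cur}}(X) := M^{\text{cur}}(X)/p$, the trivial bound $F_{i,j}^{\text{avg}} := ||F||_1/k^2 \le 1$ coming from $F_{i,j}\in[0,1]$, and the remark that $p$ may be treated as known.

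First I would dispose of the assumption that $p$ is known. Since $\mathbbm{1}_{i,j}\sim\Ber(p)$ are independent over the $k^2$ pairs, $\widehat p := \frac{1}{k^2}\sum_{i,j}\mathbbm{1}_{i,j}$ is a $(1\pm o(1))$-multiplicative estimate of $p$ with high probability by a Chernoff bound (the bound is vacuous unless $pk^2$ is large, so this is harmless), and replacing $p$ by $\widehat p$ throughout perturbs every quantity below by a factor $1\pm o(1)$ that is absorbed into $\cO(\cdot)$. So assume $p$ known, set $F^{\text{cur}}(X) := M^{\text{cur}}(X)/p$, and note $||F - F^{\text{cur}}(X)||_1 = \frac1p||M - M^{\text{cur}}(X)||_1$ and $||M||_1 = p||F||_1 = pk^2 F_{i,j}^{\text{avg}}$.

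Next I would invoke Theorem~\ref{th:main2s} for $M$: with probability $\ge 1 - k^{-2}$,
\[
\frac{||M - M^{\text{cur}}(X)||_1}{||M||_1} = \cO\!\Big(\sqrt{\tfrac{kr}{||M||_1}}\,\log\tfrac{r||M||_1}{k}\Big).
\]
Dividing $||F - F^{\text{cur}}(X)||_1$ by $k^2$ and substituting $||M||_1 = pk^2 F_{i,j}^{\text{avg}}$ gives
\[
\frac{||F - F^{\text{cur}}(X)||_1}{k^2} = F_{i,j}^{\text{avg}}\cdot\frac{||M - M^{\text{cur}}(X)||_1}{||M||_1} = \cO\!\Big(\sqrt{\tfrac{r F_{i,j}^{\text{avg}}}{pk}}\,\log\big(r p k F_{i,j}^{\text{avg}}\big)\Big),
\]
and $F_{i,j}^{\text{avg}}\le 1$ together with monotonicity of $\sqrt{\cdot}$ and $\log(\cdot)$ yields the stated $\cO(\sqrt{r/(pk)}\,\log(rpk))$ bound; polynomial running time is inherited verbatim from Theorem~\ref{th:main2s}.

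There is essentially no real obstacle: the statement is a corollary in the literal sense. The only step needing justification is the reduction from unknown $p$ to known $p$, which is a one-line Chernoff argument; everything else is bookkeeping with the rescaling $M = pF$ and the observation $F_{i,j}^{\text{avg}}\le 1$.
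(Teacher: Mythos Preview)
Your proposal is correct and follows essentially the same route as the paper: apply Theorem~\ref{th:main2s} to $M=pF$, define $F^{\text{cur}}(X)=M^{\text{cur}}(X)/p$, substitute $\|M\|_1=pk^2F_{i,j}^{\text{avg}}$, and use $F_{i,j}^{\text{avg}}\le 1$. The only difference is that you spell out the Chernoff estimation of $p$ where the paper simply asserts that $p$ ``can be estimated to very good accuracy''; otherwise the arguments are line-for-line the same.
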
 

Note that the above bound on the $L_1$ error norm is strictly better than the previous bound on the mean square error, and, as we showed, MSE is also a weaker error norm than $L_1$ for this setting.



\section{Properties of the Noise matrix}\label{app:propn}
Here we give the proof of Theorem~\ref{th:sp norm}.
We in fact prove a somewhat more general version of the theorem. Accordingly, we define the generalisation of the weights $\swgen$ and $\swgenX$ defined in the paper. For any $\specparaN> 0$, define weights $\wgen := (\nfm,\nbm)$ such that,
\[\nfem := \textstyle\max\{1, \frac{||M_{i,*}||_1}{\specparaN}\} \text{ and }\nbem := \max\{1, \frac{||M_{*,j}||_1}{\specparaN}\}. \]
And similarly define $\wgenX = (\nfx,\nbx)$ such that,
\[\nfex := \textstyle\max\{1,  \frac{||X_{i,*}||_1}{\specparaN}\} \text{ and }\nbex := \max\{1, \frac{||X_{*,j}||_1}{\specparaN}\}.\]
We obtain the results for the regularization weights $\nfem$ and $\nfex$. Note that the regularization weights $\swgen$ and $\swgenX$, used in the main paper, are a special case of these regularization weights $\nfem$ and $\nfex$ for $\specparaN = \navg$.

The next theorem is a generalisation of Theorem~\ref{th:sp norm}. This theorem bounds the spectral norm of the noise matrix regularized by weights $\wgenX$, and Theorem~\ref{th:sp norm} can be obtained as a special case of this theorem for  $\specparaN = \navg$.

\begin{theorem}\label{th:sp norm gen}
For $X\sim M$, any $\specparaN> 0$, $\epsilon \ge 
\frac1{\specparaN}\max\big(\frac{\log^4 k} {k},{\exp^{-\frac{\specparaN}{8}}}\big)
$, with probability $\ge 1-6k^{-3}$,
there is a row subset $\Iellbad\subseteq [k]$ of possibly contaminated rows
such that
$\sum_{i\in I}\nfex\le \epsilon k$ and
\[
|| {R(N,\wgenX)} _{\Iellbad\comp}||
\leq
\cO\big(\sqrt{\specparaN}\cdot\log{\textstyle\frac 2\epsilon}\big).
\]
\end{theorem}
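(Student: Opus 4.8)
The argument extends the three-step scheme of~\cite{le2017concentration}, developed there for near-uniform Bernoulli matrices, to arbitrary models and to the weighted regularization.

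\emph{Step 1: from empirical to ideal weights.} Since $\E[\,||X_{i,*}||_1\,]=||M_{i,*}||_1$, a Bernstein bound plus a union bound over the $2k$ rows and columns gives, with probability $\ge 1-k^{-3}$, that the empirical weight $\nfex$ and the ideal weight $\nfem$ agree up to a constant factor on every row $i$ with $||M_{i,*}||_1\gtrsim\specparaN$, and similarly for columns. A row (or column) can fail this only if its expected weight is moderate but it was atypically under-sampled, an event of probability $e^{-\Omega(\specparaN)}$; I would put all such rows/columns into an initial part of $\Iellbad$. The hypothesis $\epsilon\ge\frac{1}{\specparaN}\max(\frac{\log^4 k}{k},e^{-\specparaN/8})$ is exactly what makes the total empirical weight of these rows at most $\tfrac12\epsilon k$, and at the same time makes the leftover budget $\tfrac12\epsilon k$ dominate the $\mathrm{polylog}(k)$-sized exceptional row sets produced in Step 3. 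On the surviving index sets $D^{-1/2}(\nfx)$ and $D^{-1/2}(\nbx)$ differ from $D^{-1/2}(\nfm)$ and $D^{-1/2}(\nbm)$ entrywise by bounded factors, so $||R(N,\wgenX)_{I\times J}||$ and $||R(N,\wgen)_{I\times J}||$ are within a constant factor, and it suffices to bound the latter.

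\emph{Step 2: uniform $\ell_\infty\to\ell_2$ concentration of $R(N,\wgen)$.} The elementary fact that drives everything is that ideal regularization makes $R(N,\wgen)$ behave like a mean-zero matrix of ``effective degree'' $\specparaN$: from $\Var(X_{i,j})\le M_{i,j}$ and $\nfem\ge ||M_{i,*}||_1/\specparaN$ we get $\sum_{j}\Var\!\big(R(N,\wgen)_{i,j}\big)\le\specparaN$, and symmetrically for columns. I would then show that, with probability $\ge 1-k^{-3}$, simultaneously for all row sets $I$ and column sets $J$,
\[
\max_{x\in\{-1,0,1\}^{J}}\big\|R(N,\wgen)_{I\times J}\,x\big\|
=\cO\!\Big(\sqrt{\specparaN\,|J|}\;+\;\sqrt{\specparaN\,(|I|+|J|)\,\log\tfrac{2ek}{|I|+|J|}}\Big),
\]
via a Bennett/Bernstein tail bound on each coordinate $\langle R(N,\wgen)_{i,*},x\rangle$, summation over $i\in I$, and a union bound over the $\binom{k}{|I|}\binom{k}{|J|}3^{|J|}$ choices of $(I,J,x)$. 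This is the weighted analogue of the bounded-degree $\ell_\infty\to\ell_2$ estimate in~\cite{le2017concentration}; the second term carries the union-bound overhead and is the origin of the final $\log\frac{2}{\epsilon}$ factor.

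\emph{Step 3: recursive Grothendieck--Pietsch peeling.} Starting from $A=R(N,\wgen)$ on the surviving index sets, I would iterate: feed the $\ell_\infty\to\ell_2$ bound of Step 2 into a Grothendieck--Pietsch factorization~\cite{ledoux2013probability} to obtain a probability measure on the rows (and one on the columns) of the current sub-matrix under which the re-weighted sub-matrix has spectral norm $\cO(\sqrt{\specparaN})$; split off the few rows/columns carrying an outsized share of that measure as a separate block, re-estimate the spectral norm of that block by applying the Step-2 bound at its (smaller) scale, and recurse on the residual. The residual shrinks geometrically, so after $\cO(\log\frac{2}{\epsilon})$ rounds its empirical weight is $\le\tfrac12\epsilon k$; adding this last residual to $\Iellbad$ completes the bound $\nfx(\Iellbad)\le\epsilon k$. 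The peeling writes $R(N,\wgen)_{\Iellbad\comp}$ as a sum of $\cO(\log^{2}\frac{2}{\epsilon})$ row-by-column blocks, each of spectral norm $\cO(\sqrt{\specparaN})$; since the squared spectral norm of a matrix is at most the number of its column groups times the largest, over column groups, of the sum of the squared spectral norms of the blocks in that group, this gives $\|R(N,\wgen)_{\Iellbad\comp}\|^2=\cO(\specparaN\log^{2}\frac{2}{\epsilon})$, and Step 1 converts it to $\|R(N,\wgenX)_{\Iellbad\comp}\|=\cO(\sqrt{\specparaN}\,\log\frac{2}{\epsilon})$, which is Theorem~\ref{th:sp norm gen}.

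\emph{Main obstacle.} Essentially all the difficulty sits in Step 3: one must carry out the peeling so that (a) every block it emits --- not just the final residual --- has its spectral norm certified by re-invoking the Step-2 estimate at the appropriate scale, and (b) the total empirical weight deleted over the $\cO(\log\frac{2}{\epsilon})$ levels telescopes to $O(\epsilon k)$ rather than picking up an extra logarithmic factor, which forces the per-level deletion to decay geometrically and has to be reconciled with the stopping rule imposed by the second term in the Step-2 bound. Making this weighted discrepancy/factorization bookkeeping consistent with the non-uniform variances of Steps 1--2 is where the real work is.
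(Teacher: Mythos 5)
Your plan follows the same three pillars as the paper's proof: an $\ell_\infty\to\ell_2$ concentration estimate for submatrices of the regularized noise matrix, a Grothendieck--Pietsch factorization to pass from $\ell_\infty\to\ell_2$ to spectral norm while splitting off a small-weight exceptional set, a geometric peeling of the residual block, and a norm-of-disjoint-blocks combination at the end (the paper's Lemma~\ref{le:subnorm}). The one genuine organizational departure is your Step 1. The paper never passes to the ideal weights $\wgen$: it works with the empirical weights $\wgenX$ throughout, proves the $\ell_\infty\to\ell_2$ bound directly for $\nf\cdot N$ (one-sided, empirical row weights only), and absorbs the under-sampled rows ($\nfex<\nfem/2$) \emph{inside} that lemma via a two-category split ($T_i\in\{1,2\}$ in Lemma~\ref{inf2}, with a separate Chernoff-type claim controlling $\sum_i\bar\xi_i(\nfem)^2$). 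The payoff of the paper's choice is that every exclusion produced by G--P and by the peeling is automatically measured in the empirical weight $\nfx$/$\nbx$, so the budget $\sum_{i\in\Iellbad}\nfex\le\epsilon k$ follows immediately. Your reduction to $\wgen$ opens the bookkeeping issue you yourself flag: rows and columns removed by the ideal-weight peeling are controlled in $\nfem$/$\nbem$ but must be charged to the theorem's $\nfex$-budget, and there is no useful one-sided bound $\nfex\lesssim\nfem$ to lean on (over-sampled rows break it). You would need to run the G--P/peeling comparison against the empirical weights anyway, at which point the separate Step 1 buys little. One more small point: you speak of putting failing rows \emph{and columns} into $\Iellbad$, but $\Iellbad$ is rows-only; the paper instead first bounds the norm after excluding the full submatrix $I_{t+1}\times J_{t+1}$ and then notes that $R(N,\wgenX)_{I_{t+1}\comp}$ is a row-restriction of that matrix, so interlacing finishes it. Finally, the per-level norms in the peeling are $\cO(\sqrt{j\specparaN})$ rather than all $\cO(\sqrt{\specparaN})$, and there are $2t=\cO(\log\frac2\epsilon)$ blocks rather than $\log^2$, but your combination rule gives the same $\cO(\sqrt{\specparaN}\log\frac2\epsilon)$ once the constants are tracked, so this is just a different way of doing the same sum.
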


To prove the above theorem we first establish the bound on $\ell_{\infty}\rightarrow\ell_2$ norm of submatrices of the regularized noise matrix and use a known result, referred as Grothendieck-Pietsch factorization, to relate this norm to spectral norm. Next we define $\ell_{\infty}\rightarrow\ell_2$ norm and state Grothendieck-Pietsch factorization.

The $\ell_{\infty}\rightarrow\ell_2$ norm of a matrix $\gnrmtr \in \Rkm$ is
\[
||\gnrmtr ||_{\infty\rightarrow 2} := \max_{||v||_\infty=1}||\gnrmtr v||_2 = \max_{v\in\{-1,1\}^m}||\gnrmtr v||_2.
\]
Since the vector $v$ in this definition takes value in the finite set $(\{-1,1\}^m)$, standard probabilistic techniques are better suited for bounding the $\ell_{\infty}\rightarrow\ell_2$ norm than for bounding the spectral norm directly. In turn, Grothendieck-Pietsch factorization helps us relate $\ell_{\infty}\rightarrow\ell_2$ and spectral norm.

\begin{theorem}\label{th:groth}
\emph{(Grothendieck-Pietsch factorization)}\\
For any $\gnrmtr \in \Rkm$ there is a vector $\mu=(\mu(1),...,\mu(m))$ with $\mu(j)\ge 0$ and $\sum_j\mu(j)=1$ such that
\[
||\gnrmtr \cdot  D^{-\frac12}(\mu)||
\le
\sqrt{\frac{\pi}{2}}\cdot\norm{\infty\rightarrow 2}{\gnrmtr}.
\]
\end{theorem}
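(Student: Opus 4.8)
This is the classical Grothendieck--Pietsch factorization, and the plan is to reconstruct it in two steps. Write $B := A^\intercal A \succeq 0$, and note that $||A||_{\infty\to 2}^2 = \max_{\varepsilon\in\{-1,1\}^m}\varepsilon^\intercal B\varepsilon$ while $||A\,D^{-\frac12}(\mu)||^2 = \max_{||w||=1} w^\intercal D^{-\frac12}(\mu)\, B\, D^{-\frac12}(\mu)\, w$. First I would prove a \emph{quadratic Grothendieck inequality}: the semidefinite quantity $c^2 := \sup\{\langle B,S\rangle : S\succeq 0,\ S_{jj}\le 1\ \forall j\}$ satisfies $c^2 \le \tfrac{\pi}{2}\,||A||_{\infty\to 2}^2$. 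Then I would use \emph{Pietsch domination} to produce a probability vector $\mu$ on $[m]$ with $||Ax||_2^2 \le c^2\sum_j \mu(j)\,x(j)^2$ for every $x$; substituting $x = D^{-\frac12}(\mu)w$ turns this into $||A\,D^{-\frac12}(\mu)\,w||_2 \le c\,||w||_2$, i.e. $||A\,D^{-\frac12}(\mu)|| \le c \le \sqrt{\pi/2}\,||A||_{\infty\to2}$, which is the claim.

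For the first step, since $B$ has nonnegative diagonal entries, raising the diagonal of any feasible $S$ up to $1$ keeps it positive semidefinite and does not decrease $\langle B,S\rangle$, so it is enough to bound $\langle B,S\rangle$ for $S\succeq 0$ with $S_{jj}=1$. Writing $S = \sum_j w_j w_j^\intercal$ with unit vectors $w_j$, drawing a standard Gaussian $g$, and setting $\varepsilon_j := \operatorname{sign}\langle w_j,g\rangle$, Grothendieck's identity gives $\mathbb{E}[\varepsilon_j\varepsilon_{j'}] = \tfrac2\pi\arcsin\langle w_j,w_{j'}\rangle$; since $\arcsin(t)-t$ has a power series with nonnegative coefficients, the Schur product theorem makes $\big[\arcsin\langle w_j,w_{j'}\rangle - \langle w_j,w_{j'}\rangle\big]_{j,j'}$ positive semidefinite, and pairing it with $B\succeq 0$ gives $\mathbb{E}[\varepsilon^\intercal B\varepsilon] \ge \tfrac2\pi\langle B,S\rangle$. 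Because $\varepsilon\in\{-1,1\}^m$ always, $\varepsilon^\intercal B\varepsilon = ||A\varepsilon||_2^2\le ||A||_{\infty\to2}^2$, and taking expectations yields $\langle B,S\rangle\le\tfrac{\pi}{2}||A||_{\infty\to2}^2$.

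For the second step, for $x$ on the unit sphere set $g_x(\mu) := ||Ax||_2^2 - c^2\sum_j\mu(j)\,x(j)^2$, affine in $\mu$ and continuous in $x$; I want a $\mu\in\Delta_m$ with $g_x(\mu)\le 0$ for all unit $x$ (homogeneity then extends it to all $x$). Applying Sion's minimax theorem over the compact simplex $\Delta_m$ and the compact convex set of probability measures $\nu$ on the sphere (legitimate since $g_x(\mu)$ is affine in $\mu$ and $\mathbb{E}_{x\sim\nu}g_x(\mu)$ is linear in $\nu$),
\[
\min_{\mu\in\Delta_m}\ \sup_{||x||=1}\,g_x(\mu) \;=\; \sup_{\nu}\ \min_{\mu\in\Delta_m}\ \mathbb{E}_{x\sim\nu}\,g_x(\mu).
\]
For fixed $\nu$, minimizing the affine function of $\mu$ over $\Delta_m$ places all mass on a coordinate maximizing $\mathbb{E}_\nu[x(j)^2]$, so the right-hand side equals $\sup_\nu\big(\langle B, S_\nu\rangle - c^2\max_j (S_\nu)_{jj}\big)$ with $S_\nu := \mathbb{E}_\nu[xx^\intercal]\succeq 0$; rescaling $S_\nu$ to have diagonal at most $1$ and invoking the definition of $c^2$ makes each bracket $\le 0$. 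Hence $\min_\mu\sup_x g_x(\mu)\le 0$, the minimum is attained by compactness, and the minimizer $\mu^\star$ is the required vector (on coordinates with $\mu^\star(j)=0$ the corresponding column of $A$ vanishes, by testing $x=e_j$, so $A\,D^{-\frac12}(\mu^\star)$ is unambiguous).

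The one genuinely delicate point is the second step: one has to set up the minimax exchange carefully --- the sphere must be lifted to probability measures so the bilinear structure is visible --- and normalize $S_\nu$ so that the constant from the first step transfers with no loss; the support issue around $D^{-\frac12}(\mu)$ is a routine technicality. In practice the paper need not reprove any of this --- the statement is quoted verbatim from~\cite{ledoux2013probability} --- but the two steps above are the argument underlying it.
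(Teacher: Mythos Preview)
Your proposal is correct and matches the paper's treatment: the paper does not prove this theorem but simply cites it, noting that it ``can be obtained by combining Little Grothendieck Theorem and Pietsch Factorization'' from~\cite{ledoux2013probability} and~\cite{le2017concentration}, and you have faithfully reconstructed exactly that two-step argument (the $\pi/2$ bound via Gaussian rounding of the PSD relaxation, followed by the Pietsch domination via minimax). You also correctly anticipated that the paper would not reprove this classical result.
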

The above result can be obtained by combining Little Grothendieck Theorem and Pietsch Factorization, and has appeared in~\cite{ledoux2013probability} (Proposition 15.11) and~\cite{le2017concentration} (Theorem 3.1).

To prove the above theorem, therefore, we first bound the $\ell_{\infty}\rightarrow\ell_2$ norm of submatrices of $\nf\cdot {\noisemtrspt}$.
The proof of the lemma is based on standard use of the probabilistic methods. 
Due to the symmetry, a similar bound will hold on $\ell_{\infty}\rightarrow\ell_2$ norm of submatrices of $(\noisemtrspt\cdot \nb)^\intercal = \nb\cdot \noisemtrspt^\intercal$.

\begin{lemma} \label{inf2}($\ell_\infty\rightarrow \ell_2$ concentration) With probability $\ge 1-3k^{-3}$, for
every $\max(\frac{\log^4 k}{\specparaN},\frac{k \exp^{-\frac{\specparaN}{8}}}{\specparaN})\leq \ell\leq k $, and every $I, J\subseteq [k]$ of size $\ell$, 
\[
||(\nf\cdot {\noisemtrspt})_{I\times J}||_{\infty\rightarrow 2}\leq \cO(\sqrt{\specparaN \ell \log (e k/\ell)}).
\]
\end{lemma}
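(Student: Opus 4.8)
The plan is a single union bound over the side length $\ell$ (in the stated range), the index sets $I,J\subseteq[k]$ with $|I|=|J|=\ell$, and the sign vector $v\in\{-1,1\}^J$ achieving the $\ell_\infty\to\ell_2$ norm. For fixed $(\ell,I,J,v)$, writing $Y_i:=\sum_{j\in J}\noisemtrspt_{i,j}v_j$ and $\mu_i:=\norm1{M_{i,*}}$, one has
\[
\bigl\|(\nf\cdot\noisemtrspt)_{I\times J}\,v\bigr\|_2^2=\sum_{i\in I}\frac{Y_i^2}{\nfex},
\qquad
\E[Y_i^2]=\sum_{j\in J}\mathrm{Var}(X_{i,j})\le\mu_i .
\]
After regularization the expected contribution of row $i$ is about $\mu_i/\max(1,\mu_i/\specparaN)=\min(\mu_i,\specparaN)\le\specparaN$, so the whole sum has mean $\le\specparaN\ell$, and the extra $\log(ek/\ell)$ in the target is exactly the overhead of union-bounding over the $\binom{k}{\ell}^2 2^\ell\le\exp(\cO(\ell\log(ek/\ell)))$ triples and the $\cO(k)$ values of $\ell$. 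Thus it suffices to show, for each fixed $(\ell,I,J,v)$, that $\sum_{i\in I}Y_i^2/\nfex$ exceeds $C\specparaN\ell\log(ek/\ell)$ with probability at most $\exp(-\Omega(C\ell\log(ek/\ell))-\Omega(\log k))$ for a large absolute constant $C$, noting $\ell\log(ek/\ell)=\Omega(\log k)$ throughout the range.

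The only non-routine issue is that the weights $\nfex=\max\{1,\norm1{X_{i,*}}/\specparaN\}$ are data-dependent, with $\nfex$ correlated with $Y_i$ through row $i$ of $X$. I would remove this up front via a global regularity event $\mathcal R$, established once with probability $\ge1-\cO(k^{-3})$: (i) every row with $\mu_i\ge c_0\specparaN$ has $\norm1{X_{i,*}}\ge\mu_i/c_0$ (lower-tail Chernoff for a sum of independent non-negative variables), hence $\nfex\ge\nfem/c_0$, where $\nfem$ is the ideal weight; and (ii) this can fail only for rows with $\mu_i>c_0\specparaN$, independently with probability $\le e^{-\mu_i/c_1}$ with $c_1$ as large as desired by enlarging $c_0$, and — after truncating away the vanishingly unlikely event that a very heavy row is ``bad'', and using the hypotheses $\ell\ge\log^4k/\specparaN$ and $\ell\ge k\,e^{-\specparaN/8}/\specparaN$ — the crude total weight $\sum_{i\ \text{bad}}\mu_i^2$ of these exceptional rows is, with probability $\ge1-\cO(k^{-3})$, at most half the target bound for the smallest admissible $\ell$ (hence for all of them). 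Granting $\mathcal R$, the target sum is at most $\cO(1)\sum_{i\ \text{bad}}\mu_i^2+c_0\sum_{i\in I}\min\{1,\specparaN/\mu_i\}\,Y_i^2$, in which the first term is already $\le\tfrac12$ of the target and the second is a sum of independent terms with \emph{deterministic} coefficients.

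It remains to concentrate $\sum_{i\in I}\min\{1,\specparaN/\mu_i\}\,Y_i^2$. The delicate feature is the factor $2^\ell$ from the sign vector: a bound on the mean alone is not enough, one must control the \emph{number} of rows whose $|Y_i|$ is anomalously large. I would truncate the row sums — split off the rows with $|Y_i|>c_2\sqrt{\specparaN\log(ek/\ell)}$, for which Bernstein's inequality (the increments of $Y_i$ are bounded, or sub-exponential in the Poisson and Binomial cases, with total variance $\le\mu_i$) gives per-row probability $\le e^{-\Omega(\log(ek/\ell))}$, so that, again invoking $\ell\ge\log^4k/\specparaN$ and $\ell\ge ke^{-\specparaN/8}/\specparaN$, with probability $1-e^{-\Omega(\ell\log(ek/\ell))}$ there are few such rows and their (boundedly large) contributions are negligible; on the complementary rows $Y_i^2\le c_2^2\specparaN\log(ek/\ell)$ deterministically, so a Bernstein bound for the remaining independent, now-bounded, summands pins their weighted sum at $\cO(\specparaN\ell\log(ek/\ell))$. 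Summing the failure probabilities of $\mathcal R$, of the bad-weight bound, and of this truncation/concentration step over all $(\ell,I,J,v)$ gives the claim with failure probability $\cO(k^{-3})$, and the symmetric bound for $(\noisemtrspt\cdot\nb)^\intercal$ follows by transposition.

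The step I expect to be the real obstacle is the last one: defeating the $2^\ell$ from the sign vector forces both a careful choice of the truncation level and a counting argument for the large-deviation rows, and it is exactly this accounting — together with the two-sided handling of the data-dependent weights in the previous paragraph — that pins down the two lower bounds on $\ell$ in the hypothesis.
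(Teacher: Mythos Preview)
Your high-level plan is essentially the paper's: split rows by whether the observed weight $\nfex$ tracks the ideal weight $\nfem$, control the total mass of the ``bad'' rows using the two lower bounds on $\ell$ (this is exactly the paper's Claim in the proof), and concentrate on the good rows against the $(ek/\ell)^{\cO(\ell)}$ union bound. The bad-row accounting you sketch matches the paper's dyadic-layer argument on $\nfem$.

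The gap is in the good-row concentration, precisely the step you flagged. Two issues. First, the truncation threshold $|Y_i|>c_2\sqrt{\specparaN\log(ek/\ell)}$ is on the \emph{unweighted} row sum; for a heavy row with $\mu_i\gg\specparaN$ the typical size of $|Y_i|$ is $\sqrt{\mu_i}$, so the ``per-row probability $\le e^{-\Omega(\log(ek/\ell))}$'' claim is false as stated. Second, and more structurally, by passing from $\nfex$ to the deterministic $\nfem$ (your $\min\{1,\specparaN/\mu_i\}$) you discard a correlation that the paper actually exploits: on good rows one has simultaneously $\nfex\ge\nfem/2$ \emph{and} $\nfex\ge |Y_i|/(3\specparaN)$ (since $|Y_i|\le(\nfex+\nfem)\specparaN\le3\nfex\specparaN$). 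The paper keeps the data-dependent weight and uses both lower bounds to get
\[
|\widehat Z_i|=\frac{|Y_i|}{\sqrt{\nfex}}\ \le\ \min\Bigl\{\,\frac{|Y_i|}{\sqrt{\nfem/2}},\ \sqrt{3\specparaN\,|Y_i|}\,\Bigr\}.
\]
The first branch feeds the Gaussian regime of Bernstein on $Y_i$; the second turns the exponential-tail regime of Bernstein on $Y_i$ back into a Gaussian tail for $\widehat Z_i$ (because $\Pr(\sqrt{\specparaN|Y_i|}>t)=\Pr(|Y_i|>t^2/\specparaN)$). Hence $\widehat Z_i$ is genuinely sub-Gaussian with parameter $\cO(\sqrt{\specparaN})$, so $\widehat Z_i^2$ is sub-exponential with parameter $\cO(\specparaN)$, and a single application of Bernstein for sums of independent sub-exponentials gives $\Pr\bigl(\sum_{i\in I}\widehat Z_i^2>C\specparaN\ell\log(ek/\ell)\bigr)\le(ek/\ell)^{-c\ell}$ directly, with no truncation, no ``few large rows'' counting, and no need to justify that large-deviation rows are ``boundedly large''. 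If instead you freeze the weight at $\nfem$, the square-root damping is lost, $\widehat Z_i$ is only sub-exponential, $\widehat Z_i^2$ has a sub-Weibull tail, and the clean one-shot Bernstein no longer closes against the union bound.
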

\begin{proof}
\begin{align*}
||( {\nf\cdot \noisemtrspt})_{I\times J}||^2_{\infty\rightarrow 2}&= \max_{v\in \{-1,1\}^\ell}||( {\nf\cdot \noisemtrspt})_{I\times J} \cdot  v||^2\\
&= \max_{v\in \{-1,1\}^\ell}\sum_{i\in I}\Big(\sum_{j\in J}\frac{\elenoisemtrspt}{\sqrt{\nfex}} v(j)\Big)^2\\
&= \max_{v\in \{-1,1\}^\ell}\sum_{i\in I}\frac{Z_i(v)^2}{\nfex} =\max_{v\in \{-1,1\}^\ell}\sum_{i\in I}{\widehat{Z}_i(v)^2},
\end{align*}
where 
\[
Z_i(v) = \sum_{j\in J}\elenoisemtrspt v(j) \quad \text{and}\quad \widehat{Z}_i(v)= \frac{Z_i(v)}{\sqrt{\nfex}}  .\] 

For a fix $v$, $Z _i(v)$ is the sum of independent zero-mean random variables, the following bound follows from  Bernstein’s inequality, for any $t>0$
\begin{align}
\Pr( |Z _i(v)| > t ) \leq 2\exp\Bigg(  \dfrac{-t^2/2}{||M_{i,*}||_1+t/3}\Bigg)\le 2\exp\Bigg(  \dfrac{-t^2/2}{\specparaN\nfem+t/3}\Bigg).\label{eq:conber}
\end{align}

Observe that 
\begin{align}
|Z_i(v)|\le \sum_{j\in J} |N_{i,j}|\le ||N_{i,*}||_1 \le ||X_{i,*}||_1+||M_{i,*}||_1\leq   (\nfex+\nfem)\specparaN
.\label{eq:whenmed}
\end{align}



Based on the values of $\nfem$ and $\nfex$ we divide the rows into two categories, and let random variable $T_i$ denote the category of row $i$, 
\[
T_i := 
\begin{cases}
1 &  \text{ if } \nfex\ge \frac\nfem2,\\
2 &  \text{ else }.
\end{cases}
\]
Let ${\xi_i}:=\mathbbm{1}_{\{T_i=1\}} $ be the indicator random variable corresponding to the event that row $i$ is in the first category. Hence, ${\bar\xi_i} := 1-{\xi_i} =\mathbbm{1}_{\{T_i=2\}} $. Then
\[
\sum_{i\in I} \widehat{Z}_i(v)^2 = \sum_{i\in I} (\xi_i+ \bar\xi_i) \widehat{Z}_i(v)^2.
\]
Next, we bound the contribution of the rows in each categories to the above term.

\begin{enumerate}
\item 
$T_i=1:\ \nfex\ge \frac\nfem2 $. \\
To bound the contribution of the rows in category 1, we show for any given $v$, ${\xi_i} \widehat{Z}_i(v)$ are sub-Gaussian random variables with sub-Gaussian norm $\cO( \sqrt{\specparaN})$. Then we bound sum of their squares, which are sub-exponential random variable, by applying Bernstien's concentration bound. We first prove that ${\xi_i} \widehat{Z}_i(v)$ are sub-Gaussian.

From equation~\eqref{eq:whenmed} and the definition of category 1, we get
\[
|{\xi_i}Z_i(v)|\le |Z_i(v)| \le(\nfex+\nfem)\specparaN\le (\nfex+2\nfex)\specparaN= 3\nfex\specparaN,
\]
hence
\[
\nfex\ge \frac{|{\xi_i}Z_i(v)|}{\specparaN}.
\]
Then
\[
|{\xi_i}\widehat{Z}_i(v)|=\frac{|{\xi_i}Z_i(v)|}{\sqrt{\nfex}}\le \min\Big\{|\sqrt{\specparaN Z_i(v)}|,\frac{|Z_i(v)|}{\sqrt{\nfem/2}}\Big\},
\]
here we used the previous equation, the fact that ${\xi_i}\le 1$, and $\nfex\ge \nfem/2 $, which follows from the definition of the category 1. 
Using equation~\eqref{eq:conber} we get
\begin{align*}
    \Pr( \frac{|Z_i(v)|}{\sqrt{\nfem/2}} > t )&= \Pr( {|Z_i(v)|} > t\sqrt{\frac\nfem2} ) \le 
    2\exp\bigg(\dfrac{-t^2\nfem/4}{\specparaN \nfem +\frac t3\cdot\sqrt{\frac\nfem2}}\bigg).
\end{align*}
For $t\leq 3\specparaN \sqrt{\frac\nfem2}$, the above equation gives the following bound
\begin{align}
    \Pr( \frac{|Z_i(v)|}{\sqrt{\nfem/2}} > t )\le 
     2\exp\bigg(\dfrac{-t^2\nfem/4}{\specparaN\nfem +\specparaN\cdot{\frac\nfem2}}\bigg)=2\exp\Big(  \dfrac{-t^2}{6\specparaN}\Big).\label{eq:borror}
\end{align}
And, similarly
\begin{align}
\Pr( |\sqrt{Z_i(v)\specparaN}| > t )
&=\Pr( |{{Z_i(v)}}| > \frac{t^2}{\specparaN}) \leq 2\exp\bigg(\dfrac{-\frac{t^4}{2\specparaN^2}}{\specparaN\nfem +\frac{t^2}{3\specparaN}}\bigg)\nonumber.
\end{align}
For $t\geq 3\specparaN\sqrt{\frac\nfem2}$, the above equation give the following bound
\begin{align}
\Pr( |\sqrt{Z_i(v)\specparaN}| > t )&\le
2\exp\bigg(\dfrac{-\frac{t^4}{2\specparaN^2}}{\frac{2t^2}{9\specparaN} +\frac{t^2}{3\specparaN}}\bigg)
=
2\exp\bigg(  \dfrac{-t^2}{{10\specparaN}/{9}}\bigg)\nonumber.
\end{align}
Combining equation~\eqref{eq:borror} and the above equation we get
\begin{align*}
&\Pr( |{\xi_i}\widehat{Z}_i(v)| > t)  \le2\exp\Big(  \dfrac{-t^2}{6\specparaN}\Big).
\end{align*}

This shows that ${\xi_i}\widehat{Z}_i(v)$ is sub-Gaussian with sub-Gaussian norm $\le \sqrt{3\specparaN}$. Therefore, ${\xi_i}\widehat{Z}^2_i(v)$ is sub-exponential with sub-exponential norm  $\le 3\specparaN$. From Bernstein's inequality, for all $\varepsilon\ge1$,
\begin{equation*}
    \Pr\bigg(\sum_{i\in I} {\xi_i}\widehat{Z}^2_i(v) > \varepsilon \ell\specparaN \bigg)
    \leq
    2\exp^{-c\varepsilon \ell }.
\end{equation*}
Choosing $\varepsilon  = (14/c)\log (ek/\ell)$, bounds the above probability by
$(ek/\ell)^{-7\ell}$.
Taking the union bound over all possible $\ell, v, I$, and $J$, in above equation we get $\sum_{i\in I} {\xi_i}\widehat{Z}^2_i(v) \le \varepsilon \ell\specparaN  $ with probability {at least}
\begin{align}
     1-\sum_{\ell=1}^k 2^\ell {k\choose \ell}^2  \Paren{\frac{ek}{\ell}}^{-7\ell} \geq 1-{k^{-3}}.\label{eq:ex6}
\end{align}

\item $T_i=2:\ \nfex < \frac\nfem2$. \\
Note that,
\[
|{\bar\xi_i}\widehat{Z}_i(v)|\le |{\bar\xi_i}Z_i(v)| \overset{\text{(a)}}\le \frac 3 2 \nfem \specparaN
, 
\] 
where (a) uses equation~\eqref{eq:whenmed}. Then,
\begin{align}
 \sum_{i\in I} {\bar\xi_i}
\widehat{Z}_i(v)^2\le  \sum_{i\in [k]} {\bar\xi_i}
\widehat{Z}_i(v)^2\le \Paren{\frac 3 2}^2\sum_{i\in [k]} {\bar\xi_i}
\big({\nfem \specparaN}\big)^2
.\label{eq:postwed}
\end{align}
We bound the above term by showing:

\textbf{Claim: }With probability $\ge 1-2/k^3$,
\[
\sum_{i\in I} {\bar\xi_i}\big(\nfem
\big)^2\le  \cO\Big(\frac{\log^4 k +ke^{-\frac{\specparaN}{8}}}{\specparaN^2}\Big).
\]
\textbf{Proof of the claim:} Next, once again divide the rows into categories based on the expected count, and let $S_i$ denote the category of row $i$, 
\[
S_i := 
\begin{cases}
0 &  \text{ if } \nfem \le 2 \\
j  \quad\text{  for $j\ge 1$, }&\text{ if } {\nfem}\in (2^j,2^{j+1}].
\end{cases}
\]
For a given $M$, category $S_i$ of row $i$ is determined and is not a random variable unlike $T_i$.

Note that
\begin{align*}
\{\nfem> 2\nfex \}
&\equiv  \{\max\{1, \textstyle\frac{||M_{i,*}||_1}{\specparaN}\}> \max\{2, \textstyle\frac{2||X_{i,*}||_1}{\specparaN}\}\}
\equiv  \{||M_{i,*}||_1> \max\{2\specparaN, \textstyle 2||X_{i,*}||_1\}\}.
\end{align*}
Next,  
\begin{align*}
&\Pr\Big({2||X_{i,*}||_1} \leq {{||M_{i,*}||_1}}\Big)
=
\Pr\Big({||X_{i,*}||_1}-||M_{i,*}||_1 \leq- \frac{{||M_{i,*}||_1}}{2}\Big)\nonumber
\\
&=
\Pr\Big(\sum_{j\in [n]}(\mtrobsele-\mtrmnij) \leq -\frac{\sum_{j\in [n]}\mtrmnij}{2}\Big)
\leq e^{-\frac{||M_{i,*}||_1}{8}} 
,
\end{align*}
here we used $\E[X_{i,j}]=M_{i,j}$ and Chernoff bound.
Therefore,
\begin{align*}
\Pr[\nfem> 2\nfex ] \le 
\begin{cases}
0, &  \text{ if } ||M_{i,*}||_1 \le 2\specparaN, \\
e^{-\frac{||M_{i,*}||_1}{8}}= e^{-\frac{\specparaN\nfem}{8}}, &\text{ if } ||M_{i,*}||_1 > 2\specparaN.
\end{cases}   
\end{align*}

Then from the definitions of $S_i$ and $\nfem$, it follows that
\begin{align}\label{eq:exn1}
\Pr[{\bar\xi_i} = 1 ] = \Pr[\nfem> 2\nfex ] \le 
\begin{cases}
0 , &  \text{ if } S_i=0, \\
 e^{-\frac{\specparaN\nfem}{8}}\le e^{-\frac{2^{S_i}\specparaN}{8}}, &\text{ if } S_i \ge 1.
\end{cases}   
\end{align}


Using the Chernoff bound, for any $j\ge 1$, 
\begin{align}
    \Pr\bigg(\sum_{i: S_i=j} {\bar\xi_i}
    \geq
    15\log k +  2\sum_{i:S_i=j}\E[{\bar\xi_i}]\bigg)
    \leq k^{-5}.\label{eq:preprewed}
\end{align}
Let $\tau =\lfloor \log_2 \frac{16\ln k}{\specparaN}\rfloor$.  For a row $i$ such that $S_i > \tau $, using~\eqref{eq:exn1} we get
\[
\Pr[{\bar\xi_i} = 1]\leq k^{-4}.
\]
Therefore, with probability $\ge 1 -\frac{\{i:S_i > \tau\}}{k^5}\ge 1-\frac1{k^3} $
\begin{align}
\sum_{i: S_i>\tau} {\bar\xi_i} =0.\label{eq:prewed} 
\end{align}
Then
\begin{align*}
&\sum_{i\in [k]} {\bar\xi_i} \big(\nfem
\big)^2 \overset{\text{(a)}}= \sum_{j\geq 1} \,\sum_{i: S_i=j} {\bar\xi_i} \big(\nfem
\big)^2\\
&\overset{\text{(b)}}= \sum_{j=1}^\tau \, \sum_{i: S_i=j} \mathbbm{1}_{\{T_i=2\}} \big(\nfem
\big)^2\le \sum_{j=1}^\tau \,  \max_{i: S_i=j}\{\big(\nfem
\big)^2\}\sum_{i: S_i=j} {\bar\xi_i}\\
&\overset{\text{(c)}}\le\sum_{j=1}^\tau \,\max_{i: S_i=j}\{\big(\nfem
\big)^2\} \Big(15\log k +  2\sum_{i:S_i=j}\E[{\bar\xi_i}]\Big)\\
&\le 15 \tau  \log k \max_{i: S_i\le \tau}\{\big(\nfem
\big)^2\} + \sum_{j=1}^\tau \,\max_{i: S_i=j}\{\big(\nfem
\big)^2\} \max_{i: S_i=j}\{{\E[{\bar\xi_i}]}\}\Big(2|\{i:S_i=j\}|\Big)\\
&\overset{\text{(d)}}
\le 
15 \tau  \log k  \big(2^{\tau+1}
\big)^2
+ \sum_{j=1}^\tau \Big(|\{i:S_i=j\}|\Big)\times\Paren{2^{2j+2}\cdot  e^{-\frac{2^{j}\specparaN}{8}}} \\
&\le 
\cO(\frac{\log^4 k}{\specparaN^2}) + \cO\Big(\max_{j\ge 1}\Big\{\Big(2^{j}\specparaN\Big)^2 e^{-\frac{2^{j}\specparaN}{8}}\Big\}\cdot\specparaN^{-2}\cdot \sum_{j=1}^\tau  |\{i:S_i=j\}|\Big)
\\
&\overset{\text{(e)}}
\le \cO(\frac{\log^4 k}{\specparaN^2}) + \cO\Big(\max_{j\ge 1}\Big\{e^{-\frac{2^{j}\specparaN}{16}}\Big\}\cdot \frac{k}{\specparaN^2}\Big)\\
&\le
\cO\Big(\frac{\log^4 k +ke^{-\frac{\specparaN}{8}}}{\specparaN^2}\Big),
\end{align*}
with probability $1-1/k^3-\tau/k^5\ge 1-2/k^3$. Here (a) follows since $S_i=1$ implies ${\bar\xi_i}=0$, (b) follows from equation~\eqref{eq:prewed}, (c) uses equation~\eqref{eq:preprewed}, (d) follows from the definition of $S_i$ and equation~\eqref{eq:exn1}, and (e) follows as total number of rows are $k$ and $\frac{x^2\exp{(-x/8)}}{\exp{(-x/16)}}$ is bounded for $x>0$. This completes the proof of the claim.

Combining the Claim and ~\eqref{eq:postwed} gives the following bound on the contribution of rows in $T_i=2$,
\[
\sum_{i\in I} \bar{\xi_i} \widehat{Z}_i^2\le \cO({\log^4 k +ke^{-\frac{\specparaN}{8}}}),
\]
with probability $\ge 1-2/k^3$.

\end{enumerate}
Combining this bound and the bound in \eqref{eq:ex6},
\begin{align*}
    \sum_{i\in I}  \widehat{Z}_i^2 = \sum_{i\in I} \bar{\xi_i} \widehat{Z}_i^2+ \sum_{i\in I} {\xi_i} \widehat{Z}_i^2 \leq \cO(\log (ek/\ell)\ell\specparaN+\log^4 k +k \exp^{-\frac{\specparaN}{8}}),
\end{align*}
with probability $\ge 1-3k^{-3}$. Noting that $\ell\specparaN \geq \max(\log^4 k ,\ k \exp^{-\frac{\specparaN}{8}})$, completes the proof.
\end{proof}



The next lemma combines the bound obtained on $\ell_{\infty}\rightarrow\ell_2$ norm in the above lemma and Grothendieck-Piesch factorization to obtain bound on the spectral norm.

\begin{lemma}\label{lem:con}
With probability $\ge 1-3k^{-3}$, for
any $ \max(\frac{\log^4 k}{\specparaN} ,\  \frac{k \exp^{-\frac{\specparaN}{8}}}{\specparaN})\leq \ell\leq k$, and any $I,J\subseteq [k]$ of size $|I|,|J| =  \ell$, there exists a subset $J'\subseteq J$ such that $\sum_{j\in J'}\nbex\le \ell/2$ and  
\[
    ||{\big(R(\noisemtrspt,\wgenX)\big)}_{I\times J' }||\leq c\sqrt{\specparaN\log(ek/\ell)}.
\]
\end{lemma}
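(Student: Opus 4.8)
The plan is to combine the $\ell_\infty\rightarrow\ell_2$ concentration of Lemma~\ref{inf2} with the Grothendieck--Pietsch factorization of Theorem~\ref{th:groth}. All of the randomness is confined to conditioning on the event of Lemma~\ref{inf2}, which holds with probability at least $1-3k^{-3}$; on that event we have, simultaneously for every admissible $\ell$ and all $I,J\subseteq[k]$ with $|I|=|J|=\ell$, that $||(\nf\cdot N)_{I\times J}||_{\infty\rightarrow 2}\le C_0\sqrt{\specparaN\,\ell\,\log(ek/\ell)}$ for an absolute constant $C_0$. Everything after this conditioning is deterministic, so it is enough to fix such $\ell,I,J$ and exhibit the column set $J'$.

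First I would transfer the bound from the row-regularized matrix $\nf\cdot N$ to the fully regularized block $B:=R(N,\wgenX)_{I\times J}=D^{-\frac12}(\nfx)_I\,N_{I\times J}\,D^{-\frac12}(\nbx)_J$. For any $v\in\{-1,1\}^J$ the vector $D^{-\frac12}(\nbx)_J\,v$ has coordinates of modulus $(\nbex)^{-1/2}\le 1$, since $\nbex\ge 1$, hence lies in the cube $[-1,1]^J$; as $w\mapsto||(\nf\cdot N)_{I\times J}\,w||_2$ is convex, its maximum over $[-1,1]^J$ is attained at a vertex of the cube, so $||B||_{\infty\rightarrow 2}\le ||(\nf\cdot N)_{I\times J}||_{\infty\rightarrow 2}\le C_0\sqrt{\specparaN\,\ell\,\log(ek/\ell)}$. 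Then I would apply Theorem~\ref{th:groth} to $B$ to obtain a probability vector $\mu=(\mu(j))_{j\in J}$, $\mu(j)\ge 0$, $\sum_{j\in J}\mu(j)=1$, with $||B\,D^{-\frac12}(\mu)||\le\sqrt{\pi/2}\,||B||_{\infty\rightarrow 2}$.

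The heart of the argument is that the columns with small $\mu$ automatically satisfy the spectral bound. Set $J^{\mathrm{low}}:=\{j\in J:\mu(j)\le 4/\ell\}$; since $\sum_{j\in J}\mu(j)=1$, Markov's inequality gives $|J\setminus J^{\mathrm{low}}|<\ell/4$. For any $J'\subseteq J^{\mathrm{low}}$, factor $B_{I\times J'}=(B\,D^{-\frac12}(\mu))_{I\times J'}\cdot D^{\frac12}(\mu)_{J'}$, bound the first factor by $||B\,D^{-\frac12}(\mu)||$ using the interlacing property (Theorem~\ref{th:interlace}) and the diagonal factor by $\max_{j\in J'}\sqrt{\mu(j)}\le 2/\sqrt\ell$, obtaining $||B_{I\times J'}||\le (2/\sqrt\ell)\cdot\sqrt{\pi/2}\,||B||_{\infty\rightarrow 2}\le c\sqrt{\specparaN\log(ek/\ell)}$ for an absolute constant $c$, and this holds for every $J'\subseteq J^{\mathrm{low}}$. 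To also obey the weight constraint I would take $J'$ to be a subset of $J^{\mathrm{low}}$ that is maximal under the requirement $\sum_{j\in J'}\nbex\le\ell/2$ (for instance, add the columns of $J^{\mathrm{low}}$ greedily in increasing order of $\nbex$ until the budget is exhausted). Then $\sum_{j\in J'}\nbex\le\ell/2$ by construction, the spectral estimate of the previous step applies to this $J'$, and the whole statement holds on the event of Lemma~\ref{inf2}, that is, with probability $\ge 1-3k^{-3}$.

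I do not anticipate a serious obstacle, because the two nontrivial inputs---the $\ell_\infty\rightarrow\ell_2$ concentration and Grothendieck--Pietsch---are already in hand. The points that need care are (i) the short convexity argument transferring Lemma~\ref{inf2}, which is stated only for $\nf\cdot N$, to the column-regularized block $B$, and (ii) noting that the spectral estimate is insensitive to which sub-family of the low-$\mu$ columns is retained, so the weight constraint can be imposed afterward by trimming. (A column $j$ with $\mu(j)=0$ necessarily has a zero column in $B$, so one may include such columns in $J^{\mathrm{low}}$ or drop them freely, and the factorization above is unaffected.)
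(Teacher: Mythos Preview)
The lemma as printed is vacuous---taking $J'=\varnothing$ satisfies both conditions---so it is not surprising you could prove it; your argument for the literally stated version is correct, but that version has no content.  The intended statement (clear from the paper's own proof and from how the lemma is invoked in the proof of Theorem~\ref{th:sp norm gen}) has the weight bound on the \emph{excluded} columns, namely $\sum_{j\in J\setminus J'}\nbex\le\ell/2$.  That is what one must establish, and your greedy trimming step at the end suggests you sensed the stated constraint was not doing any work.

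For the intended statement your approach has a real gap.  Because you apply Grothendieck--Pietsch to the already column\-/regularized block $B=R(N,\wgenX)_{I\times J}$, the factorization weights $\mu$ carry no information about $\nbex$: your threshold $\mu(j)\le 4/\ell$ bounds only the \emph{cardinality} $|J\setminus J^{\mathrm{low}}|<\ell/4$, not the weight $\sum_{j\in J\setminus J^{\mathrm{low}}}\nbex$.  A column with $\nbex\gg 1$ has been down\-/scaled by $D^{-1/2}(\nbx)$ before $\mu$ was chosen, so there is no reason it should receive a large $\mu(j)$, and the excluded columns can have arbitrarily large total $\nbex$-weight.  Since the downstream application (Theorem~\ref{th:sp norm gen}) needs precisely the weight bound $\sum_{i\in\Iellbad}\nfex\le\epsilon k$, the cardinality control is not enough.

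The paper's remedy is to apply Theorem~\ref{th:groth} to the matrix that is regularized only on the rows, $(\nf\cdot N)_{I\times J}$, obtaining $\mu$ with
\[
\big\|(\nf\cdot N)_{I\times J}\,D^{-1/2}(\mu)\big\|\le\sqrt{\tfrac{\pi}{2}}\,\big\|(\nf\cdot N)_{I\times J}\big\|_{\infty\to 2}.
\]
Inserting the column regularization \emph{afterwards} yields the identity
$(\nf\cdot N)_{I\times J}\,D^{-1/2}(\mu)=R(N,\wgenX)_{I\times J}\cdot D^{1/2}\!\big(\nbx\circ\tfrac1\mu\big)$,
so the right threshold is on the ratio $\nbex/\mu(j)$.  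With $J':=\{j\in J:\nbex/\mu(j)\ge\ell/2\}$ the diagonal factor on $J'$ is at least $\sqrt{\ell/2}$, giving the spectral bound on $R(N,\wgenX)_{I\times J'}$; on the complement one has $\nbex<(\ell/2)\mu(j)$, and summing against $\sum_j\mu(j)=1$ gives $\sum_{j\in J\setminus J'}\nbex<\ell/2$.  Your convexity transfer $\|B\|_{\infty\to 2}\le\|(\nf\cdot N)_{I\times J}\|_{\infty\to 2}$ is correct but becomes unnecessary once you postpone the column regularization.
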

\begin{proof}
Applying Grothendieck-Piesch factorization in Theorem~\ref{th:groth} on matrix $(\nf{\noisemtrspt})_{I\times J} $, implies that there is a vector $\mu=(\mu(1),...,\mu(m))$ with $\mu(j)\ge 0$ and $\sum_j\mu(j)=1$ such that
\[
||(\nf\cdot {\noisemtrspt})_{I\times J} \cdot D^{-1/2}(\mu)||\le \sqrt{\frac\pi2}||(\nf\cdot {\noisemtrspt})_{I\times J}||_{\infty\rightarrow 2}
\]
Then,
\begin{align*}
||\big(\nf\cdot {\noisemtrspt}\big)_{I\times J}\cdot D^{-\frac{1}2}(\mu)||
&=
||\big(\nf\cdot {\noisemtrspt}\big)_{I\times J}\cdot D^{-\frac{1}2}(\mu)\cdot\nb\cdot  D^{\frac12}(\nbx)||\\
&=
||\big(\nf\cdot {\noisemtrspt}\cdot \nb\cdot D^{\frac12}\Big(\nbx\circ \frac1\mu\Big)\big)_{I\times J} ||\\
&=
||\big(R(\noisemtrspt,\wgenX)\cdot D^{\frac12}\Big(\nbx\circ \frac1\mu\Big)\big)_{I\times J} ||,
\end{align*}
here $\nbx\circ \frac1\mu := (\frac{\nbx(1)}{\mu(1)},...,\frac{\nbx(m)}{\mu(m)})$. 
Let 
\[
J' := \{j\in J: \frac{\nbex}{\mu(j)}\ge \frac{\ell}{2} \}
\]
and $\bar J = J\setminus J'$. Then
\begin{align*}
||\big(R(\noisemtrspt,\wgenX)\cdot D^{\frac12}\Big(\nbx\circ \frac1\mu\Big)\big)_{I\times J} ||\ge ||\big(R(\noisemtrspt,\wgenX)\cdot D^{\frac12}\Big(\nbx\circ \frac1\mu\Big)\big)_{I\times J'} ||
&\ge \sqrt{\frac{\ell}{2}}||\big(R(\noisemtrspt,\wgenX)\big)_{I\times J'}, ||
\end{align*}
here the last step follows from the definition of $J'$.
Therefore,
\begin{align*}
||\big(R(\noisemtrspt,\wgenX)\big)_{I\times J'} ||\le \sqrt{\frac\pi\ell}||(\nf\cdot {\noisemtrspt})_{I\times J}||_{\infty\rightarrow 2}\le c\sqrt{\specparaN\log(ek/\ell)}.
\end{align*}
Next, we bound the weight of the columns that are excluded from $J'$.
\[
\sum_{j\in \bar J} \nbex \le \frac{\ell}{2}\sum_{j\in \bar J}\mu(j) \le \frac{\ell}{2}\sum_{j\in [k]}\mu(j)=\frac{\ell}{2}.
\]
\end{proof}
Applying the above lemma on $\big(R(\noisemtrspt,\wgenX)\big)^\intercal$, in place of $\big(R(\noisemtrspt,\wgenX)\big))$, from the symmetry we get: 
\begin{lemma}\label{lem:con2}
With probability $\ge 1-3k^{-3}$, for
any $ \max(\frac{\log^4 k}{\specparaN} ,\  \frac{k \exp^{-\frac{\specparaN}{8}}}{\specparaN})\leq \ell\leq k$, and any $I,J\subseteq [k]$ of size $|I|,|J| =  \ell$, there exists a subset $I'\subseteq I$ such that $\sum_{i\in I'}\nfex\le \ell/2$ and  
\[
    ||{\big(R(\noisemtrspt,\wgenX)\big)}_{I'\times J}||\leq c\sqrt{\specparaN\log(ek/\ell)}.
\]
\end{lemma}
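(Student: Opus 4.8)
The plan is to obtain Lemma~\ref{lem:con2} from Lemma~\ref{lem:con} by transposition, as the remark preceding the statement suggests. The key point is that $\big(R(\noisemtrspt,\wgenX)\big)^\intercal$ is \emph{itself} a regularized noise matrix of the same type. Indeed, since $R(\noisemtrspt,\wgenX) = D^{-\frac12}(\nfx)\,\noisemtrspt\,D^{-\frac12}(\nbx)$, we have $\big(R(\noisemtrspt,\wgenX)\big)^\intercal = D^{-\frac12}(\nbx)\,\noisemtrspt^\intercal\,D^{-\frac12}(\nfx) = R\big(\noisemtrspt^\intercal,(\nbx,\nfx)\big)$, and $\noisemtrspt^\intercal = X^\intercal - M^\intercal$ is exactly the noise matrix of the transposed model, whose parameter matrix is $M^\intercal$ and whose observations $X^\intercal$ are still independent with $\text{Var}\big((X^\intercal)_{i,j}\big)\le (M^\intercal)_{i,j}$. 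Because the weight construction is symmetric — the $i$-th row weight of $X^\intercal$ is $\max\{1,\|X_{*,i}\|_1/\specparaN\}=\nbx(i)$ and its $j$-th column weight is $\max\{1,\|X_{j,*}\|_1/\specparaN\}=\nfx(j)$ — the pair $(\nbx,\nfx)$ is precisely what the $\wgenX$-construction yields for the transposed model. Hence the transposed model satisfies verbatim every hypothesis that went into Lemmas~\ref{inf2} and~\ref{lem:con}.

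I would then note that Lemma~\ref{inf2}, and consequently Lemma~\ref{lem:con}, apply to this transposed model. The proof of Lemma~\ref{inf2} only uses independence of the entries of $\noisemtrspt$, the variance bound $\text{Var}(X_{i,j})\le M_{i,j}$, and the defining formulas for $\nfem$ and $\nfex$; none of these distinguishes rows from columns, so the same argument bounds the $\ell_\infty\to\ell_2$ norm of submatrices of $\nb\cdot\noisemtrspt^\intercal$ — this is the symmetric estimate already flagged after Lemma~\ref{inf2}. Lemma~\ref{lem:con} is then derived from Lemma~\ref{inf2} purely by an application of the Grothendieck--Pietsch factorization of Theorem~\ref{th:groth}, which holds for an arbitrary matrix. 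Thus Lemma~\ref{lem:con} holds for the transposed model on its own event of probability $\ge 1-3k^{-3}$.

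To finish, apply Lemma~\ref{lem:con} to the transposed model with the roles of $I$ and $J$ interchanged: on an event of probability $\ge 1-3k^{-3}$, for every admissible $\ell$ and all $I,J\subseteq[k]$ of size $\ell$ there is $I'\subseteq I$ with $\sum_{i\in I'}\nfex\le \ell/2$ and $\big\|\big(R(\noisemtrspt^\intercal,(\nbx,\nfx))\big)_{J\times I'}\big\|\le c\sqrt{\specparaN\log(ek/\ell)}$. Since the spectral norm is invariant under transposition and $\big(R(\noisemtrspt^\intercal,(\nbx,\nfx))\big)_{J\times I'}=\big(\big(R(\noisemtrspt,\wgenX)\big)_{I'\times J}\big)^\intercal$, this is exactly the assertion of Lemma~\ref{lem:con2}. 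There is no real obstacle here: the argument is pure symmetry, and the only thing requiring attention is the bookkeeping — keeping straight which of $\nfx,\nbx$ is the row weight and which the column weight after transposition, and verifying (as above) that the concentration proof of Lemma~\ref{inf2} never exploited any asymmetry between rows and columns.
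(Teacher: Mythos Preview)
Your proposal is correct and is precisely the paper's approach: the paper simply states ``Applying the above lemma on $\big(R(\noisemtrspt,\wgenX)\big)^\intercal$, in place of $\big(R(\noisemtrspt,\wgenX)\big)$, from the symmetry we get'' Lemma~\ref{lem:con2}. You have spelled out the symmetry argument more explicitly, verifying that the transposed model satisfies the hypotheses of Lemma~\ref{inf2} (and hence Lemma~\ref{lem:con}) with the roles of $\nfx$ and $\nbx$ exchanged, which is exactly what the paper's one-line remark intends.
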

The next lemma bounds the norm of a matrix using the norm of its sub-matrices. Incorporating the above bound on the norm of submatrices, this will complete the proof of theorem~\ref{th:sp norm}.
\begin{lemma}\label{le:subnorm}
Let $\gnrmtr\in \Rkm $ and $I_1,I_2,I_3,...,I_t$ be $t$ disjoint subsets of $[k]$ such that $\cup_{j=1}^t  I_j = [k]$. Then
$||\gnrmtr || \leq \sqrt{\sum_{j=1}^t ||\gnrmtr _{I_j}||^2}$.
\end{lemma}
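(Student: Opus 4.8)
The plan is to exploit the fact that the row-projections $A_{I_1},\dots,A_{I_t}$ sum to $A$ and, crucially, have \emph{disjoint} row supports, so that for any fixed vector their images occupy disjoint coordinate blocks. First I would fix an arbitrary unit vector $v\in\reals^m$ and note that, directly from the definition of $A_{I_j}$ as the matrix that keeps the rows indexed by $I_j$ and zeroes out the rest, the vector $A_{I_j}v$ agrees with $Av$ on the coordinates in $I_j$ and vanishes elsewhere. Hence $||A_{I_j}v||^2=\sum_{i\in I_j}(Av)_i^2$.

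Since $I_1,\dots,I_t$ partition $[k]$, summing these identities over $j$ reassembles the full coordinate sum: $\sum_{j=1}^t ||A_{I_j}v||^2=\sum_{i\in[k]}(Av)_i^2=||Av||^2$. I would then bound each block term by its spectral norm, $||A_{I_j}v||\le ||A_{I_j}||\cdot||v||=||A_{I_j}||$, to get $||Av||^2\le\sum_{j=1}^t ||A_{I_j}||^2$. Taking the supremum over unit vectors $v$ and using $||A||=\sup_{||v||=1}||Av||$ yields $||A||^2\le\sum_{j=1}^t ||A_{I_j}||^2$, which is the claim after taking square roots.

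There is essentially no obstacle here; the single point deserving a line of care is the disjoint-support observation that converts a sum of squared block norms into the squared norm of the whole image, and it is precisely this that makes the bound additive in the \emph{squared} norms rather than incurring a spurious factor of $t$ from a naive triangle inequality applied to $A=\sum_j A_{I_j}$. Equivalently, one could argue via $A^\intercal A=\sum_j A_{I_j}^\intercal A_{I_j}$ together with subadditivity of the operator norm, but the direct coordinate computation above is the cleanest route.
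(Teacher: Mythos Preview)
Your proof is correct and is essentially identical to the paper's: the paper likewise fixes a unit vector $v$, observes that the vectors $w_j=A_{I_j}v$ are orthogonal (exactly your disjoint-support observation), so that $\|Av\|^2=\sum_j\|w_j\|^2$, bounds each $\|w_j\|$ by $\|A_{I_j}\|$, and takes the supremum over $v$. The only cosmetic difference is that you spell out the orthogonality via the explicit coordinate identity $\|A_{I_j}v\|^2=\sum_{i\in I_j}(Av)_i^2$, whereas the paper just says ``$w_j$'s are orthogonal''.
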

Proof of the above lemma is given in Appendix~\ref{app:LA}.

\textit{Proof of theorem~\ref{th:sp norm gen}:} Main component in the proof is Lemma~\ref{lem:con} and Lemma~\ref{lem:con2}. We need to apply these lemmas in multiple rounds. 

For round $j$, we apply these Lemmas for some, $\ell = \ell_j$ and $I=I_j$, and $J =J_j$ such that $|I_j|,|J_j|\le k/2^{j-1}$, where $\ell_j$, $I_j$ and $J_j$ are defined later. 

First applying Lemma~\ref{lem:con2} we get, a subset $I'_j\subseteq I_j $ such that,
\begin{equation}
    ||{\big(R(\noisemtrspt,\wgenX)\big)}_{I'_j\times J_j }||\leq \cO(\sqrt{j\cdot\specparaN }), \label{eq:b}
\end{equation}
and the weight of the excluded rows $I_j\setminus I'_j$ is at most $\sum_{i\in I_j\setminus I'_j}\nfex\le k/2^{j}$. Since weight of each row is at-least 1, this implies that the number of rows excluded in round $j$ are also at most $|I_j\setminus I'_j|\le k/2^{j}$. 

Similarly, applying Lemma~\ref{lem:con} we get, a subset $J'_j\subseteq J_j $ such that $\sum_{i\in J_j\setminus J'_j}\nbx(i)\le k/2^{j}$,
\begin{equation}
    ||{\big(R(\noisemtrspt,\wgenX)\big)}_{I_j\times J'_j }||\leq \cO(\sqrt{j\cdot\specparaN }). \label{eq:a}
\end{equation}
and $\sum_{i\in J_j\setminus J'_j}\nbx(i)\le k/2^{j}$ and $|J_j\setminus J'_j|\le k/2^{j}$. 
Since zeroing out rows from a matrix reduces the spectral norm, the above equation gives
\begin{equation}
    ||{\big(R(\noisemtrspt,\wgenX)\big)}_{(I_j\setminus I'_j)\times J'_j }||\leq \cO(\sqrt{j\cdot\specparaN }). \label{eq:c}
\end{equation}

In round $j=1$, we start with $\ell_1=k$ and $I_1=J_1=[k]$. 
For round $j>1$ we chose, $\ell_j :=\frac{\ell_{j-1}}{2}$, $I_{j} := I_{j-1}\setminus I'_{j-1}$ and $J_j := J_{j-1}\setminus J'_{j-1}$. 
Note that $I_{j}$ and $J_{j}$ are the excluded rows and columns in the concentration bounds of the previous round. 

We use this procedure for $t = \lceil\log (k/\epsilon k)\rceil$ rounds, so that the weight and the number of excluded rows, and columns, in the end is at-most $\epsilon k$.

Let $\mathcal M_j := I_j\times J_j$, which is of size $k/2^{j-1}\times k/2^{j-1} $, $\mathcal{R}_j := I'_j\times J_j $ and $\mathcal{C}_j := I_j\setminus I'_j\times J'_j = I_{j+1}\times J'_j $. Note that equation~\ref{eq:b} and~\ref{eq:c} gives the concentration bound for sub-matrices corresponding to $\mathcal{R}_j$ and $\mathcal{C}_j$, respectively.
Figure \ref{figure1} shows this construction.
\begin{figure}[H]
  \centering
    \includegraphics[height=8cm, width=8cm]{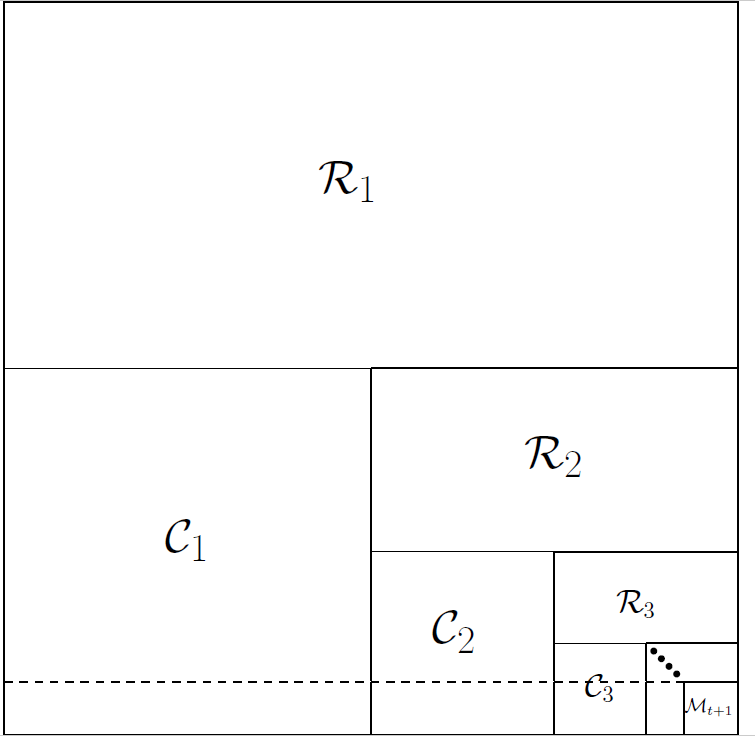}
    \caption{Construction of submatrices in proof of theorem~\ref{th:sp norm}.}\label{figure1}
\end{figure}
This construction decomposes the submatrix indexed by $\cM_j$ into three submatrices
\begin{align*}
  {\big(R(\noisemtrspt,\wgenX)\big)} _{\mathcal{M}_j} =  &{\big(R(\noisemtrspt,\wgenX)\big)} _{\mathcal{R}_j}+{\big(R(\noisemtrspt,\wgenX)\big)} _{\mathcal{C}_j}+{\big(R(\noisemtrspt,\wgenX)\big)} _{\mathcal{M}_{j+1}}.
\end{align*}
Applying the above equation recursively,
\begin{align*}
  &{\big(R(\noisemtrspt,\wgenX)\big)}  = {\big(R(\noisemtrspt,\wgenX)\big)} _{\mathcal{M}_1}\\ 
   &= \sum_{j=1}^t  {\big(R(\noisemtrspt,\wgenX)\big)} _{\mathcal{R}_j}+  \sum_{j=1}^t {\big(R(\noisemtrspt,\wgenX)\big)} _{\mathcal{C}_j}
  +{\big(R(\noisemtrspt,\wgenX)\big)} _{\mathcal{M}_{t+1}}\\
  &= {\big(R(\noisemtrspt,\wgenX)\big)} _{\cup_{j=1}^t  \mathcal{R}_j}+  {\big(R(\noisemtrspt,\wgenX)\big)} _{\cup_{j=1}^t \mathcal{C}_j}
  +{\big(R(\noisemtrspt,\wgenX)\big)} _{\mathcal{M}_{t+1}},
\end{align*}
where the last equality follows as $\mathcal{R}_j$'s and $\mathcal{C}_j$'s are disjoint.
Then
\begin{align*}
  ||{\big(R(\noisemtrspt,\wgenX)\big)}_{([k]\times [k])\setminus \mathcal{M}_{t+1}}||&= 
  ||R(\noisemtrspt,\wgenX) -{\big(R(\noisemtrspt,\wgenX)\big)}_{\mathcal{M}_{t+1}}||\\
  &=||{\big(R(\noisemtrspt,\wgenX)\big)} _{\cup_{j=1}^t  \mathcal{R}_j}+  {\big(R(\noisemtrspt,\wgenX)\big)} _{\cup_{j=1}^t \mathcal{C}_j} ||\\
  &\overset{\text{(a)}}\leq  ||{\big(R(\noisemtrspt,\wgenX)\big)} _{\cup_{j=1}^t  \mathcal{R}_j}|| + || {\big(R(\noisemtrspt,\wgenX)\big)} _{\cup_{j=1}^t \mathcal{C}_j} ||\\
  &\overset{\text{(b)}}\leq  \textstyle\sqrt{\sum_{j=1}^t {\big(R(\noisemtrspt,\wgenX)\big)} _{\mathcal{R}_j} }+\sqrt{\sum_{j=1}^t {\big(R(\noisemtrspt,\wgenX)\big)} _{\mathcal{C}_j} }\nonumber\\
  &\overset{\text{(c)}}\leq \textstyle2\cdot \cO(\sqrt{\sum_{j=1}^t  j \cdot \specparaN })\leq   \cO(t \sqrt{\specparaN }) = \cO( \log(k/\epsilon k)  \sqrt{\specparaN }),
\end{align*}
where (a) follows from triangle inequality, (b) follows from Lemma~\ref{le:subnorm}, and (c) follows from inequalities~\eqref{eq:b} and~\eqref{eq:c}. 
Note that $\sum_{i\in I_{t+1}}\nfex =\sum_{i\in I_t\setminus I'_t}\nfex \le k/2^{t}\le \epsilon k$ and, similarly, $\sum_{i\in J_{t+1}}\nbx(i)\le \epsilon k$.

Recall that $\mathcal{M}_{t+1} = I_{t+1}\times J_{t+1}$, therefore zeroing out rows $I_{t+1}$ from $([k]\times [k])\setminus \mathcal{M}_{t+1}$ results in $([k]\setminus I_{t+1}\times [k])$. And since zeroing out rows reduces of a matrix reduces the spectral norm, from the above equation we get
\begin{align*}
||{\big(R(\noisemtrspt,\wgenX)\big)}_{([k]\setminus I_{t+1}\times [k])}||\le  ||{\big(R(\noisemtrspt,\wgenX)\big)}_{([k]\times [k])\setminus \mathcal{M}_{t+1}}||\le \cO( \log(k/\epsilon k)  \sqrt{\specparaN }),
\end{align*}
where $\sum_{i\in I_{t+1}}\nfex =\sum_{i\in I_t\setminus I'_t}\nfex \le k/2^{t}\le \epsilon k$. Letting $I_{t+1}$ to be the set of contaminated rows completes the proof of the theorem.
\hfill $\qedsymbol$

In the next subsection, we derive a useful implication of Lemma~\ref{inf2}.  
\subsection{Proof of Lemma~\ref{lem:absdif}}\label{lem:seccon}
The lemma upper bounds the sum of the absolute difference between the expected and observed samples in each row of $X$. We restate the lemma.
\begin{lemma*}
With probability $\ge 1-3k^{-3}$,
\[
\sum_{i}{|\sum_j N_{i,j}|}=\cO(k\sqrt{\navg}).
\]
\end{lemma*}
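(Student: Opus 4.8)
The plan is to reduce the statement to the $\ell_\infty\!\to\!\ell_2$ concentration bound of Lemma~\ref{inf2} applied to the regularized noise matrix with $\specparaN=\navg$ (so the weights are the $\swgenX$ used in the body, and the regularization map is $\snf=D^{-\frac12}(\snfx)$), together with a single application of Cauchy--Schwarz that distributes the regularization weights $\snfex$ over both factors.

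Concretely, first I would split each row sum as $\bigl|\sum_j N_{i,j}\bigr|=\snfex^{1/2}\cdot\snfex^{-1/2}\bigl|\sum_j N_{i,j}\bigr|$ and apply Cauchy--Schwarz over $i\in[k]$:
\[
\sum_i\Bigl|\sum_j N_{i,j}\Bigr|
\;\le\;
\Bigl(\sum_i\snfex\Bigr)^{1/2}\Bigl(\sum_i\frac{\bigl(\sum_j N_{i,j}\bigr)^2}{\snfex}\Bigr)^{1/2}.
\]
By Lemma~\ref{lem18} the first factor is at most $\sqrt{2k}$. For the second factor, the all-ones vector $\mathbbm{1}$ lies in $\{-1,1\}^k$, and since $(\snf N)_{i,j}=N_{i,j}/\sqrt{\snfex}$ we have $\bigl((\snf N)\mathbbm{1}\bigr)_i=\snfex^{-1/2}\sum_j N_{i,j}$, hence
\[
\sum_i\frac{\bigl(\sum_j N_{i,j}\bigr)^2}{\snfex}
=\bigl\|(\snf N)\,\mathbbm{1}\bigr\|_2^2
\le\bigl\|\snf N\bigr\|_{\infty\to2}^2 .
\]
Then I would invoke Lemma~\ref{inf2} with $\specparaN=\navg$, $\ell=k$, and $I=J=[k]$ — for $\ell=k$ its hypothesis is just the mild condition $\navg\ge\log^4 k/k$ — which holds on an event of probability $\ge1-3k^{-3}$ and gives $\|\snf N\|_{\infty\to2}\le\cO\bigl(\sqrt{\navg\,k\,\log(ek/k)}\bigr)=\cO(\sqrt{k\navg})$. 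Plugging the last two displays into the first yields $\sum_i|\sum_j N_{i,j}|\le\sqrt{2k}\cdot\cO(\sqrt{k\navg})=\cO(k\sqrt{\navg})$ on the same event, as claimed.

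The only substantive point is recognizing that $\sum_i\snfex^{-1}\bigl(\sum_j N_{i,j}\bigr)^2$ is exactly $\|(\snf N)\mathbbm{1}\|_2^2$ and is therefore dominated by $\|\snf N\|_{\infty\to2}^2$ because $\mathbbm{1}$ is a $\pm1$ vector; everything else is Cauchy--Schwarz plus a direct citation. The place to be careful is that one must use the \emph{regularized} matrix $\snf N$ rather than $N$ itself — an unregularized $N$ has no comparable $\ell_\infty\to\ell_2$ bound when some rows of $M$ are heavy — which is precisely why the weights $\snfex$ must be introduced on both sides of the Cauchy--Schwarz step, and why the bound $\sum_i\snfex\le 2k$ from Lemma~\ref{lem18} is needed to keep the first factor $\cO(\sqrt k)$.
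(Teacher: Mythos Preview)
Your proof is correct and is essentially identical to the paper's: both invoke Lemma~\ref{inf2} with $\specparaN=\navg$, $\ell=k$, $I=J=[k]$, evaluate the $\ell_\infty\!\to\!\ell_2$ norm at the all-ones vector to get $\sum_i\snfex^{-1}\bigl(\sum_j N_{i,j}\bigr)^2=\cO(k\navg)$, and then combine this via Cauchy--Schwarz with Lemma~\ref{lem18}'s bound $\sum_i\snfex\le 2k$. The only difference is the order in which you present the Cauchy--Schwarz split and the $\ell_\infty\!\to\!\ell_2$ bound.
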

\begin{proof}
To prove the above Lemma we use Lemma~\ref{inf2}, for $\specparaN=\navg$, $\ell=k$ and $I=J=[k]$. The lemma implies that w.p. $\ge 1-3/k^3$,
\[
\max_{v\in\{-1,1\}^k} ||\snf\cdot {\noisemtrspt}\cdot v|| = ||\snf\cdot {\noisemtrspt}||_{\infty\rightarrow 2}\leq \cO(\sqrt{\navg k}).
\]
From the definition of $\ell_\infty\rightarrow \ell_2$ norm,
\[
\max_{v\in\{-1,1\}^k} ||\snf\cdot {\noisemtrspt}\cdot v|| = ||\snf\cdot {\noisemtrspt}||_{\infty\rightarrow 2}.
\]
In the above equation choosing $v=(1,1,...,1)$ and taking the square on the both side, we get
\[
\sum_{i}\Big(\sum_{j}\frac{\elenoisemtrspt}{\sqrt{\snfex}}\Big)^2\le ||\snf\cdot {\noisemtrspt}||^2_{\infty\rightarrow 2}
 \leq \cO({\navg k}).
\]
The above equation can be rewritten as,
\[
\sum_{i}\frac{|\sum_j N_{i,j}|^2}{{\snfex}}
 \leq \cO({\navg k}).
\]
Then
\[
\sum_{i}{|\sum_j N_{i,j}|}=\sum_{i}\frac{|\sum_j N_{i,j}|}{\sqrt{\snfex}}\cdot\sqrt{\snfex} \le \Big(\sum_{i}\frac{|\sum_j N_{i,j}|^2}{{\snfex}}\Big)^{\frac12} \cdot\Big(\sum_i{\snfex}\Big)^{\frac12}=\cO(k\sqrt{\navg}).
\]
here we used the Cauchy-Schwarz inequality, the previous equation, and Lemma~\ref{lem18} that states $\sum_i{\snfex}\le 2k$. 
\end{proof}

\section{Counterexample}
\label{app: counter}
The authors of~\cite{le2017concentration}
posed the following question, whose affirmative answer may have simplified low-rank matrix recovery. They posed the question for Bernoulli-parameter matrix.

Let $\mtrmn\in \Rkk$ be a Bernoulli-parameter matrix where $\forall i,j\in [k]$,  $||M||_{i,*},||M||_{*,j} \le n_{\text{max}}$, for some $n_{\text{max}}$.

Let $\mtrobs=[\mtrobsele]$, where $\mtrobsele \sim \Ber(\mtrmnij)$, be the observation matrix of $\mtrmn$, and let $\mtrobs_0$ be the matrix obtained by zeroing-out the rows and columns of $\mtrobs$ whose total count is $> 2n_{\text{max}}$. Does $\mtrobs_0$ converges to $\mtrmn$ \whp as
\begin{align*}
    \norm{}{\mtrobs_0-\mtrmn} =\cO(\sqrt{n_{\text{max}}})?
\end{align*}

Unfortunately, the following counterexample answers this question in negative.
Therefore additional work, such as presented in this paper, is needed to recover low-rank matrices.

For any $k$, choose any $n_{\text{max}}\le \sqrt{\log k}/8$ that grows with $k$, 
and consider the block diagonal matrix
\[
\mtrmn = \left( \begin{array}{ccccc}
B_1 &  &  &  &  \\
 & B_2 &  &  &  \\
 &  & \ddots & &  \\
 &  &  & B_{(\frac k{2n_{\text{max}}}-1)} &  \\
 &  &  &  & B_{\frac k{2n_{\text{max}}}} \\
\end{array} \right)
\]
consisting of $k/(2n_{\text{max}})$ (for simplicity assume it is an integer) identical blocks $B_i=B$, each a submatrix of size $2n_{\text{max}}\times 2n_{\text{max}}$ whose entries are all $1/2$.
Except for the blocks $B_i$'s, all the other entries of $\mtrmn$ are zero. Then $\mtrmn$ satisfies $\forall i, j\in [n]$, $||M||_{i,*} = ||M||_{*,j}  = n_{\text{max}}$. 

Note that, for the above matrix $\navg = ||M||_1/k = n_{\text{max}}$.

The observation matrix of $\mtrmn$ is
\[
\mtrobs = \left( \begin{array}{ccccc}
 \hat B_1 &  &  &  &  \\
 &  \hat B_2 &  &  &  \\
 &  & \ddots & &  \\
 &  &  &  \hat B_{(\frac k{2n_{\text{max}}}-1)} &  \\
 &  &  &  &  \hat B_{\frac k{2n_{\text{max}}}} \\
\end{array} \right).
\]
Note that $\mtrobs$ has non-zero entries only in locations corresponding to the diagonal blocks $B$. Also,  $ \forall i, j\in [k],\ ||X||_{i,*} , ||X||_{*,j} < 2n_{\text{max}}$. 
Therefore zeroing out rows and columns of $\mtrobs$ with more than $n_{\text{max}}$ ones would not affect it and $X_0=X$. 
From Theorem \ref{th:interlace},
\[
\norm{}{\difmat} \geq \max_i \norm{}{\hat B_i -  B_i}.
\]

Since $ \hat B_i$ is the observation matrix for the
$2n_{\text{max}}\times 2n_{\text{max}}$ block 
$B_i$ whose entries are all $1/2$, and $n_{\text{max}} = o(\sqrt{\log k})$,
\[
\Pr\Paren{\hat B_i=0}
=
(1/2)^{4n_{\text{max}}^2}
>
1/\sqrt k.
\]

The probability that the whole $2n_{\text{max}}\times 2n_{\text{max}}$ block $\hat B_i$ is 0, is $ (1/2)^{4n_{\text{max}}^2}$, which since $n_{\text{max}} = o(\sqrt{\log k})$, is
$> 1/\sqrt k $.
Hence \whp, at least one of the block $j\in [\frac k{2n_{\text{max}}}]$ in $\mtrobs$ is zero.
Hence \whp,
\[
\norm{}{\difmat} \geq \norm{}{B_j } = n_{\text{max}} \gg \Omega(\sqrt{n_{\text{max}}}).
\]
This counterexample answers the question raised by the authors of~\cite{le2017concentration} in negative. 

We note that the same counterexample works for the regularization $R(X-M,\swgenX)$ used in this paper. Because if block $\hat B_j$ of $X$ is zero, from the definition of $\swgenX$, it is easy to see that for the rows and columns corresponding to the block $\hat B_j$, the regularization weights are 1, which implies that 
\[
\norm{}{R(X-M,\swgenX)} \geq \norm{}{B_j } = n_{\text{max}} ,
\]
extending the counterexample for the regularization $R(X-M,\swgenX)$ as well.

\section{Linear Algebra Proofs}
\label{app:LA}
\subsection{Proof of Lemma~\ref{th:recovery}}
\begin{lemma*}
For any rank-$r$ matrix $\gnrmtr\in\Rkkr$, matrix $B\in\Rkk$, and weights $w$,
\begin{align*}
    \norm1{A - B^{(r,w)}}
    \leq
    \textstyle\sqrt{r\cdot\paren{{\textstyle\sum}_{i}  w^\front(i)}(\sum_{j}  w^\back(j))}\cdot \norm{}{R(A-B,w)}.
\end{align*}
\end{lemma*}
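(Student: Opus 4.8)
The plan is to pass to the regularized domain, trade the weighted $L_1$ norm for a Frobenius norm by Cauchy--Schwarz, and then control that Frobenius norm by the regularized spectral distance $\norm{}{R(A-B,w)}$, exploiting that $A$ has rank at most $r$.

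First I would set $\tilde A := R(A,w)$ and $\tilde B := R(B,w)$, so that $R(A-B,w) = \tilde A - \tilde B$. Since the weights are positive, $A = D^{\frac12}(w^\front)\tilde A\,D^{\frac12}(w^\back)$, and by definition of the $(r,w)$-SVD, $B^{(r,w)} = D^{\frac12}(w^\front)\tilde B^{(r)}D^{\frac12}(w^\back)$; hence
\[
A - B^{(r,w)} = D^{\frac12}(w^\front)\bigl(\tilde A - \tilde B^{(r)}\bigr)D^{\frac12}(w^\back).
\]
Writing the entry-wise $L_1$ norm of the right-hand side as $\sum_{i,j}\sqrt{w^\front(i)\,w^\back(j)}\,\bigl|(\tilde A-\tilde B^{(r)})_{i,j}\bigr|$ and applying Cauchy--Schwarz gives
\[
\norm1{A - B^{(r,w)}} \le \sqrt{\Bigl({\textstyle\sum_i} w^\front(i)\Bigr)\Bigl({\textstyle\sum_j} w^\back(j)\Bigr)}\cdot\norm F{\tilde A - \tilde B^{(r)}}.
\]
The whole statement therefore reduces to showing $\norm F{\tilde A - \tilde B^{(r)}} \le \sqrt r\cdot\norm{}{R(A-B,w)}$, up to an absolute constant (harmless, as the theorem's conclusions are all stated up to $\cO(\cdot)$).

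For that inequality I would use the rank hypothesis. Since $A$ has rank $\le r$, so does $\tilde A$, and $\tilde B^{(r)}$ has rank $\le r$, so $\tilde A - \tilde B^{(r)}$ has rank $\le 2r$ and therefore $\norm F{\tilde A - \tilde B^{(r)}} \le \sqrt{2r}\,\norm{}{\tilde A - \tilde B^{(r)}}$. To bound this spectral norm, note that $\tilde B - \tilde B^{(r)} = \sum_{i>r}\sigma_i(\tilde B)\,u_i v_i^\intercal$, so $\norm{}{\tilde B - \tilde B^{(r)}} = \sigma_{r+1}(\tilde B)$; applying Weyl's inequality (Theorem~\ref{th:weyl}) to $\tilde B = \tilde A + (\tilde B - \tilde A)$ with $i = r+1$, $j = 1$ and using $\sigma_{r+1}(\tilde A) = 0$, we get $\sigma_{r+1}(\tilde B) \le \sigma_1(\tilde B - \tilde A) = \norm{}{R(A-B,w)}$. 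The triangle inequality then gives $\norm{}{\tilde A - \tilde B^{(r)}} \le \norm{}{\tilde A - \tilde B} + \norm{}{\tilde B - \tilde B^{(r)}} \le 2\,\norm{}{R(A-B,w)}$, and assembling the three displays proves the claim.

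I expect the only real obstacle to be this last step: the Frobenius-to-spectral passage is false without a rank bound, so it is essential to exploit $\rank(A) \le r$, and the $\sqrt r$ in the statement is precisely the cost of that conversion. Everything else --- the conjugation identity, Cauchy--Schwarz, the SVD form of the truncation error, and Weyl's inequality --- is routine, so the full write-up should run only a few lines.
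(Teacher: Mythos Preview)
Your proposal is correct and essentially identical to the paper's proof: the paper also passes to the regularized domain, bounds $\norm{}{\tilde A-\tilde B^{(r)}}\le 2\norm{}{R(A-B,w)}$ via Weyl's inequality and the triangle inequality exactly as you do, and then converts to $L_1$ via Cauchy--Schwarz and the Frobenius-to-spectral inequality using the rank-$2r$ bound (packaged there as a separate Lemma~\ref{lem:spll}). The resulting constant $2\sqrt2$ is, as you note, absorbed into the $\cO(\cdot)$ of the downstream applications.
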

\begin{proof}
Recall that 
\[
B^{(r,w)} := D^{\frac12}(w^\front)\cdot
R(B,w)^{(r)}
\cdot D^{\frac12}(w^\back),
\]
where $R(B,w)= D^{-\frac12}(w^\front)\cdot B\cdot D^{-\frac12}(w^\back)$ is regularized matrix $B$ and $R(B,w)^{(r)}$ is its rank $r$-truncated SVD. 

We first upper bound the spectral norm of $R(A,w)-R(B,w)^{(r)}$ in terms of the spectral norm of $R(A-B,w)$.
By Weyl's Inequality~\ref{th:weyl}, and the rank $r$ of $\gnrmtr$, 
\[
\sigma_{r+1}(R(B,w)) \leq \sigma_{r+1}(R(A,w)) +\norm{}{R(A,w)-R(B,w)} = \norm{}{R(A-B,w)}.
\]
Hence by the triangle inequality and the salient property of truncated SVD's,
\begin{align*}
\norm{}{R(A,w)-R(B,w)^{(r)}}
&\le
\norm{}{R(B,w)^{(r)}-R(B,w)}+\norm{}{R(A,w)-R(B,w)} \\
 &=
 \sigma_{r+1}(R(B,w))+\norm{}{R(A-B,w)} 
 \le
 2\norm{}{R(A-B,w)}.
\end{align*}
Since $\gnrmtr-\gnrmtrotrtrSVD$ is the difference of two rank-$r$ matrices, it has rank $\le 2r$.
Then applying Lemma~\ref{lem:spll} for matrix $(R(A,w)-R(B,w)^{(r)})$, and noting that $A= D^{\frac12}(w^\front)\cdot
R(A,w)
\cdot D^{\frac12}(w^\back)$ completes the proof.
\end{proof}

\subsection{Proof of Lemma~\ref{dampcon}}
\begin{lemma*}
For any matrix $\gnrmtr$ and weight vectors $w^\front$ and $w^\back$ with positive entries
\[
||D^{-\frac12}(w^\front)\cdot\gnrmtr\cdot D^{-\frac12}(w^\back)|| \leq  \sqrt{\max_{i}\frac{||\gnrmtr_{i,*}||_1}{w^\front(i)} \times \max_{j}\frac{||\gnrmtr_{*,j}||_1}{w^\back(j)}}.
\]
\end{lemma*}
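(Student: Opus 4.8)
Set $B := D^{-\frac12}(w^\front)\cdot\gnrmtr\cdot D^{-\frac12}(w^\back)$, so that entrywise $B_{ij} = \gnrmtr_{ij}\big/\sqrt{w^\front(i)\,w^\back(j)}$, and recall the variational characterization $||B|| = \max_{v:\,||v||=1}||Bv||$. I would expand $||Bv||^2 = \sum_i\big(\sum_j B_{ij}\,v(j)\big)^2$ and estimate the inner sum by Cauchy--Schwarz, exactly as in the unweighted computation sketched in the commented-out proof above; the only new ingredient is the bookkeeping of the two diagonal rescalings.

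Concretely, pulling out the row weight and writing $\frac{\gnrmtr_{ij}}{\sqrt{w^\back(j)}}v(j) = \sqrt{|\gnrmtr_{ij}|}\cdot\Big(\sqrt{|\gnrmtr_{ij}|}\,|v(j)|\big/\sqrt{w^\back(j)}\Big)$ and applying Cauchy--Schwarz in $j$ would give
\[
\Big(\sum_j B_{ij}\,v(j)\Big)^2
= \frac1{w^\front(i)}\Big(\sum_j \tfrac{\gnrmtr_{ij}}{\sqrt{w^\back(j)}}\,v(j)\Big)^2
\le \frac{||\gnrmtr_{i,*}||_1}{w^\front(i)}\sum_j \frac{|\gnrmtr_{ij}|}{w^\back(j)}\,v(j)^2 .
\]
Summing over $i$, replacing $||\gnrmtr_{i,*}||_1\big/w^\front(i)$ by $\max_i||\gnrmtr_{i,*}||_1\big/w^\front(i)$, interchanging the order of summation, and using $\sum_i|\gnrmtr_{ij}| = ||\gnrmtr_{*,j}||_1$ would yield
\[
||Bv||^2 \le \max_i\frac{||\gnrmtr_{i,*}||_1}{w^\front(i)}\sum_j \frac{||\gnrmtr_{*,j}||_1}{w^\back(j)}\,v(j)^2
\le \max_i\frac{||\gnrmtr_{i,*}||_1}{w^\front(i)}\cdot\max_j\frac{||\gnrmtr_{*,j}||_1}{w^\back(j)},
\]
the last step using $||v|| = 1$. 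Taking square roots and the supremum over unit vectors $v$ finishes the argument.

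There is no real obstacle here: the statement is a weighted Schur-test bound and the proof is elementary. The two points that need a little care are (i) grouping the Cauchy--Schwarz terms so that it is the $L_1$ row- and column-norms of $\gnrmtr$ itself — not of $B$ — that appear, and (ii) the hypothesis that $w^\front,w^\back$ have strictly positive entries, which is precisely what makes the rescalings well defined. A slightly slicker alternative is to bound the bilinear form directly: for unit vectors $u,v$, $u^\intercal B v = \sum_{i,j}\gnrmtr_{ij}\,\frac{u(i)}{\sqrt{w^\front(i)}}\,\frac{v(j)}{\sqrt{w^\back(j)}}$, and a single Cauchy--Schwarz over all index pairs $(i,j)$ — splitting the mass $|\gnrmtr_{ij}|$ between the two factors — produces the same two maxima; I would use whichever reads more cleanly in context.
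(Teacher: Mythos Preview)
Your proposal is correct and follows essentially the same route as the paper's proof: expand $\|Bv\|^2$, split $|\gnrmtr_{ij}|$ as $\sqrt{|\gnrmtr_{ij}|}\cdot\sqrt{|\gnrmtr_{ij}|}$ and apply Cauchy--Schwarz in $j$, pull out the row maximum, interchange the sums to produce the column norms, and finish with $\|v\|=1$. The only cosmetic difference is that the paper groups the weights inside the square roots before Cauchy--Schwarz rather than pulling $1/w^\front(i)$ out first, but the computation is identical.
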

\begin{proof}
For a unit vector $v=(v(1)\upto v(m))\in R^m$,
\begin{align*}
  &||D^{-\frac12}(w^\front)\cdot\gnrmtr\cdot D^{-\frac12}(w^\back)\cdot v||^2 = \sum_i\Big( \sum_j \frac{\gnrmtr_{i,j} v(j)}{\sqrt{w^\front(i)\cdot w^\back(j)}}\Big)^2\\
  &\le \sum_i\Big( \sum_j \frac{|\gnrmtr_{i,j}|| v(j)|}{\sqrt{w^\front(i)\cdot w^\back(j)}} \Big)^2\\
  &= \sum_i\Big( \sum_j \sqrt{\frac{|\gnrmtr_{i,j}|}{w^\front(i)}}\sqrt{\frac{|\gnrmtr_{i,j}|}{w^\back(j)}}| v(j)| \Big)^2\\
  &\overset{\text{(a)}}\leq \sum_i\Bigg(\Big( \sum_j \frac{|\gnrmtr_{i,j}|}{w^\front(i)} \Big)\Big( \sum_j  \frac{|\gnrmtr_{i,j}|}{w^\back(j)} v(j)^2 \Big)\Bigg)
  \leq \max_{i'}\frac{||\gnrmtr_{i',*}||_1}{w^\front(i')}  \sum_i  \sum_j  \Big(\frac{|\gnrmtr_{i,j}|}{w^\back(j)} v(j)^2 \Big)\\
  &= \max_{i'}\frac{||\gnrmtr_{i',*}||_1}{w^\front(i')} \sum_j \Big(v(j)^2 \sum_i  \frac{|\gnrmtr_{i,j}|}{w^\back(j)}\Big)\le  \max_{i'}\frac{||\gnrmtr_{i',*}||_1}{w^\front(i')}\sum_j v(j)^2 \frac{||\gnrmtr_{*,j}||_1}{w^\back(j)} \\
  &\le  \max_{i'}\frac{||\gnrmtr_{i',*}||_1}{w^\front(i')} \times \max_{j'}\frac{||\gnrmtr_{*,j'}||_1}{w^\back(j')} \sum_j v(j)^2  =  \max_{i'}\frac{||\gnrmtr_{i',*}||_1}{w^\front(i')} \times \max_{j'}\frac{||\gnrmtr_{*,j'}||_1}{w^\back(j')},
\end{align*}
where (a) uses the Cauchy-Schwarz inequality. Observing that above is true for arbitrary unit vector $v$ completes the proof.
\end{proof}

\subsection{Proof of Lemma ~\ref{lem:number}}
\begin{lemma*}
Let $\gnrmtr $ be an $k\times m$ matrix such that $\sigma_{1}(\gnrmtr )\leq \alpha $ and $\sigma_{r+1}(\gnrmtr )\leq \beta$. Then the number of disjoint row subsets ${I}\subset [k] $ such that $||\gnrmtr _{{I}}||> 2\beta$ is at most $\Big(\frac{r\alpha}{\beta}\Big)^2 $.
\end{lemma*}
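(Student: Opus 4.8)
The plan is to combine the rank-$r$ truncated SVD of $A$ with the disjointness of the row subsets. Write the SVD $A=\sum_i\sigma_i u_iv_i^\intercal$ and let $A^{(r)}=\sum_{i=1}^r\sigma_i u_iv_i^\intercal$ be its rank-$r$ truncation, so $||A-A^{(r)}||=\sigma_{r+1}(A)\le\beta$ and $\sigma_1(A^{(r)})=\sigma_1(A)\le\alpha$ (we may assume $\alpha,\beta>0$, else the claim is trivial). I will also use that for any row subset $I$ and vector $v$, the vector $A_Iv$ is just the coordinate restriction of $Av$ to the rows in $I$, so $||A_Iv||^2=\sum_{i\in I}(Av)_i^2$ and in particular $||A_Iv||\le||Av||$; the same applies to $A^{(r)}_I$ and $(A-A^{(r)})_I$, and $A_I=A^{(r)}_I+(A-A^{(r)})_I$ since zeroing out rows is linear.

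The main step is to show that every \emph{heavy} row subset $I$ (one with $||A_I||>2\beta$) captures a nontrivial amount of the mass of the top-$r$ left singular subspace of $A$. Pick a unit vector $v$ with $||A_Iv||>2\beta$. Using $||(A-A^{(r)})_Iv||\le||(A-A^{(r)})v||\le\beta$ and the reverse triangle inequality, I get $||A^{(r)}_Iv||>2\beta-\beta=\beta$. Now $z:=A^{(r)}v=\sum_{j=1}^r\sigma_j\langle v_j,v\rangle u_j$ lies in $\mathrm{span}(u_1,\dots,u_r)$ with $||z||\le\sigma_1(A^{(r)})\le\alpha$; writing $z=\sum_{j=1}^rc_ju_j$ (so $\sum_jc_j^2=||z||^2\le\alpha^2$) and applying Cauchy--Schwarz coordinatewise, $z(i)^2=\big(\sum_jc_ju_j(i)\big)^2\le\alpha^2\sum_{j=1}^ru_j(i)^2$. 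Summing over $i\in I$ and using $\sum_{i\in I}z(i)^2=||A^{(r)}_Iv||^2>\beta^2$ gives $\sum_{i\in I}\sum_{j=1}^ru_j(i)^2>\beta^2/\alpha^2$.

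To finish, let $I_1,\dots,I_t$ be disjoint row subsets with $||A_{I_s}||>2\beta$ for each $s$. Summing the previous inequality over $s$ and using disjointness together with the orthonormality of $u_1,\dots,u_r$,
\[
t\cdot\frac{\beta^2}{\alpha^2}<\sum_{s=1}^t\sum_{i\in I_s}\sum_{j=1}^ru_j(i)^2\le\sum_{i\in[k]}\sum_{j=1}^ru_j(i)^2=\sum_{j=1}^r||u_j||^2=r,
\]
hence $t<r\alpha^2/\beta^2\le(r\alpha/\beta)^2$. I do not anticipate a genuine obstacle; the only mildly non-routine point is spotting the right object to control, namely the projection $A^{(r)}v$ of $Av$ onto the top-$r$ left singular subspace and its coordinate mass on $I$, after which the argument is just a Cauchy--Schwarz estimate plus a disjointness sum. (Note this actually proves the slightly stronger bound $r\alpha^2/\beta^2$, which in particular gives the stated $(r\alpha/\beta)^2$ since $r\ge1$.)
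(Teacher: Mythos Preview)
Your proof is correct, and it takes a genuinely different route from the paper's.

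Both arguments begin identically: split $A=A^{(r)}+(A-A^{(r)})$ and deduce that every heavy subset $I$ satisfies $\|A^{(r)}_I v\|>\beta$ for some unit $v$. They diverge in how they aggregate over the disjoint subsets. The paper expands $v$ in the \emph{right} singular basis, $v=\sum_i a_i v_i$, to get $\beta\le\sum_{i=1}^r\|A^{(r)}_I v_i\|$; then for disjoint $I_1,\dots,I_t$ it uses orthogonality of the vectors $\{A^{(r)}_{I_j}v_i\}_j$ and the QM--AM inequality to obtain $\sqrt{t}\,\beta\le\sum_{i=1}^r\|A^{(r)}v_i\|\le r\alpha$, hence $t\le(r\alpha/\beta)^2$. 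You instead work with the \emph{left} singular vectors: writing $A^{(r)}v=\sum_j c_j u_j$ and applying Cauchy--Schwarz coordinatewise shows that the ``mass'' $\sum_{i\in I}\sum_{j=1}^r u_j(i)^2$ exceeds $\beta^2/\alpha^2$; this quantity is additive over disjoint row sets and totals at most $r$, giving $t<r\alpha^2/\beta^2$.

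Your argument is a bit more direct (one Cauchy--Schwarz instead of a triangle inequality, QM--AM, and an $\ell_1$ bound) and, as you note, yields the sharper bound $r\alpha^2/\beta^2$, a factor $r$ better than the paper's $(r\alpha/\beta)^2$. For the paper's downstream application this extra factor does not matter, but it is a genuine improvement of the lemma itself.
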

\begin{proof}
Let $\gnrmtr  = \sum_{i=1}^{\min\{k,m\}} \sigma_i(\gnrmtr )u_i {v_i}^\intercal $ be the SVD decomposition of $\gnrmtr $. Recall that $A^{(r)}= \sum_{i=1}^{r} \sigma_i(\gnrmtr )u_i {v_i}^\intercal $ and let $B=\gnrmtr -A^{(r)}$.\\
Note that $||A^{(r)}|| = \sigma_{1}(\gnrmtr )\leq \alpha$ and the matrix $A^{(r)}$ has rank $r$ i.e. $\sigma_{r+1}(A^{(r)}) =0$. And  $||B|| =\sigma_{1}(B) = \sigma_{r+1}(\gnrmtr )\leq \beta$.\\
To prove the lemma we upper bound the number of disjoint subsets $I\subset [k]$ such that $||\gnrmtr _{I}||> 2\beta$. Let $I\subset [k]$ be one such subset such that $||\gnrmtr _{I}||> 2\beta$. Then 
\begin{align*}
 ||\gnrmtr _{I}|| \overset{\text{(a)}}\leq ||A^{(r)}_{I}|| +||B_{I}||  \overset{\text{(b)}}\leq ||A^{(r)}_{I}|| +||B|| \leq ||A^{(r)}_{I}||+\beta,  
\end{align*}
where inequality (a) follows from the triangle inequality and (b) follows from Theorem~\ref{th:interlace}.
Hence,
\[
||A^{(r)}_{I}||\geq \beta.
\]
Note that since row span of $A^{(r)}$, and hence $A^{(r)}_{I}$ is $span\{v_1,v_2,...,v_r\}$, therefore there exists a unit vector, $ v = \sum_{i=1}^r a_i v_i$ (here $\sum_{i=1}^r a_i^2 =1 $, since $v$ is a unit vector), such that $||A^{(r)}_{I} v || \geq \beta $. Therefore, 
\begin{align}
    \beta \leq ||A^{(r)}_{I} v || = ||A^{(r)}_{I} \sum_{i=1}^r a_i v_i || \leq \sum_{i=1}^r |a_i|\,||A^{(r)}_{I}  v_i ||\leq  \sum_{i=1}^r ||A^{(r)}_{I}  v_i || .\label{eq:ex11}
\end{align}

Let $I_1,I_2,....,I_t$ be the $t$ disjoint blocks such that $||\gnrmtr _{I_j}||> 2\beta,\ \forall j \in [t]$. Next,  
\begin{align*}
    \sum_{i=1}^r ||A^{(r)}v_i||&\geq \sum_{i=1}^r ||A^{(r)}_{\cup _{j=1}^t  I_j}v_i|| = \sum_{i=1}^r ||\sum_{j=1}^t  A^{(r)}_{I_j}  v_i ||\\
    & \overset{\text{(a)}}= \sum_{i=1}^r\sqrt{\sum_{j=1}^t  || A^{(r)}_{I_j}  v_i ||^2}  \overset{\text{(b)}}\geq \sum_{i=1}^r \frac{\sum_{j=1}^t  || A^{(r)}_{I_j}  v_i ||}{\sqrt{t}}  \overset{\text{(c)}}\geq {\sqrt{t}}\beta .
\end{align*}
Here equality (a) follows since $A^{(r)}_{I_j} v_i$'s for $j\in [t]$ and fixed $i$ are orthogonal. Inequality (b) follows from the AM-GM inequality (c) follows from \eqref{eq:ex11}.\\
We also have $ \sum_{i=1}^r ||A^{(r)}v_i|| =\sum_{i=1}^r \sigma_i(\gnrmtr ) \leq r\sigma_1(\gnrmtr ) \leq r\alpha $. Therefore we get,
$t\leq \Big(\frac{r\alpha}{\beta}\Big)^2$.
\end{proof}

\subsection{Proof of Lemma~\ref{lem:bad part}}
\begin{lemma*}
Let $\gnrmtr = B+C$ and $\gnrmtr  = \sum_{i} \sigma_i(\gnrmtr )u_i {v_i}^\intercal $ be the SVD decomposition of $\gnrmtr $. And $\sigma_{r+1}(B) \leq  \beta $ and $ ||C v_i||\leq 2\beta $ for $i\in [2r]$. Then
$\sigma_{2r}(\gnrmtr ) \leq 4\beta $.
\end{lemma*}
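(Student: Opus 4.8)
The plan is to work entirely inside the span $T := \mathrm{span}\{v_1,\dots,v_{2r}\}$ of the top $2r$ right singular vectors of $\gnrmtr$. The point of restricting to $T$ is that every vector there is automatically "large" for $\gnrmtr$: if $v=\sum_{i=1}^{2r}a_iv_i$ has $\sum_i a_i^2=1$, then $\gnrmtr v=\sum_i a_i\sigma_i(\gnrmtr)u_i$ and, since the $u_i$ are orthonormal and $\sigma_1(\gnrmtr)\ge\cdots\ge\sigma_{2r}(\gnrmtr)$, we get $||\gnrmtr v||^2=\sum_i a_i^2\sigma_i(\gnrmtr)^2\ge\sigma_{2r}(\gnrmtr)^2$. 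Hence it suffices to exhibit a single unit vector $v\in T$ on which both $B$ and $C$ are small, because then $\sigma_{2r}(\gnrmtr)\le||\gnrmtr v||=||Bv+Cv||\le||Bv||+||Cv||$.

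The bound for $B$ is immediate: by the Courant--Fischer characterization, $\sigma_{r+1}(B)\le\beta$ means the orthogonal complement $V_B^\perp$ of the span of $B$'s top $r$ right singular vectors has dimension $m-r$ and satisfies $||Bv||\le\sigma_{r+1}(B)\le\beta$ for every unit $v\in V_B^\perp$. The bound for $C$ is the crux. The hypothesis only supplies the $2r$ pointwise estimates $||Cv_i||\le2\beta$; a triangle inequality would give only the useless $||Cv||\le2\beta\sum_i|a_i|\le2\sqrt{2r}\,\beta$, losing a $\sqrt r$ factor. Instead I would pass to the restriction $C|_T$ of $C$ to $T$ and average in $\ell_2$: its squared Frobenius norm is $\sum_{i=1}^{2r}||Cv_i||^2\le2r\,(2\beta)^2=8r\beta^2$, so $r\cdot\sigma_r(C|_T)^2\le\sum_{i=1}^{r}\sigma_i(C|_T)^2\le8r\beta^2$, i.e.\ $\sigma_r(C|_T)\le2\sqrt2\,\beta$. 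Applying Courant--Fischer to $C|_T$ (a linear map on the $2r$-dimensional space $T$) with index $r$ then yields a subspace $W\subseteq T$ with $\dim W=2r-r+1=r+1$ on which $||Cw||\le\sigma_r(C|_T)\le2\sqrt2\,\beta$ for every unit $w$.

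Finally I would intersect: since $\dim W=r+1$ and $\dim V_B^\perp=m-r$ inside $\mathbb{R}^m$, we have $\dim(W\cap V_B^\perp)\ge(r+1)+(m-r)-m=1$, so there is a unit vector $v\in W\cap V_B^\perp\subseteq T$. For this $v$, $||Bv||\le\beta$, $||Cv||\le2\sqrt2\,\beta$, and $||\gnrmtr v||\ge\sigma_{2r}(\gnrmtr)$, whence $\sigma_{2r}(\gnrmtr)\le\beta+2\sqrt2\,\beta=(1+2\sqrt2)\beta<4\beta$, as claimed. Alternatively, one can bypass the intersection step by applying Weyl's inequality (Theorem~\ref{th:weyl}) to the restricted operators $\gnrmtr|_T=B|_T+C|_T$: here $\sigma_{2r}(\gnrmtr|_T)=\sigma_{2r}(\gnrmtr)$ (the columns $\sigma_i(\gnrmtr)u_i$ of $\gnrmtr|_T$ in the basis $\{v_i\}$ are orthogonal), so $\sigma_{2r}(\gnrmtr)\le\sigma_{r+1}(B|_T)+\sigma_r(C|_T)\le\sigma_{r+1}(B)+2\sqrt2\,\beta\le(1+2\sqrt2)\beta$, using Theorem~\ref{th:interlace} (restriction to a subspace does not increase singular values) for $\sigma_{r+1}(B|_T)\le\sigma_{r+1}(B)$.

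The only genuinely delicate point is the $\ell_2$-averaging step: one must use exactly the $r$-th singular value of $C|_T$ -- not the $(r+1)$-th -- so that the resulting subspace $W$ has dimension $r+1$ and therefore still meets the codimension-$r$ subspace $V_B^\perp$, and one must check that the constant $1+2\sqrt2$ that emerges really is below $4$. Everything else is routine manipulation of the singular-value facts already recorded in the preliminaries (Courant--Fischer, interlacing, Weyl).
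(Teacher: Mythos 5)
Your proof is correct, and the core idea is the same as the paper's: restrict attention to $T=\mathrm{span}\{v_1,\dots,v_{2r}\}$, observe that there $\gnrmtr$ is always "large" (so it suffices to control $\sigma_{r+1}$ of the $B$-part and $\sigma_r$ of the $C$-part of the restriction), and combine. Where you diverge is in the two technical steps, and in both cases your route is cleaner. For bounding $\sigma_r(C|_T)$, the paper forms the Gram matrix $\hat C\hat C^\intercal=\sum_{i=1}^{2r}(Cv_i)(Cv_i)^\intercal$ and iterates Lidskii's theorem to bound its top-$r$ eigenvalue sum; your Frobenius/trace argument ($\sum_{i\le r}\sigma_i(C|_T)^2\le \|C|_T\|_F^2=\sum_{i=1}^{2r}\|Cv_i\|^2\le 8r\beta^2$) reaches the identical estimate $\sigma_r(C|_T)\le 2\sqrt2\,\beta$ without invoking Lidskii at all. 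For the final combination, the paper applies Weyl's inequality to $\hat B+\hat C$ together with an auxiliary projection lemma showing $\sigma_i(\hat B)\le\sigma_i(B)$; your primary argument instead produces a single witness vector $v\in W\cap V_B^\perp\subseteq T$ by dimension count, which makes the logic entirely self-contained (essentially unrolling Weyl via Courant--Fischer), while your "alternative" route via $\sigma_{2r}(\gnrmtr|_T)\le\sigma_{r+1}(B|_T)+\sigma_r(C|_T)$ is the paper's argument in disguise, again without needing the separate $\sigma_i(\hat B)\le\sigma_i(B)$ lemma because interlacing for restrictions is already in the preliminaries. Both approaches give the same constant $(1+2\sqrt2)\beta<4\beta$, and your intersection step is dimensionally sound: $\dim W=r+1$ and $\dim V_B^\perp=m-r$ force a nontrivial intersection inside $\RR^m$. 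In short: same decomposition, but you replace Lidskii with a trace bound and (optionally) Weyl with an explicit vector, which is a genuine simplification.
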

\begin{proof}
Let $\gnrmtr^{(2r)} = \sum_{i=1}^{2r} \sigma_i(\gnrmtr )u_i {v_i}^\intercal $ be rank $2r$ truncated SVD of $\gnrmtr$, then
\begin{align}
  \sigma_{i}(\gnrmtr^{(2r)} )=  \sigma_{i}(\gnrmtr ),\ \forall\ i \leq 2r \label{eq:asd}
\end{align}
and
\begin{align}
    \gnrmtr^{(2r)} = \sum_{i=1}^{2r} \gnrmtr v_i {v_i}^\intercal = \sum_{i=1}^{2r} (B+C)v_i {v_i}^\intercal  =\hat{B}+\hat C.\label{eq:asd1}
\end{align}
Here $\hat B = \sum_{i=1}^{2r} Bv_i {v_i}^\intercal $ and $\hat C = \sum_{i=1}^{2r} Cv_i {v_i}^\intercal $.

Since $v_i$'s are orthogonal unit vector, $\sum_{i=1}^{2r} v_i {v_i}^\intercal $ is a projection matrix for subspace $S = span\{v_1,v_2,..,v_{2r}\}$. And for any Projection matrix $P$ we have, $||Pu|| \leq ||u||$. Therefore,
\begin{align}\label{eq:qqq}
    ||\hat B^\intercal  u|| = || (\sum_{i=1}^{2r} v_i {v_i}^\intercal ) B^\intercal u || \leq ||B^\intercal u||.
\end{align}
Next, using Courant-Fischer theorem, $\forall \ i \leq \min\{k,m\}$, there exists a subspace $S_i^*$ with dimension $\ dim(S_i^*) = i$, such that
\begin{align*}
 \sigma_{i}(\hat B^\intercal )
 &=
 \min_{u\in S_i^*, ||u|| =1} ||\hat B^\intercal  u|| \\
 &\overset{\text{(a)}}\leq
 \min_{u\in S_i^*, ||u|| =1} || B^\intercal  u||\\
&\leq
 \max_{S:dim(S) = i }\ \min_{u\in S, ||u|| =1}|| B^\intercal  u||\\ 
 &\overset{\text{(b)}}=
 \sigma_{i}(B^\intercal ) =  \sigma_{i}(B) ,
\end{align*}
where inequality (a) uses \eqref{eq:qqq} and (b) again from Courant-Fischer theorem. 
Therefore, 
\begin{align}
  \sigma_{i}(B)\geq  \sigma_{i}(\hat B),\ \forall\ i \leq {\min\{k,m\}}.\label{eq:asd2}
\end{align}
Using \eqref{eq:asd}, \eqref{eq:asd1}, Weyl's inequality \ref{th:weyl} and \eqref{eq:asd2}:
\begin{align}
  \sigma_{2r}(\gnrmtr ) = \sigma_{2r}(\gnrmtr^{(2r)} ) \le \sigma_{r+1}(\hat B)+\sigma_{r}(\hat C)
  \le \sigma_{r+1}(B)+\sigma_{r}(\hat C) 
  \le \beta+\sigma_{r}(\hat C).\label{eq:qwe}
\end{align}
Now 
\begin{align*}
    \hat{C} \hat{C}^\intercal  = \sum_{i=1}^{2r} (Cv_i)(Cv_i)^\intercal =  \sum_{i=1}^{2r} \hat{u_j}\hat{u_j}^\intercal . 
\end{align*}
Here $\hat{u_j}= Cv_i $, hence $||\hat{u_j}||\leq 2\beta $. Note that $\hat{C} \hat{C}^\intercal $ and $\hat{u_j}\hat{u_j}^\intercal $'s are Hermitian matrices. Let $\lambda_i(.)$ denotes the $i^{th}$ largest eigenvalue of the matrix. Then 
\[ \lambda_i(\hat C \hat{C}^\intercal ) =\sigma_i^2(\hat{C}).   \]
For rank-1 matrices $\hat{u_j}\hat{u_j}^\intercal $,
\[
\lambda_1(\hat{u_j}\hat{u_j}^\intercal ) = ||\hat{u_j}||^2 \leq 4\beta^2 \quad \text{and}\quad  \lambda_i(\hat{u_j}\hat{u_j}^\intercal ) = 0,\ \forall j\in [2r],\quad i\geq 2. 
\]
Then using Lidskii's theorem~\cite{bhatia2013matrix}, leads to
\begin{align*}
   \sum_{i=1}^{r}\lambda_i\big(\sum_{i=1}^{2r} \hat{u_j}\hat{u_j}^\intercal \big) &\leq  \sum_{i=1}^{r}\lambda_i\big(\sum_{i=1}^{2r-1} \hat{u_j}\hat{u_j}^\intercal \big)+ \sum_i^{r}\lambda_i\big( \hat{u_{2r}}\hat{u_{2r}}^\intercal \big)\\
   &\leq  \sum_{i=1}^{r}\lambda_i\big(\sum_{i=1}^{2r-1} \hat{u_j}\hat{u_j}^\intercal \big)+4\beta^2.
\end{align*}
By repeated application of Lidskii's theorem, we get
\begin{align*}
   \sum_{i=1}^{r}\lambda_i\big(\sum_{i=1}^{2r} \hat{u_j}\hat{u_j}^\intercal \big) \leq  8r\beta^2.
\end{align*}
Since $\lambda_i$'s are decreasing, it follows
\begin{align}
   r\lambda_r\big(\sum_{i=1}^{2r} \hat{u_j}\hat{u_j}^\intercal \big) \leq  8r\beta^2\ \Rightarrow \lambda_r(\hat C \hat{C}^\intercal ) =\sigma_r^2(\hat{C}) \leq 8\beta^2.\label{eq:qwe2}
\end{align}
Combining \eqref{eq:qwe} and \eqref{eq:qwe2} we get the statement of the lemma.
\end{proof}
\subsection{Proof of Lemma~\ref{lem:spll}}
\begin{lemma*}
For any rank-$r$ matrix $\gnrmtr\in\Rkm$ and weight vectors $w^\front$ and $w^\back$ with non-negative entries
\begin{align*}
    ||D^{\frac12}(w^\front)\cdot\gnrmtr\cdot D^{\frac12}(w^\back)|| _1
    \leq
    \textstyle\sqrt{r  (\sum_i w^\front(i)) (\sum_j w^\back(j)) }\cdot \norm{}{\gnrmtr}.
\end{align*}
\end{lemma*}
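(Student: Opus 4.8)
The plan is to bound the entry-wise $L_1$ norm of $D^{\frac12}(w^\front)\cdot\gnrmtr\cdot D^{\frac12}(w^\back)$ directly by its Frobenius norm via a single application of Cauchy--Schwarz, and then use the rank constraint $\rank(\gnrmtr)\le r$ to trade the Frobenius norm for the spectral norm. Concretely, writing $B := D^{\frac12}(w^\front)\cdot\gnrmtr\cdot D^{\frac12}(w^\back)$, we have $B_{ij} = \sqrt{w^\front(i)\,w^\back(j)}\,\gnrmtr_{ij}$, so
\begin{align*}
\norm1{B} = \textstyle\sum_{i,j}\sqrt{w^\front(i)\,w^\back(j)}\,|\gnrmtr_{ij}|
&\le \sqrt{\textstyle\sum_{i,j} w^\front(i)\,w^\back(j)}\cdot\sqrt{\textstyle\sum_{i,j}\gnrmtr_{ij}^2}\\
&= \sqrt{\Paren{\textstyle\sum_i w^\front(i)}\Paren{\textstyle\sum_j w^\back(j)}}\cdot\norm F{\gnrmtr},
\end{align*}
where the inequality is Cauchy--Schwarz applied over the index pairs $(i,j)$ and the last equality factors the double sum. (Note $B_{ij}$ and $\gnrmtr_{ij}$ may have opposite signs, but this only helps, since we pass to absolute values immediately.)

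The remaining step is to bound $\norm F{\gnrmtr}$ by $\norm{}{\gnrmtr}$ using that $\gnrmtr$ has rank at most $r$: since at most $r$ singular values of $\gnrmtr$ are nonzero,
\[
\norm F{\gnrmtr}^2 = \textstyle\sum_{t=1}^{r}\sigma_t(\gnrmtr)^2 \le r\,\sigma_1(\gnrmtr)^2 = r\,\norm{}{\gnrmtr}^2,
\]
so $\norm F{\gnrmtr}\le\sqrt{r}\,\norm{}{\gnrmtr}$. Substituting this into the previous display yields the stated inequality.

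There is no genuine obstacle here, but one point deserves care and is, I expect, the only place one could go wrong: the natural-looking alternative of expanding $\gnrmtr=\sum_{t=1}^r\sigma_t u_t v_t^\intercal$, bounding the $L_1$ norm of each rank-one piece $(D^{\frac12}(w^\front)u_t)(D^{\frac12}(w^\back)v_t)^\intercal$ by $\sqrt{(\sum_i w^\front(i))(\sum_j w^\back(j))}$ using Cauchy--Schwarz on $u_t$ and $v_t$ separately, and then summing over $t$, only produces the factor $\sum_{t=1}^r\sigma_t\le r\,\norm{}{\gnrmtr}$, which is weaker than the desired $\sqrt r\,\norm{}{\gnrmtr}$ by a $\sqrt r$ factor. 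Passing through the Frobenius norm globally, rather than term by term, is exactly what recovers the sharp $\sqrt r$ dependence.
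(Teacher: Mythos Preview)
Your proof is correct and essentially the same as the paper's: both apply Cauchy--Schwarz to bound the weighted $L_1$ norm by $\sqrt{(\sum_i w^\front(i))(\sum_j w^\back(j))}\cdot\norm F{\gnrmtr}$, then use $\norm F{\gnrmtr}\le\sqrt{r}\,\norm{}{\gnrmtr}$. The only cosmetic difference is that you apply Cauchy--Schwarz once over all index pairs $(i,j)$, whereas the paper applies it first over $j$ and then over $i$, arriving at the identical intermediate bound.
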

\begin{proof}
\begin{align*}
    \norm1{\gnrmtr} 
    &= \textsum\limits_{i}\textstyle{\sum}_{j}\sqrt{w^\front(i)\cdot w^\back(j)}|\gnrmtr_{ij}| \\ 
    &\overset{\text{(a)}}\leq  \textstyle{\sum}_{i} \sqrt{w^\front(i)} \sqrt{(\textstyle{\sum}_{j} \gnrmtr^2_{ij} )(\textstyle{\sum}_{j}w^\back(j))} \\
    &  =    \sqrt{\textstyle{\sum}_{j}  w^\back(j)}\textstyle{\sum}_{i}|\sqrt{w^\front(i)}| \sqrt{(\textstyle{\sum}_{j} \gnrmtr^2_{ij} )}\\
    & \overset{\text{(b)}}\leq
    \sqrt{\textstyle{\sum}_{j}  w^\back(j)} \sqrt{(\textstyle{\sum}_{i}  w^\front(i) )(\textstyle{\sum}_{i} \textstyle{\sum}_{j} \gnrmtr^2_{ij} )},\\
    & \overset{\text{(c)}}=  \sqrt{\textstyle{\sum}_{j}  w^\back(j)} \sqrt{(\textstyle{\sum}_{i}  w^\front(i) )} \norm F {\gnrmtr_{I\times J}} \\
    & \overset{\text{(d)}}\le \sqrt{r(\textstyle\sum_{j}  w^\back(j))\paren{{\textstyle\sum}_{i\in I}  w^\front(i)}}\cdot \norm{}{\gnrmtr_{I\times J}},
\end{align*}
where (a) and (b) follow from the Cauchy-Schwarz Inequality, (c) from the definition of the Frobenius norm 
\[
\norm F{\gnrmtr}:= \sqrt{\textstyle{\sum}_{i,j} \gnrmtr^2_{ij}},
\]
and (d) as 
\begin{equation*}
\norm F\gnrmtr\le \sqrt{\rank(\gnrmtr)}||\gnrmtr||. \hfill \qedhere
\end{equation*}
\end{proof}

\subsection{Proof of Lemma~\ref{le:subnorm}}
\begin{lemma*}
Let $\gnrmtr\in \Rkm $ and $I_1,I_2,I_3,...,I_t$ be $t$ disjoint subsets of $[k]$ such that $\cup_{j=1}^t  I_j = [k]$. Then
$||\gnrmtr || \leq \sqrt{\sum_{j=1}^t ||\gnrmtr _{I_j}||^2}$.
\end{lemma*}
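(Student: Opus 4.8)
The plan is to reduce the statement to a coordinate-wise decomposition of the image vector $\gnrmtr v$. Fix an arbitrary unit vector $v\in\reals^m$ and consider $\gnrmtr v\in\reals^k$. The first thing I would observe is that for each $j$, the matrix $\gnrmtr_{I_j}$ — obtained from $\gnrmtr$ by zeroing out every row whose index lies outside $I_j$ — satisfies $(\gnrmtr_{I_j}v)(i)=(\gnrmtr v)(i)$ for $i\in I_j$ and $(\gnrmtr_{I_j}v)(i)=0$ otherwise, simply because zeroing a row of $\gnrmtr$ zeroes the corresponding entry of $\gnrmtr v$. Hence $\norm{}{\gnrmtr_{I_j}v}^2=\sum_{i\in I_j}(\gnrmtr v)(i)^2$. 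Since the sets $I_1\upto I_t$ are disjoint and cover $[k]$, summing over $j$ gives the exact identity $\norm{}{\gnrmtr v}^2=\sum_{j=1}^t\norm{}{\gnrmtr_{I_j}v}^2$; that is, the row partition $\cup_{j=1}^t I_j=[k]$ induces an orthogonal decomposition of $\gnrmtr v$.

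Next I would bound each summand by the definition of the spectral norm: $\norm{}{\gnrmtr_{I_j}v}\le\norm{}{\gnrmtr_{I_j}}\cdot\norm{}{v}=\norm{}{\gnrmtr_{I_j}}$, so that $\norm{}{\gnrmtr v}^2\le\sum_{j=1}^t\norm{}{\gnrmtr_{I_j}}^2$. As $v$ ranges over all unit vectors and $\norm{}{\gnrmtr}=\max_{\norm{}{v}=1}\norm{}{\gnrmtr v}$, taking the supremum over such $v$ and then a square root yields $\norm{}{\gnrmtr}\le\sqrt{\sum_{j=1}^t\norm{}{\gnrmtr_{I_j}}^2}$, which is the claim.

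There is no genuine obstacle here; the only point requiring (minor) care is the opening observation, namely that restricting the rows of $\gnrmtr$ to the index set $I_j$ acts exactly as restricting the entries of the image vector $\gnrmtr v$ to $I_j$, so that the disjoint cover of $[k]$ by the $I_j$ produces a Pythagorean decomposition of $\norm{}{\gnrmtr v}^2$. Once that is in place, the inequality follows by applying the definition of $\norm{}{\gnrmtr_{I_j}}$ term by term and summing.
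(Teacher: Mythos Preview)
Your proof is correct and essentially identical to the paper's own argument: both fix a unit vector $v$, observe that the vectors $\gnrmtr_{I_j}v$ are mutually orthogonal (the paper states this directly, you justify it via disjoint supports), apply Pythagoras to get $\norm{}{\gnrmtr v}^2=\sum_j\norm{}{\gnrmtr_{I_j}v}^2$, bound each term by $\norm{}{\gnrmtr_{I_j}}^2$, and take the supremum over $v$. Your write-up is in fact slightly more explicit about why the orthogonality holds, but there is no substantive difference in approach.
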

\begin{proof} Let $v\in \reals^m$ be a  unit vector. Then, 
\begin{equation*}
    \gnrmtr  v = \sum_{j=1}^t  \gnrmtr _{I_j} v = \sum_{j=1}^t  w_j .
\end{equation*}
Here $w_j = \gnrmtr _{I_j} v $. Since $I_j$'s are disjoint, $w_j$'s are orthogonal. 
\begin{equation*}
    ||\gnrmtr  v|| = \sqrt{\sum_{j=1}^t  ||w_j||^2}\leq \sqrt{\sum_{j=1}^t  \max_{||v^*||=1} ||\gnrmtr _{I_j} v^*||^2} = \sqrt{\sum_{j=1}^t   ||\gnrmtr _{I_j} ||^2}.
\end{equation*}
Noting that the above bound holds for any unit vector $v\in \reals^m$ completes the proof.
\end{proof}

\end{document}